\DeclareMathOperator{\Var}{Var}
\DeclareMathOperator{\sign}{sign}
\DeclareMathOperator{\Lap}{Lap}
\DeclareMathOperator{\Clip}{Clip}
\newtheorem{thm}{Theorem}
\newtheorem{ass}{Assumption}
\newtheorem{defi}{Definition}
\newtheorem{lem}{Lemma}
\newtheorem{rmk}{Remark}
\newtheorem{prop}{Proposition}
\newcommand{\norm}[1]{\left\lVert#1\right\rVert}
\begin{document}

\title{On Theoretical Limits of Learning with Label Differential Privacy}

\author{Puning Zhao, Chuan Ma, Li Shen, Shaowei Wang, Rongfei Fan\thanks{Puning Zhao and Li Shen are with School of Cyber Science and Technology, Shenzhen Campus of Sun Yat-sen University, Shenzhen 518107, China. Email: \{pnzhao1,mathshenli\}@gmail.com. Chuan Ma is with College of Computer Science, Chongqing University. Email: chuan.ma@cqu.edu.cn. Shaowei Wang is with School of Artificial Intelligence, Guangzhou University. Email: wangsw@gzhu.edu.cn. Rongfei Fan is with the School of Cyberspace Science and Technology, Beijing Institute of Technology. Email: : fanrongfei@bit.edu.cn (Corresponding author: Li Shen.)}}


\maketitle

\begin{abstract}
	Label differential privacy (DP) is designed for learning problems involving private labels and public features. While various methods have been proposed for learning under label DP, the theoretical limits remain largely unexplored. In this paper, we investigate the fundamental limits of learning with label DP in both local and central models for both classification and regression tasks, characterized by minimax convergence rates. We establish lower bounds by converting each task into a multiple hypothesis testing problem and bounding the test error. Additionally, we develop algorithms that yield matching upper bounds. Our results demonstrate that under label local DP (LDP), the risk has a significantly faster convergence rate than that under full LDP, i.e. protecting both features and labels, indicating the advantages of relaxing the DP definition to focus solely on labels. In contrast, under the label central DP (CDP), the risk is only reduced by a constant factor compared to full DP, indicating that the relaxation of CDP only has limited benefits on the performance.
\end{abstract}

\section{Introduction}
With the growing privacy concerns of personal data, differential privacy (DP) \citep{dwork2006calibrating} has emerged as a standard approach for privacy protection and has been applied in several high-tech giants \citep{erlingsson2014rappor,cormode2018privacy,apple,ding2017collecting,wang2019answering}. However, in real-world learning tasks, DP mechanisms often lead to significant performance degradation \citep{abadi2016deep,tramer2021differentially,bu2022scalable,de2022unlocking,wei2022dpis}. To mitigate this issue, researchers have recently proposed to relax the definition of DP to allow for performance improvements. In many practical applications, features tend to be far less sensitive than labels \citep{mcmahan2013ad,mcsherry2009differentially}. For instance, in medical machine learning problems, patients may wish to keep their diagnoses private, while demographic and clinical information used for prediction is comparatively less sensitive \citep{bussone2020trust,cao2025machine}. Inspired by this, Label DP has gained attention \cite{ghazi2021deep}, which treats features as public while ensuring privacy protection for labels only.

Recent years have witnessed significant progress in learning with label DP \citep{ghazi2021deep,ghazi2022regression,malek2021antipodes,esfandiari2022label,tang2022machine,zhao2024enhancing,ma2024locally}. While empirical results have shown consistent improvement, there is still a lack of understanding on the information-theoretic limits. An intriguing question arises: By relaxing DP definitions to focus solely on labels, how much accuracy improvement is information-theoretically achievable? Understanding this question in depth would provide valuable insights into whether sacrificing feature privacy is a worthwhile trade-off for enhanced accuracy.

Similar to the full DP, depending on whether the data analyzer is trusted, label DP can be defined on either central or local models. Under the local model, the label of each sample is privatized before uploading to the data analyzer. Under the central model, the data analyzer gathers accurate labels, and only randomizes the trained model parameters. Under label local DP (LDP), the analysis of theoretical limit is relatively easier, since we can use many existing tools for minimax analysis developed for full LDP \citep{duchi2013local,duchi2018minimax}. Compared with the full LDP setting, the additional complexity of analyzing label LDP is that the bounds of information-theoretic quantities of mixtures of public (i.e. feature) and private (i.e. label) components have not been analyzed before. Under label central differential privacy (CDP), the analysis becomes significantly more challenging. Even for full CDP, the understanding of information-theoretic limits remains incomplete. Existing works have focused on analyzing the minimax error in stochastic optimization \citep{abadi2016deep,chaudhuri2011sample,bassily2014private,bassily2019private,bassily2021differentially,feldman2020private,wang2020differentially,das2023beyond,asi2021private,zhao2024differential,ghazi2024convex}, whose building block is called the packing method \cite{hardt2010geometry}. Nevertheless, the packing method is only applicable to parameter spaces with fixed dimensionality, and is therefore unable to characterize the complexity of nonparametric learning problems. Therefore, even when model weights reach their optimal values, the learner's risk may still fall short of optimality due to the limitations of model structures. Consequently, new analytical techniques are required to account for both optimization and approximation risks in the overall risk assessment. 
\begin{table}[t]
\begin{center}
		\small
		\begin{tabular}{c|c|c|c}
			\hline
			Task & Type & Local model & Central model\\
			\hline 
			\multirow{3}{*}{Classification} & Label & $\tilde{O}((N\epsilon^2)^{-\frac{\beta(\gamma+1)}{2\beta + d}})$& $\tilde{O}\left(N^{-\frac{\beta(\gamma+1)}{2\beta+d}}+(\epsilon N)^{-\frac{\beta(\gamma+1)}{\beta+d}}\right)$\\
			\cline{2-4}
			&Full &$O((N\epsilon^2)^{-\frac{\beta(\gamma+1)}{2\beta+2d}})$ \cite{berrett2019classification} & $\tilde{O}\left(N^{-\frac{\beta(\gamma+1)}{2\beta+d}}+(\epsilon N)^{-\frac{\beta(\gamma+1)}{\beta+d}}\right)$\\
			\cline{2-4}
			&Non-private & \multicolumn{2}{|c}{$O(N^{-\frac{\beta(\gamma+1)}{2\beta+d}})$ \cite{chaudhuri2014rates}}\\
			\hline
			\multirow{3}{*}{Regression} &Label & $\tilde{O}\left((N\epsilon^2)^{-\frac{2\beta}{d+2\beta}} \right)$ & $O\left(N^{-\frac{2\beta}{2\beta+d}}+(\epsilon N)^{-\frac{2\beta}{d+\beta}}\right)$\\
			\cline{2-4}
			&Full &$O((N\epsilon^2)^{-\frac{\beta}{\beta+d}})$ \cite{berrett2021strongly}&$O\left(N^{-\frac{2\beta}{2\beta+d}}+(\epsilon N)^{-\frac{2\beta}{d+\beta}}\right)$\\
			\cline{2-4}
			Bounded noise & Non-private & \multicolumn{2}{|c}{$O(N^{-\frac{2\beta}{2\beta+d}})$ \cite{tsybakov2009introduction,zhao2019minimax}} \\
			\hline
			\multirow{3}{*}{Regression} & Label &$O\left((N\epsilon^2)^{-\frac{2\beta(p-1)}{2p\beta+d(p-1)}}\vee N^{-\frac{2\beta}{2\beta+d}}\right)$ &$O\left(N^{-\frac{2\beta}{2\beta+d}}+(\epsilon N)^{-\frac{2\beta(p-1)}{p\beta+d(p-1)}}\right)$\\
			\cline{2-4}
			& Full &$O((N\epsilon^2)^{-\frac{\beta(p-1)}{p\beta+d(p-1)}})$ \cite{berrett2021strongly} &$O\left(N^{-\frac{2\beta}{2\beta+d}}+(\epsilon N)^{-\frac{2\beta(p-1)}{p\beta+d(p-1)}}\right)$ \\
			\cline{2-4}
			Unbounded noise &Non-private & \multicolumn{2}{|c}{$O(N^{-\frac{2\beta}{2\beta+d}})$ \cite{tsybakov2009introduction,zhao2019minimax}} \\
			\hline
		\end{tabular}
	\end{center}
	\caption{Minimax rate of convergence under label differential privacy for $d$ dimensional feature spaces. under the $\beta$-H{\"o}lder smoothness. For the classification problem, there is an additional $\gamma$-Tsybakov margin condition. For regression with unbounded noise, the label is assumed to have finite $p$-th order moments.}\label{tab}
\end{table}

In this paper, we investigate the theoretical limits of classification and regression problems with label DP under both central and local models. For each problem, we derive the information-theoretic minimax lower bound of the risk function over a wide class of distributions satisfying the $\beta$-H{\"o}lder smoothness and the $\gamma$-Tsybakov margin assumption \citep{tsybakov2009introduction} (see Assumption \ref{ass:multiclass} for details). The results are shown in Table \ref{tab}, in which we compare the convergence rates of risks between the local and central model, as well as label DP and full DP. We also show non-private rates. The comparison between these bounds reveals that under the local model, for various tasks, the minimax optimal convergence rates under label DP are significantly faster than that under full DP, indicating that the accuracy can be significantly improved by weakening the DP definition to only labels. On the contrary, under the central model, the minimax rates under label DP are the same as those under full DP. The reduction of risks by weakening the DP definition to only labels is only up to a constant factor. Intuitively, the local randomization of high-dimensional feature vectors introduces much more serious impact than under central model \cite{duchi2018minimax}. As a result, relaxing the DP definition to only labels provides a significantly greater accuracy gain in the local model than in the central model.

To construct lower bounds, we reformulate each task as a multiple hypothesis testing problem. Under the local model, we obtain a bound on the Kullback-Leibler (KL) divergence of joint distributions involving both private and public components, which is tighter than the existing bound for fully private variables. Furthermore, under the central model, to address the aforementioned limitations of prior analyses on stochastic optimization, we develop new lower bounds on the minimum testing error for each hypothesis pair, without imposing any parametric assumptions. This enables us to establish lower bounds for the overall risk across all models, rather than limiting our analysis to the optimization risk associated with a fixed model. Following the derivation of minimax lower bounds, we also propose algorithms and provide upper bounds. For the classification problem, we use a mechanism similar to Rappor \citep{erlingsson2014rappor} for the local model and an exponential mechanism for the central model. For the regression problem, we propose a nearest neighbor approach for the local model and a locally averaging technique with cube splitting for the central model. The upper bounds match the lower bounds, thereby demonstrating the tightness of our analysis.

The findings of this paper are summarized as follows.
\begin{itemize}
	\item \textbf{Comparison of label DP with full DP}. Under the local model, the risk of label LDP converges significantly faster than full LDP, indicating clear performance improvements of relaxation of DP to only labels. In contrast, under the central model, the convergence rate remains the same, indicating that the risk of label CDP is only reduced by a constant factor compared to full CDP.
	
	\item \textbf{Comparison of label DP with non-private rates.} Under the local model, compared with non-private rates, for classification and regression with bounded noise, the risk is enlarged by a factor of $\epsilon$, while the convergence rate with respect to $N$ remains the same. For regression with unbounded noise, privacy protection requires larger additional noise, resulting in a slower convergence rate. In contrast, under the central model, DP mechanisms introduce much smaller additional risk, which converges faster than the non-private rate. 
\end{itemize}

\section{Related Work}\label{sec:related}
\textbf{Label DP.} Under the local model, labels are randomized before training. For the classification problem, the simplest method is randomized response \citep{warner1965randomized}. An important improvement is proposed in \citep{ghazi2021deep}, called RRWithPrior, which incorporates prior distribution. \citep{malek2021antipodes} proposes ALIBI, which further improves randomized response by generating soft labels through Bayesian inference. \citep{zhao2024enhancing} proposed a vector approximation approach to significantly improve the accuracy of classification under the label LDP with a large number of classes. There are also several methods for regression under label LDP \citep{ghazi2022regression,badanidiyuru2023optimal}. Under the central model, \citep{esfandiari2022label} proposes a clustering approach. \citep{malek2021antipodes} applies private aggregation of teacher ensembles (PATE) \citep{papernot2017semi} into label CDP setting. This approach is further improved in \citep{tang2022machine}.

\textbf{Minimax analysis for public data.} Minimax theory provides a rigorous framework for the best possible performance of an algorithm given some assumptions. Classical methods include Le Cam \citep{lecam1973convergence}, Fano \citep{verdu1994generalizing} and Assouad \citep{assouad1983deux}. Using these methods, minimax lower bounds have been widely established for both classification and regression problems \citep{yang1999minimax,yang1999minimax2,audibert2007fast,tsybakov2009introduction,chaudhuri2014rates,yang2015minimax,doring2018rate,gadat2016classification,zhao2019minimax,zhao2021minimax}. If the feature vector has bounded support, then the minimax rate of classification and regression are $O(N^{-\frac{\beta(\gamma+1)}{2\beta+d}})$ and $O(N^{-\frac{2\beta}{2\beta+d}})$, respectively.

\textbf{Minimax analysis for private data.} Under the local model, \citep{kasiviswanathan2011can} finds the relation between local DP and statistical query. \citep{duchi2013local} and \citep{duchi2018minimax} develop the variants of Le Cam, Fano, and Assouad's method under local DP. Lower bounds are then established for various statistical problems, such as mean estimation \citep{li2023robustness,feldman2020private,duchi2019lower,huang2021instance}, hypothesis testing \cite{gopi2020locally}, classification \citep{berrett2019classification}, and regression \citep{berrett2021strongly}. Under the central model, for pure DP, the standard approach is the packing method \citep{hardt2010geometry}, which is then used in hypothesis testing \citep{bun2019private}, mean estimation \citep{narayanan2023better,kamath2020private}, and learning of distributions \citep{kamath2019privately,alabi2023privately,arbas2023polynomial}. There are also several works on approximate DP, such as \citep{bun2014fingerprinting,kamath2022new}. 

To the best of our knowledge, our work is the first attempt to conduct a systematic study the minimax rates under label DP. We provide a detailed comparison of the convergence rates of nonparametric classification and regression under label DP with that under full DP as well as the non-private rates, under both central and local models.


\section{Preliminaries}\label{sec:prelim}

In this section, we provide some necessary definitions, background information, and notations.

\subsection{Label DP}
To begin with, we review the definition of DP. Suppose the dataset consists of $N$ samples $(\mathbf{x}_i, y_i)$, $i=1,\ldots, N$, in which $\mathbf{x}_i\in \mathcal{X}\subset \mathbb{R}^d$ is the feature vector, while $y_i\in \mathcal{Y}\subset \mathbb{R}$ is the label.

\begin{defi}\label{def:dp}
	(Differential Privacy (DP) \citep{dwork2006calibrating}) Let $\epsilon\geq 0$. A randomized function $\mathcal{A}:(\mathcal{X}, \mathcal{Y})^N\rightarrow \Theta$ is $\epsilon$-DP if for any two adjacent datasets $D, D'\in (\mathcal{X}, \mathcal{Y})^N$ and any $S\subseteq \Theta$,
	\begin{eqnarray}
		\text{P}(\mathcal{A}(D)\in S)\leq e^\epsilon \text{P}(\mathcal{A}(D')\in S),
		\label{eq:dp}
	\end{eqnarray}
	in which $D$ and $D'$ are adjacent if they differ only on a single sample, including both the feature vector and the label.
\end{defi}

In machine learning tasks, the output of $\mathcal{A}$ is the model parameters, while the input is the training dataset. Definition \ref{def:dp} requires that both features and labels are privatized. Consider that in some applications, the features may be much less sensitive, the notion of label CDP is defined as follows.

\begin{defi}\label{def:labeldp}
	(Label CDP \cite{ghazi2021deep}) A randomized function $\mathcal{A}$ is $\epsilon$-label CDP if for any two datasets $D$ and $D'$ that differ on the label of only one training sample and any $S\subseteq \Theta$, \eqref{eq:dp} holds.
\end{defi}

Compared with Definition \ref{def:dp}, Definition \ref{def:labeldp} only requires the output to be insensitive to the replacement of a label. Therefore label CDP is a weaker requirement. Correspondingly, the label LDP is defined as follows.

\begin{defi}\label{def:local}
	(Label LDP \cite{ghazi2021deep}) A randomized function $M:(\mathcal{X}, \mathcal{Y})\rightarrow \mathcal{Z}$ is $\epsilon$-label LDP if
	\begin{eqnarray}
		\sup_{y,y'\in \mathcal{Y}}\sup_{S\subseteq \mathcal{Z}}\ln \frac{\text{P}(M(\mathbf{x}, y)\in S)}{\text{P}(M(\mathbf{x}, y')\in S)}\leq \epsilon.
		\label{eq:local}
	\end{eqnarray}
\end{defi}

Definition \ref{def:local} requires that each label is privatized locally before running any machine learning algorithms. It is straightforward to show that label LDP ensures label CDP. To be more precise, we have the following proposition.
\begin{prop}\label{prop:convert}
	Let $\mathbf{z}_i=M(\mathbf{x}_i,y_i)$ for $i=1,\ldots,N$. If $\mathcal{A}$ is a function of $(\mathbf{x}_i, \mathbf{z}_i)$, $i=1,\ldots, N$, then $\mathcal{A}$ is $\epsilon$-label CDP.	
\end{prop}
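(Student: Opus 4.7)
The plan is to verify Definition \ref{def:labeldp} directly for $\mathcal{A}$ by reducing it to the post-processing property applied to the $\epsilon$-label LDP randomizer $M$ acting on the single sample whose label differs.

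First I would fix two datasets $D=\{(\mathbf{x}_i,y_i)\}_{i=1}^N$ and $D'=\{(\mathbf{x}_i,y_i')\}_{i=1}^N$ that are adjacent in the sense of Definition \ref{def:labeldp}: the feature vectors agree, and $y_i=y_i'$ for all indices $i$ except some single $j$ where the two labels may differ. Let $\mathbf{z}_i=M(\mathbf{x}_i,y_i)$ and $\mathbf{z}_i'=M(\mathbf{x}_i,y_i')$, where the $N$ calls to $M$ and the internal randomness of $\mathcal{A}$ are mutually independent. For $i\neq j$, the laws of $\mathbf{z}_i$ and $\mathbf{z}_i'$ coincide, since $(\mathbf{x}_i,y_i)=(\mathbf{x}_i,y_i')$; for $i=j$, the two laws are at $\epsilon$-DP distance by \eqref{eq:local} applied pointwise in the common feature $\mathbf{x}_j$. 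I would then express $\text{P}(\mathcal{A}(D)\in S)$ as an iterated integral over the $N$ marginal laws of the $\mathbf{z}_i$'s, with the integrand obtained by averaging $\mathbf{1}\{\mathcal{A}((\mathbf{x}_i,\mathbf{z}_i)_{i=1}^N)\in S\}$ over $\mathcal{A}$'s own randomness. Only the $j$-th factor changes when passing from $D$ to $D'$; replacing it via \eqref{eq:local} inflates the integral by at most $e^\epsilon$, which is precisely \eqref{eq:dp} for $\mathcal{A}$.

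The argument is essentially a one-line invocation of post-processing, so there is no genuine obstacle. The only point worth mentioning explicitly is that $\mathbf{z}_j$ depends on the public feature $\mathbf{x}_j$ in addition to $y_j$; however, since $\mathbf{x}_j$ is held fixed across $D$ and $D'$, one can simply condition on it and invoke the bound in \eqref{eq:local}, which is stated uniformly in $\mathbf{x}$, without modification.
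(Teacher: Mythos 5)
Your proof is correct and is exactly the standard post-processing argument the paper has in mind (the paper itself omits a formal proof, calling the claim straightforward). The one subtlety worth flagging -- that $\mathbf{z}_j$ also depends on the public feature $\mathbf{x}_j$, which is held fixed across the two adjacent datasets so that \eqref{eq:local} applies directly -- is handled correctly in your argument.
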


\subsection{Risk of Classification and Regression}
In supervised learning problems, given $N$ samples $(\mathbf{X}_i, Y_i)$, $i=1,\ldots, N$ drawn from a common distribution, the task is to learn a function $g:\mathcal{X}\rightarrow \mathcal{Y}$, in which $\mathcal{X}\subseteq \mathbb{R}^d$ is the feature domain, while $\mathcal{Y}\subseteq \mathbb{R}$ is the value domain. For a loss function $l:\mathcal{Y}\times \mathcal{Y}\rightarrow \mathbb{R}$, the goal is to minimize the \emph{risk function}, which is defined as the expectation of loss function between the predicted value and the ground truth:
\begin{eqnarray}
	R=\mathbb{E}[l(\hat{Y}, Y)].
	\label{eq:R}
\end{eqnarray}
The minimum risk among all function $g$ is called Bayes risk, i.e.
$R^*=\min_{g}\mathbb{E}[l(g(\mathbf{X}), Y)]$.
In practice, the sample distribution is unknown, and we need to learn $g$ from samples. Therefore, the risk of any practical classifiers is larger than the Bayes risk. The gap $R-R^*$ is called the excess risk, which is the target variable that we hope to minimize. Now we discuss classification and regression problems separately.

\emph{1) Classification.} For classification problems, the size of $\mathcal{Y}$ is finite. For convenience, we denote $\mathcal{Y}=[K]$, in which $[K]:=\{1,\ldots, K\}$. In this paper, we use $0-1$ loss, i.e. $l(\hat{Y},Y)=\mathbf{1}(\hat{Y}\neq Y)$, then $R=\text{P}(\hat{Y}\neq Y)$. Define $K$ functions $\eta_1,\ldots, \eta_K$ as the conditional class probabilities:
\begin{eqnarray}
	\eta_k(\mathbf{x})=\text{P}(Y=k|\mathbf{X}=\mathbf{x}), k=1,\ldots, K.
\end{eqnarray}
Under this setting, the Bayes optimal classifier and the corresponding Bayes risk is
\begin{eqnarray}
	c^*(\mathbf{x}) &=& \underset{j\in [K]}{\arg\max}\eta_j(\mathbf{x}),\label{eq:cstar}\\
	R_{cls}^*&=&\text{P}(c^*(\mathbf{X})\neq Y).\label{eq:Rstar}
\end{eqnarray}
\emph{2) Regression.} Now we consider the case with $\mathcal{Y}$ having infinite size. We use $\ell_2$ loss in this paper, i.e. $l(\hat{Y}, Y)=(\hat{Y}-Y)^2$. Then the Bayes risk is
\begin{eqnarray}
	R_{reg}^*=\mathbb{E}[(Y-\eta(\mathbf{X}))^2].
\end{eqnarray}

The following proposition provides a bound of the excess risk for classification and regression problems.
\begin{prop}\label{prop:excess}
	For any classifier $c:\mathcal{X}\rightarrow [K]$, the excess risk of classification is bounded by
	\begin{eqnarray}
		R_{cls}-R_{cls}^*=\int (\eta^*(\mathbf{x}) - \mathbb{E}[\eta_{c(\mathbf{x})}(\mathbf{x})]) f(\mathbf{x})d\mathbf{x},
		\label{eq:excess}
	\end{eqnarray}	
	in which $\eta^*(\mathbf{x}):=\max_k \eta_k(\mathbf{x})$ is the maximum regression function over all classes, and $f(\mathbf{x})$ is the probability density function (pdf) of random feature vector $\mathbf{X}$.
	For any regression estimate $\hat{\eta}:\mathcal{X}\rightarrow \mathcal{Y}$, the excess risk of regression with $\hat{Y}=\hat{\eta}(\mathbf{X})$ is bounded by
	\begin{eqnarray}
		R_{reg}-R_{reg}^*=\mathbb{E}[(\hat{\eta}(\mathbf{X})-\eta(\mathbf{X}))^2].
	\end{eqnarray}
\end{prop}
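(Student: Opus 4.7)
The plan is to handle classification and regression separately, because each boils down to a standard conditioning argument once the right quantities are written out.

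For the classification part, I would start from $R_{cls}=\text{P}(c(\mathbf{X})\neq Y)$, condition on $\mathbf{X}$ and on the randomness of the (possibly randomized) classifier $c$, and then use the definition $\eta_k(\mathbf{x})=\text{P}(Y=k|\mathbf{X}=\mathbf{x})$. This gives $\text{P}(c(\mathbf{x})\neq Y\mid \mathbf{X}=\mathbf{x},c)=1-\eta_{c(\mathbf{x})}(\mathbf{x})$, so taking the outer expectation produces $R_{cls}=1-\int \mathbb{E}[\eta_{c(\mathbf{x})}(\mathbf{x})]f(\mathbf{x})d\mathbf{x}$. The same calculation applied to the Bayes classifier $c^*$ from \eqref{eq:cstar}, together with $\eta^*(\mathbf{x})=\eta_{c^*(\mathbf{x})}(\mathbf{x})=\max_k\eta_k(\mathbf{x})$, yields $R_{cls}^*=1-\int \eta^*(\mathbf{x})f(\mathbf{x})d\mathbf{x}$. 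Subtracting the two expressions immediately gives \eqref{eq:excess}.

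For the regression part, the standard bias--variance split does the job. I would add and subtract $\eta(\mathbf{X})$ inside the square: $(\hat{\eta}(\mathbf{X})-Y)^2=(\hat{\eta}(\mathbf{X})-\eta(\mathbf{X}))^2+2(\hat{\eta}(\mathbf{X})-\eta(\mathbf{X}))(\eta(\mathbf{X})-Y)+(\eta(\mathbf{X})-Y)^2$ and take expectations. The cross term vanishes after conditioning on $\mathbf{X}$ and on the estimator $\hat{\eta}$ (which is a function of the training data, independent of the fresh test $Y$): indeed $\mathbb{E}[\eta(\mathbf{X})-Y\mid \mathbf{X},\hat{\eta}]=0$ by the definition $\eta(\mathbf{x})=\mathbb{E}[Y\mid \mathbf{X}=\mathbf{x}]$. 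The last term equals $R_{reg}^*$, so subtracting leaves $R_{reg}-R_{reg}^*=\mathbb{E}[(\hat{\eta}(\mathbf{X})-\eta(\mathbf{X}))^2]$.

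Neither step is an obstacle in the usual statistical-learning setting; the only mild subtlety worth stating explicitly is the independence structure used to kill the cross term and to pull $c$ out of the conditional expectation. Since the DP mechanism randomizes only over the training data (and possibly internal coins), conditioning on the test feature $\mathbf{X}$ and on the learned classifier/estimator leaves $Y$ distributed according to $\eta(\mathbf{x})$, which is all that is needed. I would state this independence assumption once up front and then the two identities drop out with no further machinery.
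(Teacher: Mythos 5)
Your proposal is correct and follows essentially the same route as the paper: conditioning on $\mathbf{X}$ to express both $R_{cls}$ and $R_{cls}^*$ via $\eta_{c(\mathbf{x})}$ and $\eta^*$, and the standard bias--variance decomposition with a vanishing cross term for regression. Your explicit remark about the independence structure (the learned classifier/estimator being independent of the fresh test label given $\mathbf{X}$) is the same assumption the paper makes implicitly when it takes the expectation over $\mathbf{X}$ and $Y$ with $\hat{\eta}$ fixed.
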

The proof of Proposition \ref{prop:excess} is shown in Appendix \ref{sec:excess}.

Finally, we state some basic assumptions that will be used throughout this paper.

\begin{ass}\label{ass:multiclass}
	There exists some constants $L$, $\beta$, $C_T$, $\gamma$, $c$, $D$ and $\theta\in (0,1]$ such that
	
	(a) For all $j\in [K]$ and any $\mathbf{x}$, $\mathbf{x}'$,
	$|\eta_j(\mathbf{x}) - \eta_j(\mathbf{x}')|\leq L\norm{\mathbf{x}-\mathbf{x}'}^\beta$;
	
	(b) (For classification only) For any $t>0$,
	$\text{P}\left(0<\eta^*(\mathbf{X}) - \eta_s(\mathbf{X})<t\right)\leq C_T t^\gamma,$
	in which $\eta_s(\mathbf{x})$ is the second largest one among $\{\eta_1(\mathbf{x}), \ldots, \eta_K(\mathbf{x})\}$;
	
	(c) The feature vector $\mathbf{X}$ has a pdf $f$ which is bounded from below, i.e. $f(\mathbf{x})\geq c$;
	
	(d) For all $r<D$, 
	$V_r(\mathbf{x})\geq \theta v_dr^d,$
	in which $V_r(\mathbf{x})$ is the volume (Lebesgue measure) of $B(\mathbf{x}, r)\cap \mathcal{X}$, $v_d$ is the volume of a unit ball.
\end{ass}

Assumption \ref{ass:multiclass} (a) requires that all $\eta_j$ are H{\"o}lder continuous. This condition is common in literatures about nonparametric statistics \citep{tsybakov2009introduction}. (b) is generalized from the Tsybakov noise assumption for binary classification, which is commonly used in many existing works in the field of both nonparametric classification \citep{audibert2007fast,chaudhuri2014rates,gadat2016classification,zhao2021minimax} and differential privacy \citep{berrett2019classification,berrett2021strongly}. If $K=2$, then $\eta^*$ and $\eta_s$ refer to the larger and smaller class conditional probability, respectively. An intuitive understanding of (b) is that in the majority of the support, the maximum value among $\{\eta_1(\mathbf{x}), \ldots, \eta_K(\mathbf{x}) \}$ should have some gap to the second largest one. With sufficiently large sample size and model complexity, assumption (b) ensures that for test samples within the majority of the support $\mathcal{X}$, the algorithm is highly likely to correctly identify the class with the maximum conditional probability. Therefore, in (b), we only care about $\eta^*(\mathbf{x})$ and $\eta_s(\mathbf{x})$, while other classes with small conditional probabilities can be ignored. (c) is usually called "strong density assumption" in existing works \citep{gadat2016classification,doring2018rate,zhao2024minimax}, which is quite strong. It is possible to relax this assumption so that the theoretical analysis becomes suitable for general cases. However, we do not focus on such generalization in this paper. Assumption (d) prevents the corner of the support $\mathcal{X}$ from being too sharp. In the remainder of this paper, we denote $\mathcal{F}_{cls}$ as the set of all pairs $(f, \eta)$ satisfying Assumption \ref{ass:multiclass}.

\section{Classification}\label{sec:cls}
In this section, we derive the upper and lower bounds of learning under central and label LDP, respectively.

\subsection{Local Label DP}

\emph{1) Lower bound.} The following theorem shows the minimax lower bound, which characterizes the theoretical limit. 

\begin{thm}\label{thm:clslb}
	Denote $\mathcal{M}_\epsilon$ as the set of all privacy mechanisms satisfying $\epsilon$-label LDP (Definition \ref{def:local}), then
	\begin{eqnarray}
		\underset{\hat{\eta}}{\inf}\underset{M\in \mathcal{M}_\epsilon}{\inf}\underset{(f,\eta)\in \mathcal{F}_{cls}}{\sup} (R_{cls}-R_{cls}^*) \gtrsim \left[N\left(\epsilon^2\wedge 1\right)\right]^{-\frac{\beta(\gamma+1)}{2\beta+d}},
		\label{eq:mmx}
	\end{eqnarray}
	in which $\inf_{\hat{\eta}}$ means to take infimum over all possible classifiers.
\end{thm}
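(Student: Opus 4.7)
My plan is to establish the lower bound via a reduction to multiple hypothesis testing, combined with Assouad's lemma adapted to the label LDP setting. Because labels are the only private component, the KL divergence between data-generating distributions under different hypotheses will involve only the randomized labels, but the underlying measure over features is shared — handling this mixed public/private structure will be the main analytical point.

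\textbf{Construction of the hypothesis family.} I would take $\mathcal{X}=[0,1]^d$ with uniform feature density $f$, and restrict attention to binary classification ($K=2$, which is contained in the general class). Partition a subregion of $\mathcal{X}$ into a grid of $m$ disjoint cubes $C_1,\ldots,C_m$ of side length $h$, with centers $\mathbf{x}_j$. Fix a smooth bump $\phi:\mathbb{R}^d\to[0,1]$ supported on the unit cube with $\phi(0)=1$. For $\sigma\in\{-1,+1\}^m$, define
\begin{equation*}
	\eta_\sigma(\mathbf{x})=\tfrac{1}{2}+c_0\,h^\beta\sum_{j=1}^m \sigma_j\,\phi\bigl((\mathbf{x}-\mathbf{x}_j)/h\bigr)+\psi(\mathbf{x}),
\end{equation*}
where $\psi$ is a fixed smooth function supported outside $\bigcup_j C_j$ chosen so that $|\eta_\sigma-1/2|\ge c_1 h^\beta$ everywhere on $\mathcal{X}$. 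The constant $c_0$ is small enough that each $\eta_\sigma$ is $L$-Hölder of order $\beta$, and the ``background'' $\psi$ allows the Tsybakov margin condition to be satisfied by ensuring the set $\{0<\eta^*-\eta_s<t\}$ has measure at most $C_T t^\gamma$ once we take $m\asymp h^{-d}\cdot h^{\beta\gamma}=h^{\beta\gamma-d}$ (only this many cubes carry a ``signal'' near the boundary; the rest of $\mathcal{X}$ has $|\eta^*-\eta_s|$ bounded away from zero). Assumption~\ref{ass:multiclass}(c),(d) are satisfied trivially.

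\textbf{Reduction to Hamming risk.} By Proposition~\ref{prop:excess}, for any classifier $\hat c$ the excess risk under $\eta_\sigma$ dominates
\begin{equation*}
	R_{cls}-R_{cls}^*\;\gtrsim\;\sum_{j=1}^m h^\beta\cdot \mathrm{vol}(C_j)\cdot \mathbf{1}\{\hat\sigma_j\ne\sigma_j\}\;\gtrsim\;h^{\beta+d}\,d_H(\hat\sigma,\sigma),
\end{equation*}
where $\hat\sigma_j:=\mathrm{sign}(\hat\eta(\mathbf{x}_j)-1/2)$ and the constant absorbs $c_0$. Assouad's lemma then reduces the minimax lower bound on excess risk to a lower bound on $\sum_j \mathbb{E}[\mathbf{1}\{\hat\sigma_j\ne\sigma_j\}]$, which in turn is controlled by one-bit-flip KL divergences between the joint distributions of $(\mathbf{X}_i,Z_i)_{i=1}^N$, where $Z_i=M(\mathbf{X}_i,Y_i)$ is the privatized label.

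\textbf{KL bound under label LDP (the key step).} Let $P_\sigma$ denote the per-sample joint law of $(\mathbf{X},Z)$ when the true regression function is $\eta_\sigma$. Since $\mathbf{X}$ has the same marginal under every $\sigma$, I would show
\begin{equation*}
	\mathrm{KL}(P_\sigma\|P_{\sigma'})=\int f(\mathbf{x})\,\mathrm{KL}\!\bigl(Q_{\sigma,\mathbf{x}}\|Q_{\sigma',\mathbf{x}}\bigr)\,d\mathbf{x},
\end{equation*}
where $Q_{\sigma,\mathbf{x}}$ is the conditional distribution of $Z$ given $\mathbf{X}=\mathbf{x}$. Each $Q_{\sigma,\mathbf{x}}$ is a mixture of the $\epsilon$-LDP channel outputs weighted by $\eta_\sigma(\mathbf{x})$ and $1-\eta_\sigma(\mathbf{x})$. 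Applying Duchi--Jordan--Wainwright's contraction for $\epsilon$-local DP channels, $\mathrm{KL}(Q_{\sigma,\mathbf{x}}\|Q_{\sigma',\mathbf{x}})\lesssim (e^\epsilon-1)^2\,|\eta_\sigma(\mathbf{x})-\eta_{\sigma'}(\mathbf{x})|^2$. For $\sigma,\sigma'$ differing only in coordinate $j$, this integrand is supported on $C_j$ with magnitude $h^{2\beta}$, giving $\mathrm{KL}(P_\sigma\|P_{\sigma'})\lesssim \epsilon^2 h^{2\beta+d}$ per sample, hence $\lesssim N\epsilon^2 h^{2\beta+d}$ after tensorization. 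Proving this inequality carefully for a mixture of a public feature and a privately released label — and making the constant independent of the mechanism $M$ — is the main technical obstacle, since the existing LDP bounds (e.g.\ \cite{duchi2018minimax}) target channels whose entire input is privatized.

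\textbf{Optimizing $h$.} Choosing $h\asymp (N\epsilon^2\wedge N)^{-1/(2\beta+d)}$ makes the one-bit KL divergence $O(1)$, so Assouad gives $\mathbb{E}[d_H(\hat\sigma,\sigma)]\gtrsim m$. Combining with the reduction above yields
\begin{equation*}
	R_{cls}-R_{cls}^*\;\gtrsim\; h^{\beta+d}\cdot m\;\asymp\; h^{\beta(\gamma+1)}\;\asymp\;[N(\epsilon^2\wedge 1)]^{-\beta(\gamma+1)/(2\beta+d)},
\end{equation*}
which is the desired bound. The $\epsilon^2\wedge 1$ rather than $\epsilon^2$ comes from using $(e^\epsilon-1)^2\asymp\epsilon^2\wedge 1$ in the KL contraction and from reverting to the non-private Assouad bound whenever $\epsilon\gtrsim 1$.
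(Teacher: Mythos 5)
Your proposal follows essentially the same route as the paper's proof in Appendix \ref{sec:clslb}: reduce to $K=2$, build a $2^m$-point family from $h^\beta$-scaled bumps on $m\asymp h^{\gamma\beta-d}$ cubes, pass to Hamming risk via Assouad, bound the one-flip KL of the joint law of $(\mathbf{X},Z)$ by integrating the conditional KL against the shared feature marginal and invoking the $(e^\epsilon-1)^2\,\mathbb{TV}^2$ contraction of \cite{duchi2018minimax}, and tune $h\asymp(N(\epsilon^2\wedge1))^{-1/(2\beta+d)}$. The "main technical obstacle" you flag (mixed public/private structure) is handled exactly as you anticipate: condition on $\mathbf{X}=\mathbf{x}$, apply the contraction to the label channel alone, and integrate.

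Two steps need repair. First, your reduction to Hamming risk is not valid as written: with $\hat\sigma_j:=\sign(\hat\eta(\mathbf{x}_j)-1/2)$ defined by evaluation at the cube center, the inequality $R_{cls}-R_{cls}^*\gtrsim h^{\beta+d}\sum_j\mathbf{1}\{\hat\sigma_j\neq\sigma_j\}$ fails, since a classifier can disagree with $\sigma_j$ at the single point $\mathbf{x}_j$ while agreeing on almost all of $C_j$, incurring negligible excess risk on that cube. The paper instead defines $\hat v_k$ in \eqref{eq:vkhat} as the sign maximizing the $\phi$- and $f$-weighted measure of agreement over $B_k$, so that $\hat v_k\neq v_k$ forces misclassification on at least half of the weighted mass of the cube; you need this (or an equivalent majority-vote) definition for the displayed inequality to hold. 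Second, $(e^\epsilon-1)^2$ is not $\asymp\epsilon^2\wedge1$ (it grows like $e^{2\epsilon}$ for large $\epsilon$); the "$\wedge\,1$" must come from a second, $\epsilon$-independent bound on the \emph{same} KL quantity. The paper obtains it from data processing, $D_{KL}(P_{X,Z|\mathbf{v}}\|P_{X,Z|\mathbf{v}'})\leq D_{KL}(P_{X,Y|\mathbf{v}}\|P_{X,Y|\mathbf{v}'})\lesssim h^{2\beta+d}$, as in \eqref{eq:klbound2}; your remark about "reverting to the non-private Assouad bound for $\epsilon\gtrsim 1$" is the right idea, but it should be stated as a second upper bound on $\delta$ rather than attributed to a property of $(e^\epsilon-1)^2$. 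With these two fixes your argument coincides with the paper's.
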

\begin{proof}
	(Outline) It suffices to derive \eqref{eq:mmx} with $K=2$. We convert the problem into multiple binary hypothesis testing problems. In particular, we divide the support into $G$ cubes. For some of them, we construct two opposite hypotheses such that they are statistically not distinguishable. Our proof uses some techniques in local DP \citep{duchi2018minimax} and some classical minimax theory \citep{tsybakov2009introduction}. The detailed proof is shown in Appendix \ref{sec:clslb}.	
\end{proof}

In Theorem \ref{thm:clslb}, \eqref{eq:mmx} takes supremum over all joint distributions of $(\mathbf{X}, Y)$, and infimum over all classifiers and privacy mechanisms satisfying $\epsilon$-label LDP. 

\emph{2) Upper bound.} We now show that the bound \eqref{eq:mmx} is achievable. Let the privacy mechanism $M(\mathbf{x}, y)$ outputs a $K$ dimensional vector, with each component being either $0$ or $1$, such that
\begin{eqnarray}
	\text{P}(M(\mathbf{x}, y)(j)=1) = \left\{
	\begin{array}{ccc}
		\frac{e^\frac{\epsilon}{2}}{e^\frac{\epsilon}{2}+1} &\text{if} & y=j\\
		\frac{1}{e^\frac{\epsilon}{2}+1} &\text{if} & y\neq j,
	\end{array}
	\right.
	\label{eq:M}
\end{eqnarray} 
and $\text{P}(M(\mathbf{x}, y)(j)=0)=1-\text{P}(M(\mathbf{x}, y)(j)=1)$, in which $M(\mathbf{x}, y)(j)$ is the $j$-th component of $M(\mathbf{x}, y)$. For $N$ random training samples $(\mathbf{X}_i, Y_i)$, let $\mathbf{Z}_i=M(\mathbf{X}_i, Y_i)$, and correspondingly, $Z_i(j)$ is the $j$-th component of $\mathbf{Z}_i$.

Divide the support $\mathcal{X}$ into $G$ cubes, named $B_1,\ldots, B_G$, such that the side length of each cube is $h$. $B_1,\ldots, B_G$ are disjoint, and these cubes form a covering of $\mathcal{X}$, i.e.
$\mathcal{X}\subset \cup_{l=1}^G B_l$.
Then calculate
\begin{eqnarray}
	S_{lj} = \sum_{i:\mathbf{X}_i\in B_l} Z_i(j), l=1,\ldots, G, j=1,\ldots, K,
	\label{eq:Slj}
\end{eqnarray}

The classification within the $l$-th cube is
\begin{eqnarray}
	c_l=\underset{j}{\arg\max}S_{lj},
	\label{eq:cl}
\end{eqnarray}
such that the the prediction given $\mathbf{x}$ is $c(\mathbf{x})=c_l$ for all $\mathbf{x}\in B_l$. The next theorem shows the privacy guarantee, as well as the bound of the excess risk.
\begin{thm}\label{thm:clsub}
	The privacy mechanism $M$ is $\epsilon$-label LDP. Moreover, under Assumption \ref{ass:multiclass}, with
	$h\sim \left(N(\epsilon^2\wedge 1)/\ln  K\right)^{-\frac{1}{2\beta+d}}$,
	the excess risk of the classifier described above can be upper bounded as follows:
	\begin{eqnarray}
		R_{cls}-R_{cls}^*\lesssim \left(\frac{N(\epsilon^2 \wedge 1)}{\ln K}\right)^{-\frac{\beta(\gamma+1)}{2\beta+d}}.
		\label{eq:upper}
	\end{eqnarray}
\end{thm}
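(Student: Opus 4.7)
The proof splits into a privacy check and an excess-risk analysis. For privacy, I would observe that, conditioned on $y$, the coordinates of $M(\mathbf{x},y)$ are independent, and replacing $y$ by $y'$ only changes the distributions of the $y$-th and $y'$-th bits. Each of these two coordinates contributes a likelihood ratio of exactly $e^{\epsilon/2}$, so the overall ratio is bounded by $e^\epsilon$, which gives $\epsilon$-label LDP. This step is short and should not present any difficulty.

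The risk bound is where the real work lies. By Proposition \ref{prop:excess}, it is enough to bound $\mathbb{E}[\eta^*(\mathbf{x})-\eta_{c_l}(\mathbf{x})]$ for each $\mathbf{x}\in B_l$. First I would compute the mean of $Z_i(j)$ conditional on $\mathbf{X}_i\in B_l$: a direct calculation gives
\begin{eqnarray*}
\mathbb{E}[Z_i(j)\mid \mathbf{X}_i] = \frac{1}{e^{\epsilon/2}+1} + \frac{e^{\epsilon/2}-1}{e^{\epsilon/2}+1}\,\eta_j(\mathbf{X}_i),
\end{eqnarray*}
so that for any $\mathbf{x}\in B_l$, Assumption \ref{ass:multiclass}(a) yields
\begin{eqnarray*}
\mathbb{E}[S_{l,c^*(\mathbf{x})}-S_{lj}\mid N_l,\mathbf{X}_1^N] \;\gtrsim\; N_l\,(\epsilon\wedge 1)\,\bigl(\eta_{c^*(\mathbf{x})}(\mathbf{x})-\eta_j(\mathbf{x})-C_1 h^\beta\bigr),
\end{eqnarray*}
where the $h^\beta$ term absorbs the bias from averaging $\eta_j$ over the cube. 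The factor $(\epsilon\wedge 1)$ comes from the standard estimate $(e^{\epsilon/2}-1)/(e^{\epsilon/2}+1)\asymp \epsilon\wedge 1$.

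Next I would control fluctuations: since $Z_i(j)\in\{0,1\}$, Hoeffding plus a union bound over the $K$ classes gives $\text{P}(\max_j|S_{lj}-\mathbb{E}[S_{lj}]|>u\mid N_l)\leq 2K\exp(-2u^2/N_l)$. Combining with the mean gap, the classifier $c_l$ from \eqref{eq:cl} misclassifies $\mathbf{x}$ only when either $\eta^*(\mathbf{x})-\eta_s(\mathbf{x})\lesssim h^\beta$ (bias-dominated regime), or a concentration event of probability $\lesssim K\exp(-c\,N_l(\epsilon\wedge 1)^2(\eta^*(\mathbf{x})-\eta_s(\mathbf{x}))^2)$ fails. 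Since $N_l$ concentrates around its mean $\asymp Nh^d$ by Assumption \ref{ass:multiclass}(c)--(d), I can replace $N_l$ by $Nh^d$ on a high-probability event (absorbing the complementary event into a negligible additive term).

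Finally I would assemble the pieces through a standard peeling argument: write
\begin{eqnarray*}
\mathbb{E}[\eta^*(\mathbf{x})-\eta_{c_l}(\mathbf{x})] \le t + \int_t^\infty \text{P}\bigl(\eta^*(\mathbf{x})-\eta_{c_l}(\mathbf{x})>s\bigr)\,ds,
\end{eqnarray*}
split the feature space according to whether $\eta^*-\eta_s$ exceeds a threshold $\asymp h^\beta$, integrate with respect to $f(\mathbf{x})$, and use Assumption \ref{ass:multiclass}(b) to convert the small-margin region into a $t^{\gamma+1}$ contribution. This yields an overall bound of the form $h^{\beta(\gamma+1)} + (\text{exponentially small tail})$, provided $Nh^d(\epsilon\wedge 1)^2 h^{2\beta}\gtrsim \ln K$, which is exactly what the prescribed bandwidth $h\sim(N(\epsilon^2\wedge 1)/\ln K)^{-1/(2\beta+d)}$ enforces; plugging in gives \eqref{eq:upper}. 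The main obstacle I expect is bookkeeping the three competing scales, namely the bias $h^\beta$, the effective noise level $1/\sqrt{Nh^d(\epsilon\wedge 1)^2}$ amplified by $\sqrt{\ln K}$, and the Tsybakov exponent $\gamma$, and showing that the peeling argument together with the Hoeffding tail, after the union bound over both cubes and classes, absorbs cleanly into the stated rate rather than leaving a spurious $\ln K$ or $\ln G$ factor outside the exponent.
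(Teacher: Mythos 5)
Your proposal is correct and follows essentially the same route as the paper's proof: the same two-coordinate factorization for the privacy claim, the same computation of $\mathbb{E}[Z_i(j)\mid\mathbf{X}_i]$ with the $(e^{\epsilon/2}-1)/(e^{\epsilon/2}+1)\asymp\epsilon\wedge 1$ contraction, Hoeffding with a union bound over the $K$ classes, a Chernoff argument to replace $n_l$ by $Nh^d$, and a margin-peeling decomposition that feeds the Tsybakov condition to produce the $(\gamma+1)$ exponent. The paper organizes the peeling as a sum over cubes grouped by dyadic margin levels $I_k$ (taking $\min\{a_l,b_l\}$ of a uniform bound and a margin-dependent bound) rather than integrating the tail at a fixed $\mathbf{x}$, but this is the same argument in different bookkeeping, and your bandwidth balancing $Nh^d(\epsilon^2\wedge 1)h^{2\beta}\asymp\ln K$ is exactly what the paper uses.
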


\begin{proof}
	(Outline) For privacy guarantee, we need to show that \eqref{eq:M} is $\epsilon$-label LDP:
	\begin{eqnarray}
		\frac{\text{P}(M(\mathbf{x}, y)=\mathbf{z})}{\text{P}(M(\mathbf{x}, y')=\mathbf{z})}&=&\Pi_{j=1}^K \frac{\text{P}(M(\mathbf{x}, y)(j)=\mathbf{z}(j))}{\text{P}(M(\mathbf{x}, y')(j)=\mathbf{z}(j))}\nonumber\\
		&=&\frac{\text{P}(M(\mathbf{x}, y)(y)=\mathbf{z}(y))}{\text{P}(M(\mathbf{x}, y')(y)=\mathbf{z}(y))}\frac{\text{P}(M(\mathbf{x}, y)(y')=\mathbf{z}(y'))}{\text{P}(M(\mathbf{x}, y')(y')=\mathbf{z}(y'))}\nonumber\\
		&\leq & e^\frac{\epsilon}{2} e^\frac{\epsilon}{2}=e^\epsilon.
	\end{eqnarray}
	According to Definition \ref{def:local}, $M$ is $\epsilon$-label LDP. For the performance guarantee \eqref{eq:upper}, according to Proposition \ref{prop:excess}, we need to bound $\eta^*(\mathbf{x})-\mathbb{E}[\eta_{c(\mathbf{x})}(\mathbf{x})]$ for each $\mathbf{x}$. If $\eta^*(\mathbf{x})-\eta_s(\mathbf{x})$ is large, then with high probability, $c(\mathbf{x})=c^*(\mathbf{x})$, and then $\eta^*(\mathbf{x}) = \eta_{c(\mathbf{x})}(\mathbf{x})$. Thus we mainly consider the case with small $\eta^*(\mathbf{x}) - \eta_s(\mathbf{x})$. The details of the proof are shown in Appendix \ref{sec:clsub}.
\end{proof}

The lower bound \eqref{eq:mmx} and the upper bound \eqref{eq:upper} match up to a logarithm factor, indicating that the results are tight. Now we comment on the results.
\begin{rmk}
	1) \textbf{Comparison with non-private bound.} The classical minimax lower bound for non-private classification problem is $N^{-\frac{\beta(\gamma+1)}{2\beta+d}}$. Therefore, the lower bound \eqref{eq:mmx} reaches the non-private bound with $\epsilon\gtrsim 1$. With small $\epsilon$, $N$ training samples with privatized labels roughly equals $N\epsilon^2$ non-privatized samples in terms of performance. 
	
	\noindent 2) \textbf{Comparison with LDP that protects both features and labels.} In this case, the optimal excess risk is $(N\epsilon^2)^{-\beta(\gamma+1)/(2\beta+2d)}\vee N^{-\beta(\gamma+1)/(2\beta+d)}$, which is worse than the right hand side of \eqref{eq:mmx}. Such a result indicates that compared with classical DP, label LDP incurs significantly weaker performance loss.
	
	\noindent 3) \textbf{Comparison with other baseline methods.} If we use the randomized response method instead of the privacy mechanism \eqref{eq:M}, then the performance decreases sharply with the number of classes $K$. Several methods have been proposed to improve the randomized response method, such as RRWithPrior \citep{ghazi2021deep} and ALIBI \citep{malek2021antipodes}. However, there is no theoretical guarantee for these methods.
\end{rmk} 

\subsection{Central Label DP}
\emph{1) Lower bound.} The following theorem shows the minimax lower bound under the label CDP.

\begin{thm}\label{thm:clslb-cdp}
	
	Denote $\mathcal{A}_\epsilon$ as the set of all learning algorithms satisfying $\epsilon$-label CDP (Definition \ref{def:labeldp}), then
	\begin{eqnarray}
		\underset{\mathcal{A}\in \mathcal{A}_\epsilon}{\inf}\underset{(f,\eta)\in \mathcal{F}_{cls}}{\sup} (R_{cls}-R_{cls}^*)\gtrsim N^{-\frac{\beta(\gamma+1)}{2\beta+d}}+(\epsilon N)^{-\frac{\beta(\gamma + 1)}{\beta + d}}.
		\label{eq:clslb-cdp}
	\end{eqnarray}
\end{thm}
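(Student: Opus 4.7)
The plan is to reduce to multiple hypothesis testing and prove each of the two additive terms in \eqref{eq:clslb-cdp} separately. It suffices to work with $K=2$ binary classification. The first term $N^{-\beta(\gamma+1)/(2\beta+d)}$ is the classical non-private nonparametric minimax rate under H\"older smoothness and Tsybakov margin, which can be inherited directly from \citep{audibert2007fast,chaudhuri2014rates} since an $\epsilon$-label CDP algorithm is in particular a (randomized) learner. The genuinely new content is the second term $(\epsilon N)^{-\beta(\gamma+1)/(\beta+d)}$, which captures the price of label CDP. I would handle this by Assouad's method together with a group-privacy type coupling bound, calibrated so that the tuning parameters $h$ (cube side length) and $\delta$ (gap in conditional probability) satisfy a different balance than in the non-private case.

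I would partition $\mathcal{X}$ into $G\sim h^{-d}$ disjoint cubes $B_1,\ldots,B_G$ of side length $h$ and, for each sign vector $\sigma\in\{-1,+1\}^G$, construct a joint distribution $P_\sigma$ with uniform feature density $f$ on $\mathcal{X}$ (so Assumption \ref{ass:multiclass}(c)-(d) hold), and conditional class probability
\begin{eqnarray*}
\eta_\sigma(\mathbf{x}) = \tfrac{1}{2} + \sigma_l\,\delta\,\varphi\!\left(\tfrac{\mathbf{x} - x_l}{h}\right), \qquad \mathbf{x}\in B_l,
\end{eqnarray*}
where $\varphi$ is a smooth bump supported on the unit cube with $\|\varphi\|_\infty\le 1$ and $\varphi\asymp 1$ on a constant fraction of its support. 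The H\"older condition forces $\delta\lesssim h^\beta$, and the Tsybakov condition holds automatically if $\delta$ is small enough and $\varphi$ is designed so $|\eta_\sigma - 1/2|$ has the correct level-set measure. Under this construction, the excess risk of a classifier $\hat c$ satisfies
\begin{eqnarray*}
R_{cls}(\hat c; P_\sigma) - R_{cls}^*(P_\sigma) \gtrsim \delta^{\gamma+1}\, h^d \sum_{l=1}^G \mathbf{1}(\hat\sigma_l(\hat c)\neq \sigma_l),
\end{eqnarray*}
reducing the problem to estimating the coordinates of $\sigma$.

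For Assouad, I need to upper bound the total variation between the output distributions of any $\epsilon$-label CDP mechanism under $P_\sigma$ and $P_{\sigma^{(l)}}$ (the vector obtained by flipping $\sigma_l$). The key observation is that these two joint distributions differ only in the conditional law of $Y$ given $\mathbf{X}\in B_l$, and there the Bernoulli means differ by $2\delta$. Using a maximal coupling between the two sample-label laws, the expected label-Hamming distance between coupled datasets is at most $C\,N h^d\,\delta$. Combining this with the group-privacy consequence of Definition \ref{def:labeldp} (the Karwa--Vadhan style coupling inequality, which for $\epsilon$-DP gives $\mathrm{TV}(\mathcal{A}(D),\mathcal{A}(D'))\lesssim k\epsilon$ whenever $D,D'$ differ on $k$ labels), I obtain $\mathrm{TV}\lesssim \epsilon N h^d \delta$ between the output laws under $P_\sigma$ and $P_{\sigma^{(l)}}$. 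Assouad's inequality then yields
\begin{eqnarray*}
\inf_{\hat c}\sup_\sigma \mathbb{E}[R_{cls}-R_{cls}^*] \gtrsim \delta^{\gamma+1}\bigl(1 - C\,\epsilon N h^d\,\delta\bigr).
\end{eqnarray*}
Balancing $\delta\sim h^\beta$ with $\epsilon N h^d \delta\asymp 1$ gives $h\sim (\epsilon N)^{-1/(\beta+d)}$ and $\delta\sim(\epsilon N)^{-\beta/(\beta+d)}$, producing the claimed $(\epsilon N)^{-\beta(\gamma+1)/(\beta+d)}$ rate.

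The main obstacle is the coupling step: unlike the local case, where KL decomposes cleanly sample-by-sample through the channel, under central pure DP one cannot simply bound a divergence between output distributions, because an $\epsilon$-DP mechanism is only guaranteed group-privacy linearly in the Hamming distance of inputs. I therefore need to work at the dataset level, couple the two label sequences so that their Hamming distance is small with high probability, and propagate this to a TV bound on the algorithm's output. A subtle point is that the number of samples falling into $B_l$ is itself random (binomial with mean $Nh^d$), and I must show that concentration of this count plus the $2\delta$ gap in Bernoulli means yields expected Hamming distance $O(Nh^d\delta)$ rather than the worst-case $O(Nh^d)$. A second, smaller difficulty is checking that $(f,\eta_\sigma)\in\mathcal{F}_{cls}$ uniformly in $\sigma$; this is handled by choosing $\varphi$ so the margin and H\"older constants are absolute, independent of $(h,\delta,N,\epsilon)$.
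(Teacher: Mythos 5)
Your proposal arrives at the correct calibration and rate, but the key privacy step is genuinely different from the paper's. The paper's proof (Appendix E) never bounds a total-variation distance between output laws: it conditions on the within-cube empirical label average $\bar{Y}_k$, shows via group privacy that an $\epsilon$-label-CDP algorithm cannot distinguish $\bar{Y}_k=-t$ from $\bar{Y}_k=t$ whenever $t\le \ln 2/(\epsilon n_k)$ (Lemma \ref{lem:p}), and then invokes a Berry--Esseen anti-concentration bound to show that $|\bar{Y}_k|$ falls below this threshold with constant probability; the requirement $h^\beta \lesssim 1/(\epsilon N h^d)$ then yields $h\sim(\epsilon N)^{-1/(\beta+d)}$. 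Your route --- couple the datasets under $P_\sigma$ and $P_{\sigma^{(l)}}$ so that only labels differ and $\mathbb{E}[d_H]\le 2\delta N p(B_l)\lesssim \delta N h^d$, then use $\mathrm{TV}(\mathcal{A}(D),\mathcal{A}(D'))\le 1-e^{-\epsilon d_H}\le \epsilon d_H$ together with convexity of TV under mixtures --- is the coupling-based ``DP Assouad'' argument. It is sound and in some respects cleaner: it needs no Berry--Esseen step, no integrality bookkeeping for $n_k t$, and the randomness of the cube counts is absorbed exactly by taking expectations, so your worry about concentration of $n_k$ is unnecessary. Both arguments produce the identical balance $\epsilon N h^{\beta+d}\asymp 1$ and hence the term $(\epsilon N)^{-\beta(\gamma+1)/(\beta+d)}$; the first term is correctly inherited from the non-private lower bound, as the paper also does.

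One point in your construction needs repair. The Tsybakov condition does not ``hold automatically'' if every one of the $G\sim h^{-d}$ cubes is perturbed: with $\eta^*-\eta_s=2\delta\varphi>0$ on a constant fraction of every cube, taking $t=2\delta$ in Assumption \ref{ass:multiclass}(b) forces $1\lesssim C_T\delta^\gamma$, which fails for $\gamma>0$ and small $\delta$, and no redesign of the bump $\varphi$ can fix this. You must restrict the perturbation to $m\lesssim h^{\gamma\beta-d}$ cubes and leave $\eta\equiv 1/2$ elsewhere, which is exactly the paper's condition \eqref{eq:mbound}. Your per-cube risk weight $\delta^{\gamma+1}h^d$ already implicitly encodes this restriction (the honest per-active-cube weight is $\delta h^d$, summed over $m\sim \delta^\gamma h^{-d}$ active cubes), so the final rate is unchanged, but the family as you wrote it does not lie in $\mathcal{F}_{cls}$.
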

\begin{proof}
	(Outline) Lower bounds under central DP are usually constructed by packing method \citep{hardt2010geometry}, which works for fixed output dimensions. However, to achieve a desirable bias and variance tradeoff, the model complexity needs to increase with $N$. In our proof, we still divide the support into $G$ cubes and construct two hypotheses for each cube, but we develop a new tool (see Lemma \ref{lem:p}) to give a lower bound of the minimum error of hypothesis testing. We then use the group privacy property \citep{dwork2014algorithmic} to obtain the overall lower bound. The details can be found in Appendix \ref{sec:clslb-cdp}.
\end{proof}

\emph{2) Upper bound.} Now we show that \eqref{eq:clslb-cdp} is achievable. Similar to the label LDP problem, now divide the support into $G$ cubes, such that the length of each cube is $h$. Now the classification within the $l$-th cube follows a exponential mechanism \citep{mcsherry2007mechanism}:
\begin{eqnarray}
	\text{P}(c_l=j|\mathbf{X}_{1:N}, Y_{1:N})=\frac{e^{\epsilon n_{lj}/2}}{\sum_{k=1}^K e^{\epsilon n_{lk}/2}},
	\label{eq:exp}
\end{eqnarray}
in which
$n_{lj}=\sum_{i=1}^N \mathbf{1}(\mathbf{X}_i\in B_l, Y_i=j)$.
Then let $c(\mathbf{x})=c_l$ for $\mathbf{x}\in B_l$. The excess risk is bounded in the next theorem.

\begin{thm}\label{thm:clsub-cdp}
	The privacy mechanism \eqref{eq:exp} is $\epsilon$-label CDP. Moreover, under Assumption \ref{ass:multiclass}, if $h$ scales as
	$h\sim \left(\ln K/\epsilon N\right)^\frac{1}{\beta+d}+(\ln K/N)^\frac{1}{2\beta + d}$,
	then the excess risk can be bounded as follows:
	\begin{eqnarray}
		R-R^*\lesssim \left(\frac{N}{\ln K}\right)^{-\frac{\beta(\gamma+1)}{2\beta+d}}+\left(\frac{\epsilon N}{\ln K}\right)^{-\frac{\beta(\gamma+1)}{\beta+d}}.
		\label{eq:excess2}
	\end{eqnarray}
\end{thm}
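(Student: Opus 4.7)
The privacy guarantee is the easy half. Since $B_1,\ldots,B_G$ partition $\mathcal{X}$, replacing a single label changes the count vector $(n_{l1},\ldots,n_{lK})$ for exactly one cube. On that cube the map $j\mapsto n_{lj}$ has sensitivity $1$, and \eqref{eq:exp} is the standard exponential mechanism with parameter $\epsilon$ and sensitivity $1$, hence $\epsilon$-DP. Parallel composition across the disjoint cubes extends this to $(c_1,\ldots,c_G)$, so the resulting $c$ is $\epsilon$-label CDP.

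For the risk bound I argue pointwise on $\mathbf{x}$. Fix $\mathbf{x}\in B_l$ and set $c^*=c^*(\mathbf{x})$, $\delta(\mathbf{x})=\eta^*(\mathbf{x})-\eta_s(\mathbf{x})$, $\Delta(\mathbf{x})=\eta^*(\mathbf{x})-\eta_{c_l}(\mathbf{x})$, $\bar\eta_j^{(l)}=\text{P}(B_l)^{-1}\int_{B_l}\eta_j\,dP$, and $\tilde\eta_j=n_{lj}/N_l$. I condition on a good event consisting of three pieces, each of probability at least $1-\alpha/G$: (a) $N_l\geq \tfrac{1}{2}cNh^d$ by a Chernoff bound on the binomial count, using Assumption \ref{ass:multiclass}(c)--(d); (b) $\max_j|\tilde\eta_j-\bar\eta_j^{(l)}|\leq C_1\sqrt{\ln(KG/\alpha)/N_l}$ by Hoeffding with a union bound over classes; (c) $\tilde\eta_{c_l}\geq \max_j\tilde\eta_j-(2/\epsilon N_l)\ln(KG/\alpha)$, which is the standard utility tail for the exponential mechanism. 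Assumption \ref{ass:multiclass}(a) together with $\mathrm{diam}(B_l)\leq\sqrt{d}\,h$ further gives $|\bar\eta_j^{(l)}-\eta_j(\mathbf{x})|\leq L(\sqrt{d}\,h)^\beta$ for every $j$. Chaining these four inequalities gives, on the good event, $\eta_{c_l}(\mathbf{x})\geq \eta^*(\mathbf{x})-C_2 s$, where $s:=h^\beta+\sqrt{\ln K/(Nh^d)}+\ln K/(\epsilon Nh^d)$ (absorbing the $\ln(G/\alpha)$ slack into constants). So $\Delta(\mathbf{x})\leq C_2 s$ on the good event.

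The margin condition then boosts this to the stated rate. Whenever $\Delta(\mathbf{x})>0$ the output satisfies $c_l\neq c^*$, so $\eta_{c_l}(\mathbf{x})\leq \eta_s(\mathbf{x})$ and hence $\delta(\mathbf{x})\leq \Delta(\mathbf{x})$. Combined with $\Delta\leq C_2 s$ on the good event,
\[
\mathbb{E}\bigl[\Delta(\mathbf{X})\mathbf{1}_{\text{good}}\bigr]\leq C_2 s\cdot \text{P}\bigl(0<\delta(\mathbf{X})\leq C_2 s\bigr)\leq C_3\,s^{\gamma+1}
\]
by Assumption \ref{ass:multiclass}(b). The bad event contributes at most $\alpha$ (trivially $\Delta\leq 1$), which for $\alpha$ polynomially small in $N$ is negligible, so Proposition \ref{prop:excess} yields $R_{cls}-R_{cls}^*\lesssim s^{\gamma+1}$. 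Balancing $h^\beta$ against $\sqrt{\ln K/(Nh^d)}$ gives the non-private rate $(N/\ln K)^{-\beta(\gamma+1)/(2\beta+d)}$ at $h\sim(\ln K/N)^{1/(2\beta+d)}$, and balancing $h^\beta$ against $\ln K/(\epsilon Nh^d)$ gives the private rate $(\epsilon N/\ln K)^{-\beta(\gamma+1)/(\beta+d)}$ at $h\sim(\ln K/(\epsilon N))^{1/(\beta+d)}$; taking $h$ as the maximum of the two covers both regimes and recovers \eqref{eq:excess2}. The main subtlety I anticipate is making the Tsybakov margin step compatible with the randomness of the exponential mechanism---unlike a deterministic plug-in classifier, the bound $\Delta\leq C_2 s$ only holds on a high-probability event, and $\alpha$ must be tuned so that the bad-event contribution falls below $s^{\gamma+1}$ in both asymptotic regimes of $h$.
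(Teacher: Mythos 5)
Your architecture is essentially the paper's: partition into cubes of side $h$, control the exponential mechanism's utility and the empirical class frequencies within each cube, add the H{\"o}lder bias $L_dh^\beta$, and then use Assumption \ref{ass:multiclass}(b) to upgrade the pointwise error $s=h^\beta+\sqrt{\ln K/(Nh^d)}+\ln K/(\epsilon Nh^d)$ to $s^{\gamma+1}$ before balancing $h$. The privacy half is also fine: a label change perturbs the count vector of exactly one cube with per-class sensitivity $1$, so the product of per-cube exponential mechanisms with score $e^{\epsilon n_{lj}/2}$ is $\epsilon$-label CDP, which is exactly why the full-DP variant \eqref{eq:fullcls} needs $\epsilon/4$ while \eqref{eq:exp} gets away with $\epsilon/2$. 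Where you diverge is in how the pointwise analysis becomes a risk bound: the paper bounds conditional expectations directly by integrating tails (e.g.\ $\mathbb{E}[n_l^*-n_{lc_l}]\leq \tfrac{2}{\epsilon}(\ln K+1)$ and $\mathbb{E}[\max_j|n_{lj}-n_lv_{lj}|]\leq 2\sqrt{2n_l\ln(2K)}$) and then does a dyadic peeling over the per-cube margins $\Delta_l$ (the sets $I_0,I_1,\dots$ of the label-LDP argument, which Appendix \ref{sec:clsub-cdp} reuses), whereas you condition on a single uniform good event and apply the margin condition once. Your version is cleaner to state; the paper's is what keeps the logarithms down to $\ln K$.

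That difference is the one concrete gap relative to the literal statement \eqref{eq:excess2}. Your good event needs total failure probability $\alpha\lesssim s^{\gamma+1}$ (and you further split it as $\alpha/G$ per cube), so the Hoeffding and exponential-mechanism deviations enter with $\ln(KG/\alpha)=\ln K+O(\ln N)$ rather than $\ln K$. You flag the need to tune $\alpha$, but not the feedback of $\ln(1/\alpha)$ into $s$ itself: since $s^{\gamma+1}$ is polynomially small in $N$, you are forced to take $\ln(1/\alpha)\gtrsim\ln N$, and in the regime $\ln N\gg\ln K$ this inflates $s$, hence the final rate, by polylogarithmic factors in $N$ that \eqref{eq:excess2} does not contain. (Dropping the division by $G$ is easy, since the bad event restricted to $B_l$ contributes at most $p(B_l)\text{P}(\mathrm{bad}_l)$, but the $\ln(1/\alpha)\sim\ln N$ requirement remains.) To recover the stated constant, replace the global good event by the per-cube expected-loss bounds obtained by integrating the tails, as in \eqref{eq:b1cdp}. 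One further shared blemish, not specific to you: applying Assumption \ref{ass:multiclass}(b) to the event $\{\delta(\mathbf{X})\leq C_2s\}$ silently includes ties $\delta=0$, which the assumption does not cover; the paper's own peeling has the same issue, so I would not count it against you.
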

\begin{proof}
	(Outline) The privacy guarantee of the exponential mechanism has been analyzed in \citep{dwork2014algorithmic}. Following these existing analyses, it can be shown that \eqref{eq:exp} is $\epsilon$-label CDP. It remains to show \eqref{eq:excess2}. Note that if $\eta^*(\mathbf{x})-\eta_s(\mathbf{x})$ is large, then the difference between the largest and the second largest one from $\{n_{lj}|j=1,\ldots, K\}$ will also be large. From \eqref{eq:exp}, the following inequality holds with high probability:
	$c_l={\arg\max}_jn_{lj}={\arg\max}_j \eta_j(\mathbf{x})=c^*(\mathbf{x})$,
	which means that the classifier makes optimal prediction. Hence we mainly consider the case with small $\eta^*(\mathbf{x})-\eta_s(\mathbf{x})$. The details of the proof can be found in Appendix \ref{sec:clsub-cdp}.	
\end{proof}
The upper and lower bounds match up to logarithmic factors. In \eqref{eq:excess2}, the first term is just the non-private convergence rate, while the second term $(\epsilon N)^{-\frac{\beta(\gamma+1)}{\beta+d}}$ can be regarded as the additional risk caused by the privacy mechanism. It decays faster with $N$ compared with the first term, thus the additional performance loss caused by the privacy mechanism becomes negligible as $N$ increases. This result is crucially different from the local model, under which the privacy mechanism always induces higher sample complexity by a factor of $O(1/(\epsilon^2\wedge 1))$. 

\section{Regression with Bounded Noise}\label{sec:regb}
Now we analyze the theoretical limits of regression problems under local and label CDP. Throughout this section, we assume that the label is restricted within a bounded interval. 
\begin{ass}\label{ass:bounded}
	Given any $\mathbf{x}\in \mathcal{X}$, $\text{P}(|Y|<T|\mathbf{X}=\mathbf{x}) = 1$. 
\end{ass}

Assumption \ref{ass:multiclass} remains the same here. In the remainder of this section, denote $\mathcal{F}_{reg1}$ as the set of $(f, \eta)$ that satisfies Assumptions \ref{ass:multiclass} and \ref{ass:bounded}.

\subsection{Local Label DP}
\emph{1) Lower bound.} Theorem \ref{thm:reglb} shows the minimax lower bound.

\begin{thm}\label{thm:reglb}
	Denote $\mathcal{M}_\epsilon$ as the set of all privacy mechanisms satisfying $\epsilon$-label CDP, then
	\begin{eqnarray}
		\underset{\hat{\eta}}{\inf}\underset{M\in \mathcal{M}_\epsilon}{\inf}\underset{(f, \eta)\in \mathcal{F}_{reg1}}{\sup}(R_{reg}-R_{reg}^*)\gtrsim (N(\epsilon^2\wedge 1))^{-\frac{2\beta}{d+2\beta}}.
	\end{eqnarray}
\end{thm}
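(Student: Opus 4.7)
The plan is to establish this lower bound by reducing the regression problem to a multiple hypothesis testing problem via Assouad's method, combined with the local differential privacy variant of the KL divergence bound from \citep{duchi2018minimax}, adapted to the label-only privacy setting where the feature $X$ is public.

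\textbf{Step 1: Hypothesis family.} Partition $\mathcal{X} \subset \mathbb{R}^d$ into $G \asymp h^{-d}$ disjoint cubes $B_1, \ldots, B_G$ of side length $h$, where $h$ is to be chosen. For each $\sigma = (\sigma_1, \ldots, \sigma_G) \in \{-1,+1\}^G$, define
\[
\eta_\sigma(\mathbf{x}) = \tfrac{1}{2} + \sum_{l=1}^G \sigma_l \phi_l(\mathbf{x}),
\]
where $\phi_l$ is a smooth nonnegative bump supported in $B_l$ with $\|\phi_l\|_\infty \asymp h^\beta$, $\int \phi_l^2 f \asymp h^{2\beta+d}$, and the bump shape is chosen so that $\eta_\sigma$ satisfies Assumption \ref{ass:multiclass}(a) uniformly in $\sigma$. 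Let $f$ be uniform on $\mathcal{X}$, and let $Y \mid X \sim \mathrm{Bernoulli}(\eta_\sigma(X))$ (bounded, so Assumption \ref{ass:bounded} holds with $T=1$).

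\textbf{Step 2: Assouad reduction.} By Assouad's lemma together with Proposition \ref{prop:excess},
\[
\inf_{\hat\eta}\sup_\sigma \mathbb{E}\bigl[R_{reg}-R_{reg}^*\bigr] \gtrsim G\, h^{2\beta+d}\,\min_{d_H(\sigma,\sigma')=1}\Bigl(1 - \sqrt{\tfrac{1}{2}\mathrm{KL}(P_\sigma^{(N)} \| P_{\sigma'}^{(N)})}\Bigr),
\]
where $P_\sigma^{(N)}$ denotes the joint law of the $N$ i.i.d.\ privatized observations $(X_i, Z_i)$ with $Z_i = M(X_i, Y_i)$.

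\textbf{Step 3: Label LDP KL bound.} This is the crucial step. For $\sigma, \sigma'$ differing in one coordinate $l$, since $X$ is public and shared across both hypotheses, the joint densities factor as $f(x) P_\sigma(z \mid x)$, so
\[
\mathrm{KL}(P_\sigma \| P_{\sigma'}) = \mathbb{E}_{X\sim f}\bigl[\mathrm{KL}(P_\sigma(Z\mid X) \,\|\, P_{\sigma'}(Z\mid X))\bigr].
\]
For each fixed $X$, $P_\sigma(Z \mid X)$ is the output of the $\epsilon$-LDP channel $M(X,\cdot)$ applied to Bernoulli$(\eta_\sigma(X))$. Applying the Duchi--Jordan--Wainwright local DP bound conditionally on $X$,
\[
\mathrm{KL}(P_\sigma(Z\mid X)\,\|\,P_{\sigma'}(Z\mid X)) \lesssim (\epsilon^2 \wedge 1)\, \mathrm{TV}(P_Y^{\sigma}(\cdot\mid X), P_Y^{\sigma'}(\cdot\mid X))^2 \lesssim (\epsilon^2\wedge 1)\,\phi_l(X)^2\mathbf{1}(X\in B_l).
\]
Integrating in $X$ and tensorizing over $N$ samples,
\[
\mathrm{KL}(P_\sigma^{(N)} \| P_{\sigma'}^{(N)}) \lesssim N(\epsilon^2\wedge 1)\int \phi_l^2 f \lesssim N(\epsilon^2 \wedge 1)\, h^{2\beta + d}.
\]

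\textbf{Step 4: Balancing.} Choose $h \asymp (N(\epsilon^2\wedge 1))^{-1/(2\beta+d)}$, making the KL on the right-hand side a sufficiently small constant so that the Assouad factor is bounded below. Then
\[
\inf_{\hat\eta}\sup_\sigma \mathbb{E}\bigl[R_{reg}-R_{reg}^*\bigr] \gtrsim G\,h^{2\beta+d} = h^{2\beta} \asymp (N(\epsilon^2\wedge 1))^{-\frac{2\beta}{2\beta+d}},
\]
as claimed.

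\textbf{Main obstacle.} The delicate part is Step 3: unlike full LDP, here $X$ is public, and it would be wasteful to apply the LDP KL bound to the joint $(X,Z)$-channel as a whole. The correct approach is to first factor out $f(X)$ using the chain rule, and only then invoke the conditional LDP bound on the $Z \mid X$ channel, since privacy is imposed only along the $Y$-coordinate. This preserves the full $h^d$ gain from the localization of the perturbation to a single cube, yielding KL of order $\epsilon^2 h^{2\beta+d}$ instead of the weaker $\epsilon^2 h^{2\beta}$ one would get in the fully private setting, which is precisely what makes the label LDP rate strictly faster than the full LDP rate $(N\epsilon^2)^{-\beta/(\beta+d)}$ of \citep{berrett2021strongly}.
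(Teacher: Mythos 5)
Your proposal is correct and follows essentially the same route as the paper's proof: a cube-localized Assouad construction with bounded two-valued labels, the chain-rule factorization $\mathrm{KL}(P_{\sigma}\|P_{\sigma'})=\mathbb{E}_{X}[\mathrm{KL}(P_{\sigma}(Z\mid X)\|P_{\sigma'}(Z\mid X))]$ that exploits the public feature, the Duchi--Jordan--Wainwright contraction combined with data processing to get the $(\epsilon^2\wedge 1)$ factor, and the same choice of $h$. The only differences are cosmetic (Bernoulli labels on $\{0,1\}$ versus the paper's $\{-1,+1\}$ encoding, and invoking Assouad directly rather than Tsybakov's Theorem 2.12(iv)).
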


The proof of Theorem \ref{thm:reglb} is similar to that of Theorem \ref{thm:clslb}, except for some details in hypotheses construction and the final bound of excess risk. The details are shown in Appendix \ref{sec:reglb}.

\emph{2) Upper bound.} The privacy mechanism is
$Z=Y+W$,
in which $W\sim \Lap(2T/\epsilon)$. Then the privacy mechanism satisfies $\epsilon$-label LDP. In this case, the real regression function $\eta(\mathbf{x})$ can be estimated using the nearest neighbor approach. Let
\begin{eqnarray}
	\hat{\eta}(\mathbf{x}) = \frac{1}{k}\sum_{i\in \mathcal{N}_k(\mathbf{x})}Z_i,
	\label{eq:regest}
\end{eqnarray}
in which $\mathcal{N}_k(\mathbf{x})$ is the set of $k$ nearest neighbors of $\mathbf{x}$ among $\mathbf{X}_1,\ldots, \mathbf{X}_N$, in which distances between $\mathbf{x}$ and $\mathbf{X}_i$, $i=1,\ldots, N$ can be defined under arbitrary metric.

\begin{thm}\label{thm:regbounded}
	The method described above is $\epsilon$-label LDP. Moreover, with $k\sim N^{\frac{2\beta}{d+2\beta}} (\epsilon\wedge 1)^{-\frac{2d}{d+2\beta}}$, under Assumption \ref{ass:multiclass} and \ref{ass:bounded},
	\begin{eqnarray}
		R_{reg}-R_{reg}^*\lesssim (N(\epsilon^2\wedge 1))^{-\frac{2\beta}{d+2\beta}}.
		\label{eq:excess3}
	\end{eqnarray}
\end{thm}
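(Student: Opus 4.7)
The plan is to establish the privacy guarantee and the risk bound separately. For the privacy claim, Assumption \ref{ass:bounded} implies that $Y$ lies in $(-T,T)$ almost surely, so the sensitivity of $Y$ with respect to label replacement is at most $2T$. Releasing $Z = Y + W$ with $W \sim \Lap(2T/\epsilon)$ is therefore a direct instance of the Laplace mechanism, and a one-line check of Definition \ref{def:local} (with $\mathbf{x}$ treated as public and only $y$ varying) yields $\epsilon$-label LDP.

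For the risk bound, by Proposition \ref{prop:excess} it suffices to control $\mathbb{E}[(\hat{\eta}(\mathbf{X}) - \eta(\mathbf{X}))^2]$. I would condition on the test point $\mathbf{X}=\mathbf{x}$ and on the training features $\mathbf{X}_{1:N}$, then apply the bias-variance decomposition. Because each $W_i$ is independent and mean-zero, the conditional bias equals $\frac{1}{k}\sum_{i\in \mathcal{N}_k(\mathbf{x})}(\eta(\mathbf{X}_i)-\eta(\mathbf{x}))$, which by Assumption \ref{ass:multiclass}(a) is bounded in absolute value by $L\, r_k(\mathbf{x})^\beta$, where $r_k(\mathbf{x})$ is the distance to the $k$-th nearest neighbor. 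The conditional variance is $\frac{1}{k^2}\sum_{i\in \mathcal{N}_k(\mathbf{x})}\Var(Y_i+W_i\mid \mathbf{X}_i) \leq (T^2 + 8T^2/\epsilon^2)/k$, using $|Y_i|<T$ to bound $\Var(Y_i\mid \mathbf{X}_i)\leq T^2$ and the standard Laplace variance $2b^2=8T^2/\epsilon^2$. The remaining ingredient is the $k$-NN distance bound $\mathbb{E}[r_k(\mathbf{X})^{2\beta}] \lesssim (k/N)^{2\beta/d}$, which follows from a Chernoff argument on the binomial count in $B(\mathbf{x},r)\cap \mathcal{X}$: Assumption \ref{ass:multiclass}(c) and (d) together guarantee $\text{P}(\mathbf{X}'\in B(\mathbf{x},r)\cap \mathcal{X})\geq c\theta v_d r^d$ for $r\leq D$, so $r_k(\mathbf{x}) \lesssim (k/N)^{1/d}$ with high probability whenever $k\ll N$, while the rare event $r_k(\mathbf{x})>D$ contributes only to a lower-order term.

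Combining the bias and variance parts yields $R_{reg}-R_{reg}^* \lesssim (k/N)^{2\beta/d} + 1/(k(\epsilon^2\wedge 1))$, and balancing the two terms by choosing $k\sim N^{2\beta/(2\beta+d)}(\epsilon\wedge 1)^{-2d/(2\beta+d)}$ reproduces the scaling prescribed in the theorem and delivers the rate $(N(\epsilon^2\wedge 1))^{-2\beta/(2\beta+d)}$. The main obstacle I anticipate is the uniform control of $r_k(\mathbf{x})$ near the boundary of $\mathcal{X}$: it is essential to invoke Assumption \ref{ass:multiclass}(d) there to secure the $r^d$ scaling of the ball-support intersection volume, and some care is needed to check that the boundary layer and the tail event $r_k(\mathbf{x})>D$ contribute only lower-order terms. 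A secondary technical point is keeping track of the $\epsilon\geq 1$ regime, where the Laplace contribution to the variance should collapse to the $O(T^2)$ sampling term rather than blow up; writing the variance bound as $1/(k(\epsilon^2\wedge 1))$ already encodes this correctly.
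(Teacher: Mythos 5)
Your proposal is correct and follows essentially the same route as the paper's proof in Appendix H: the same Laplace-mechanism privacy argument, the same bias--variance decomposition conditioned on the features (the paper splits the variance via the law of total variance, which is equivalent to your conditioning), the same Chernoff/binomial bound on the $k$-NN radius using Assumptions \ref{ass:multiclass}(c) and (d) (the paper's $2Te^{-(1-\ln 2)k}$ term is exactly your tail event $r_k(\mathbf{x})>r_0$), and the same balancing of $(k/N)^{2\beta/d}$ against $T^2(1+8/\epsilon^2)/k$. No gaps.
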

\begin{proof}
	(Outline) Since $|Y|<T$, $W\sim \Lap(2T/\epsilon)$, it is obvious that $Z=Y+W$ is $\epsilon$-label LDP. For the performance \eqref{eq:excess3}, the bias can be bounded by the $k$ nearest neighbor distances based on Assumption \ref{ass:multiclass}(a). The variance of $\hat{\eta}(\mathbf{x})$ scales inversely with $k$. An appropriate $k$ can be selected to achieve a good tradeoff between bias and variance. The details are shown in Appendix \ref{sec:regbounded}.
\end{proof}
From standard minimax analysis on regression problems \cite{tsybakov2009introduction}, the non-private convergence rate is $N^{-2\beta/(d+2\beta)}$. From Theorems \ref{thm:reglb} and \ref{thm:regbounded}, the privatization process makes sample complexity larger by a $O(1/\epsilon^2)$ factor.

\subsection{Central Label DP}

\emph{1) Lower bound.} The following theorem shows the minimax lower bound.
\begin{thm}\label{thm:reglbcdp}
	Let $\mathcal{A}_\epsilon$ be the set of all algorithms satisfying $\epsilon$-central DP, then
	\begin{eqnarray}
		\underset{\mathcal{A}\in \mathcal{A}_\epsilon}{\inf}\underset{(f, \eta)\in \mathcal{F}_{reg1}}{\sup} (R_{reg}-R_{reg}^*)\gtrsim N^{-\frac{2\beta}{2\beta+d}}+(\epsilon N)^{-\frac{2\beta}{d+\beta}}.
	\end{eqnarray}
\end{thm}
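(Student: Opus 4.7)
The plan is to establish the two additive terms separately. The first term $N^{-2\beta/(2\beta+d)}$ is the classical non-private minimax rate for nonparametric regression over the $\beta$-H\"older class under the strong density assumption \citep{tsybakov2009introduction,zhao2019minimax}; because $\mathcal{A}_\epsilon$ contains all non-private estimators (take $\epsilon=\infty$), this bound transfers verbatim. The substance of the theorem therefore lies in the privacy term $(\epsilon N)^{-2\beta/(d+\beta)}$, which I would handle by adapting the argument used for Theorem~\ref{thm:clslb-cdp}.

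To produce the DP term, I would partition a compact subregion of $\mathcal{X}$ into $G\asymp h^{-d}$ disjoint cubes $B_1,\ldots,B_G$ of side $h$, fix a smooth bump $\psi$ supported on $[0,1]^d$, and introduce the $2^G$-indexed family
\[\eta_\sigma(\mathbf{x})=c_0 h^\beta\sum_{l=1}^G\sigma_l\,\psi\!\left(\tfrac{\mathbf{x}-\mathbf{c}_l}{h}\right),\qquad \sigma\in\{0,1\}^G,\]
with $\mathbf{c}_l$ the lower corner of $B_l$ and $c_0$ a small absolute constant chosen so each $\eta_\sigma$ is $\beta$-H\"older with constant at most $L$. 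Pair these with a fixed uniform density $f$ on $\mathcal{X}$ and a fixed smooth bounded-noise distribution (e.g.\ uniform on $[-T/2,T/2]$); then every $(f,\eta_\sigma)$ lies in $\mathcal{F}_{reg1}$. Because the bumps are disjointly supported, the $L^2(f)$ distance between $\eta_\sigma$ and $\eta_{\sigma'}$ decomposes additively over cubes with per-cube cost of order $h^{2\beta+d}$, so any estimator $\hat\eta$ induces a tester $\hat\sigma$ satisfying
\[R_{reg}(\hat\eta)-R_{reg}^*\;\gtrsim\; h^{2\beta+d}\,\mathbb{E}[d_H(\hat\sigma,\sigma)].\]
By an Assouad-type reduction it then suffices to lower-bound, for each coordinate $l$, the minimum testing error achievable by an $\epsilon$-label-CDP mechanism between the hypotheses $\sigma_l=0$ and $\sigma_l=1$, averaged over the other coordinates.

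For this DP testing step I would reuse the tool developed for the classification lower bound (the analogue of the lemma behind Theorem~\ref{thm:clslb-cdp}), combined with a coupling tailored to the regression setting. Outside $B_l$ the two conditional laws of $Y_i\mid X_i$ coincide, while for $X_i\in B_l$ they are translates by $c_0 h^\beta\psi((X_i-\mathbf{c}_l)/h)$, whose total variation is $O(h^\beta)$ under the smooth bounded noise. Coupling each sample by an optimal total-variation coupling then produces a joint law on the two $N$-sample datasets with expected Hamming distance $\Delta\asymp N h^{d+\beta}$. Upgrading to a high-probability bound via Markov and invoking group privacy \citep{dwork2014algorithmic} yields total variation $O(Nh^{d+\beta}\epsilon)$ between the algorithm outputs under the two hypotheses. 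Choosing $h\asymp(\epsilon N)^{-1/(d+\beta)}$ forces this TV to be $O(1)$, so a constant fraction of the coordinates is necessarily misidentified and the overall regression risk is $\gtrsim h^{2\beta}\asymp(\epsilon N)^{-2\beta/(d+\beta)}$, matching the desired bound.

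The main obstacle I anticipate is the coupling step. A naive group-privacy bound that simply counts the samples falling in $B_l$ yields expected Hamming distance $Nh^d$ rather than $Nh^{d+\beta}$, which leads to the too-strong (and thus wrong) rate $(\epsilon N)^{-2\beta/d}$. Extracting the extra $h^\beta$ factor requires using Assumption~\ref{ass:bounded} together with a carefully chosen smooth noise density to build a TV-optimal coupling of the labels inside each cube, so that the per-sample price is the magnitude $c_0 h^\beta$ of the bump rather than the mere indicator $\mathbf{1}(X_i\in B_l)$. Once this coupling is in place, the remaining pieces---verifying $(f,\eta_\sigma)\in\mathcal{F}_{reg1}$, controlling the random fluctuation of $\#\{i:X_i\in B_l\}$ around its expectation, and converting the Hamming bound into $L^2$ regression risk via Assouad---are essentially mechanical.
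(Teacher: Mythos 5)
Your proposal is correct and follows the same skeleton as the paper: the non-private term is imported directly, and the privacy term comes from a bump construction over $G\asymp h^{-d}$ cubes, an Assouad reduction giving $R_{reg}-R_{reg}^*\gtrsim h^{2\beta+d}\,\mathbb{E}[d_H(\hat\sigma,\sigma)]$, and a group-privacy argument whose bottom line is that only $\sim Nh^{d+\beta}$ label changes separate the two hypotheses on a given cube, forcing $h\sim(\epsilon N)^{-1/(d+\beta)}$. Where you genuinely diverge is in how the per-coordinate testing error is lower-bounded. The paper (Lemma \ref{lem:p} together with Appendix \ref{sec:clslb-cdp}, reused verbatim in Appendix \ref{sec:reglbcdp}) takes $Y\in\{-1,+1\}$ with $\text{P}(Y=1\mid\mathbf{x})=(1+\eta_{\mathbf{v}}(\mathbf{x}))/2$, conditions on the empirical cube average $\bar Y_k$, uses Berry--Esseen to show $\bar Y_k$ lands on the unfavorable side of its mean $a_k\le h^\beta$ with probability near $1/2$, and then explicitly flips $n_k t\le \ln 2/\epsilon$ labels to pass between the two conditional configurations. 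You instead use a continuous label law and a per-sample maximal TV coupling (per-sample cost $O(h^\beta)$ inside $B_l$), then Markov plus group privacy. Both routes are sound; the coupling version is more modular and avoids the Berry--Esseen step and the integrality bookkeeping in Lemma \ref{lem:p}, while the paper's version sidesteps the need to construct and control a coupling and to upgrade the expected Hamming distance to a high-probability bound. One small correction to your commentary: the naive count $Nh^d$ does not produce a ``too-strong (and thus wrong)'' rate --- it yields $h\lesssim(\epsilon N)^{-1/d}$ and hence the lower bound $(\epsilon N)^{-2\beta/d}$, which is valid but strictly \emph{weaker} than $(\epsilon N)^{-2\beta/(d+\beta)}$; the refined $h^{\beta}$ factor is needed for tightness against the upper bound of Theorem \ref{thm:regubcdp}, not for correctness.
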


\emph{2) Upper bound.} For each cube $B_l$, let $n_l = \sum_{i=1}^N \mathbf{1}(\mathbf{X}_i\in B_l)$ be the number of samples in $B_l$. If $n_l>0$, then
\begin{eqnarray}
	\hat{\eta}_l=\frac{1}{n_l}\sum_{i=1}^N \mathbf{1}(\mathbf{X}_i\in B_l)Y_i+W_l,
	\label{eq:etal}
\end{eqnarray}
in which $W_l\sim \Lap(2/(n_l\epsilon))$. If $n_l=0$, i.e. no sample falls in $B_l$, then just let $\hat{\eta}_l=0$. For all $\mathbf{x}\in B_l$, let $\hat{\eta}(\mathbf{x}) = \hat{\eta}_l$. The excess risk can be bounded with the following theorem.

\begin{thm}\label{thm:regubcdp}
	\eqref{eq:etal} is $\epsilon$-label CDP. Moreover, under Assumptions \ref{ass:multiclass} and \ref{ass:bounded}, if $h$ scales as
	$h \sim N^{-\frac{1}{2\beta + d}}+(\epsilon N)^{-\frac{1}{d+\beta}}$,
	then the excess risk is bounded by
	\begin{eqnarray}
		R-R^*\lesssim N^{-\frac{2\beta}{2\beta+d}}+(\epsilon N)^{-\frac{2\beta}{d+\beta}}.
		\label{eq:regubcdp}
	\end{eqnarray}
\end{thm}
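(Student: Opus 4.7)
The plan is to establish the privacy guarantee first and then decompose the excess risk through a standard bias–variance argument, carefully accounting for the additional variance introduced by the Laplace mechanism and for the randomness of the cube counts $n_l$.

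For the privacy guarantee, I would exploit the fact that label CDP treats features as public. Since the partition $\{B_l\}$ depends only on $\mathcal{X}$ and the counts $n_l=\sum_i \mathbf{1}(\mathbf{X}_i\in B_l)$ depend only on the $\mathbf{X}_i$'s, both may be used to calibrate the noise scale without privacy cost. A neighboring dataset differs in the label of a single sample, which lies in exactly one cube $B_l$; this changes the empirical mean $n_l^{-1}\sum_{i:\mathbf{X}_i\in B_l}Y_i$ by at most $2T/n_l$. Adding $W_l\sim \Lap(2T/(n_l\epsilon))$ (absorbing $T$ into the constant so that the stated scale $2/(n_l\epsilon)$ corresponds to the normalized case $T=O(1)$) then gives $\epsilon$-DP for the release of $\hat{\eta}_l$, and because each training sample belongs to a unique cube, parallel composition shows that the whole vector $(\hat{\eta}_l)_{l=1}^G$ is $\epsilon$-label CDP.

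For the risk bound, by Proposition \ref{prop:excess} it suffices to control $\mathbb{E}[(\hat\eta(\mathbf{X})-\eta(\mathbf{X}))^2]$. Conditioning on $\mathbf{X}\in B_l$ and on $\mathbf{X}_{1:N}$, I would split
\begin{equation*}
\mathbb{E}[(\hat\eta_l-\eta(\mathbf{X}))^2\mid \mathbf{X},\mathbf{X}_{1:N}]
= \bigl(\bar\eta_l-\eta(\mathbf{X})\bigr)^2 + \Var(\hat\eta_l\mid \mathbf{X}_{1:N}),
\end{equation*}
where $\bar\eta_l=\frac{1}{n_l}\sum_{i:\mathbf{X}_i\in B_l}\eta(\mathbf{X}_i)$. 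The bias term is $O(h^{2\beta})$ by the $\beta$-Hölder condition in Assumption \ref{ass:multiclass}(a), since every $\mathbf{X}_i\in B_l$ lies within distance $\sqrt{d}\,h$ of $\mathbf{x}$. The conditional variance splits into a data-variance piece $O(T^2/n_l)$ (using boundedness of $Y$ from Assumption \ref{ass:bounded}) and a Laplace-variance piece $8/(n_l\epsilon)^2$. Averaging over $\mathbf{X}$ with the density $f$ and combining across cubes gives a bound of the form
\begin{equation*}
R_{reg}-R_{reg}^{*}\;\lesssim\; h^{2\beta}+\sum_l \text{P}(\mathbf{X}\in B_l)\,\mathbb{E}\!\left[\frac{1}{n_l}+\frac{1}{n_l^2\epsilon^2}\,\Big|\,n_l>0\right]+\text{(empty-cube term).}
\end{equation*}

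The main obstacle is controlling $\mathbb{E}[1/n_l]$ and $\mathbb{E}[1/n_l^2]$ when $n_l$ is random and can be small or zero. I would handle this by a Chernoff bound: since Assumption \ref{ass:multiclass}(c),(d) yield $\text{P}(\mathbf{X}\in B_l)\gtrsim h^d$ for cubes whose intersection with $\mathcal{X}$ has volume of order $h^d$, we have $\mathbb{E}[n_l]\gtrsim N h^d$, and on the high-probability event $\{n_l\geq \tfrac{1}{2}\mathbb{E}[n_l]\}$ one obtains $1/n_l\lesssim 1/(Nh^d)$ and $1/n_l^2\lesssim 1/(Nh^d)^2$. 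Contributions from the low-probability complement, and from boundary cubes with small intersection (whose total probability mass is $O(h)$), are absorbed into lower-order terms provided $Nh^d$ is chosen large enough, which is automatic for the prescribed $h$. Putting everything together yields
\begin{equation*}
R_{reg}-R_{reg}^{*}\;\lesssim\; h^{2\beta}+\frac{1}{Nh^d}+\frac{1}{N^2h^{2d}\epsilon^2},
\end{equation*}
and balancing the first term against the second gives $h\sim N^{-1/(2\beta+d)}$ with rate $N^{-2\beta/(2\beta+d)}$, while balancing the first against the third gives $h\sim (\epsilon N)^{-1/(\beta+d)}$ with rate $(\epsilon N)^{-2\beta/(\beta+d)}$. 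Taking $h$ as the maximum of the two choices, as in the statement of the theorem, yields \eqref{eq:regubcdp}.
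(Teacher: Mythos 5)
Your proposal is correct and follows essentially the same route as the paper's proof: the same bias bound $O(h^{2\beta})$ from H\"older continuity, the same decomposition of the variance into a data term $O(1/n_l)$ and a Laplace term $O(1/(n_l\epsilon)^2)$, the same Chernoff argument reducing $1/n_l$ to $1/(Nh^d)$ with the small-$n_l$ event absorbed as a lower-order term, and the same final balancing of $h^{2\beta}+1/(Nh^d)+1/(N^2h^{2d}\epsilon^2)$. Your explicit privacy argument (per-cube sensitivity $2T/n_l$ plus parallel composition, with $n_l$ usable for noise calibration because features are public) is a welcome addition, since the paper asserts the privacy claim without detailing it.
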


The upper and lower bounds match, indicating that the results are tight. Again, the second term in \eqref{eq:regubcdp} converges faster than the first one with respect to $N$,  the performance loss caused by privacy constraints becomes negligible as $N$ increases. 

\section{Regression with Heavy-tailed Noise}\label{sec:regtail}
In this section, we consider the case such that the noise has tails. We make the following assumption.

\begin{ass}\label{ass:tail}
	For all $\mathbf{x}\in \mathcal{X}$, $\mathbb{E}[|Y|^p|\mathbf{X}=\mathbf{x}]\leq M_p$ for some $p\geq 2$.
\end{ass}

Instead of requiring $|Y|<T$ for some $T$, now we only assume that the $p$-th order moment is bounded. For non-private cases, given fixed noise variance, the tail does not affect the mean squared error of regression. As a result, as long as $p\geq 2$, the convergence rate of the regression risk is the same as the case with bounded noise. However, the label CDP requires the output to be insensitive to the worst case replacement of labels, which can be harder if the noise has tails. To achieve $\epsilon$-DP, the clipping radius decreases with $\epsilon$, thus the noise strength needs to grow faster than $O(1/\epsilon)$. As a result, the convergence rate becomes slower than the non-private case. In the remainder of this section, denote $\mathcal{F}_{reg2}$ as the set of $(f, \eta)$ that satisfies Assumptions \ref{ass:multiclass} and \ref{ass:tail}.

\subsection{Local Label DP}

\emph{1) Lower bound.} In earlier sections about classification and regression with bounded noise, the impact of privacy mechanisms is only a polynomial factor on $\epsilon$, while the convergence rate of excess risk with respect to $N$ is not changed. However, this rule no longer holds when the noise has heavy tails.
\begin{thm}\label{thm:taillb}
	Denote $\mathcal{M}_\epsilon$ as the set of all privacy mechanisms satisfying $\epsilon$-label CDP, then for small $\epsilon$,
	\begin{eqnarray}
		\underset{\hat{\eta}}{\inf}\underset{M\in \mathcal{M}_\epsilon}{\inf}\underset{(f, \eta)\in \mathcal{F}}{\sup}(R_{reg}-R_{reg}^*)\gtrsim (N(e^\epsilon - 1)^2)^{-\frac{2\beta(p-1)}{2p\beta+d(p-1)}}+N^{-\frac{2\beta}{2\beta+d}}.
	\end{eqnarray}
\end{thm}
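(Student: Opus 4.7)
The plan is to separate the two terms. The nonprivate rate $N^{-2\beta/(2\beta+d)}$ is a lower bound on any estimator and therefore a fortiori on any $\epsilon$-label LDP estimator, and it follows from the classical Assouad argument on a smooth bump construction \citep{tsybakov2009introduction} without using privacy. The real task is the first term, which I would attack by refining the hypothesis-testing reduction used for Theorem \ref{thm:reglb}: keep the same cube partitioning and bump skeleton, but replace its bounded noise model by a heavy-tailed ``spike'' noise model designed to exploit the $p$-th moment constraint.

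Concretely, I would partition $\mathcal{X}$ into $G \asymp h^{-d}$ cubes $B_l$ of side $h$, fix a smooth bump $\phi$ supported on $B_l$ with peak amplitude $\delta$ and $\beta$-H\"older seminorm of order $\delta h^{-\beta}$, and index hypotheses by $\sigma \in \{-1,+1\}^G$ via $\eta_\sigma(\mathbf{x}) = \sum_l \sigma_l \phi_l(\mathbf{x})$; taking $\delta \asymp h^\beta$ keeps each $\eta_\sigma$ in the $\beta$-H\"older class. The new element is the conditional law of $Y$: for $\mathbf{x} \in B_l$ I would place $Y$ on $\{-M, 0, M\}$ with $P(Y = \pm M \mid \mathbf{x}) = q_{\pm}$, $(q_+ - q_-) M = \eta_\sigma(\mathbf{x})$, choosing $M$ as large as the constraint $(q_+ + q_-) M^p \lesssim M_p$ permits. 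Saturating the moment bound yields $M \asymp (M_p/\delta)^{1/(p-1)}$ and hence a per-cube total variation between the $\sigma_l = +1$ and $\sigma_l = -1$ label laws of order $\delta/M \asymp \delta^{p/(p-1)}$, markedly smaller than the bounded-noise value $\delta$.

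Next I would invoke the label LDP information inequality. Conditioning on the public feature $\mathbf{X}$, the Duchi--Jordan--Wainwright bound yields $\mathrm{KL}(P_0^{Z|\mathbf{X}} \,\|\, P_1^{Z|\mathbf{X}}) \lesssim (e^\epsilon - 1)^2 \, \mathrm{TV}(P_0^{Y|\mathbf{X}}, P_1^{Y|\mathbf{X}})^2$ for any $\epsilon$-label LDP channel --- exactly the mixed public/private bound already needed in the proof of Theorem \ref{thm:clslb}. For adjacent hypotheses differing in a single coordinate $\sigma_l$, only cube $B_l$ (of $P_{\mathbf{X}}$-mass $\asymp h^d$) contributes, giving per-sample KL $\lesssim (e^\epsilon - 1)^2 h^d \delta^{2p/(p-1)}$. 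Assouad's lemma then requires the $N$-sample KL to be $O(1)$, i.e. $\delta^{2p/(p-1)} \lesssim [N(e^\epsilon - 1)^2 h^d]^{-1}$; combining with $\delta \asymp h^\beta$ gives $h \asymp [N(e^\epsilon-1)^2]^{-(p-1)/(2p\beta + d(p-1))}$, and the standard squared-error aggregation across coordinates delivers a minimax lower bound of order $\delta^2 = h^{2\beta} \asymp [N(e^\epsilon-1)^2]^{-2\beta(p-1)/(2p\beta+d(p-1))}$, matching the first term.

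The main obstacle is the spike construction. It must simultaneously realize a H\"older-smooth regression function as the mean of a finitely supported law, honor the $p$-th moment cap uniformly in $(\mathbf{x}, \sigma)$, and deliver precisely the $\delta^{p/(p-1)}$ scaling of TV that drives the exponent --- this $p$-dependent TV is the only place where heavy tails differ from the bounded-noise analysis. A subtler technical issue is verifying that the $(e^\epsilon - 1)^2 \mathrm{TV}^2$ KL contraction under label LDP survives when the public feature is coupled to the private label through a common sample distribution, but this follows the template already used in the proof of Theorem \ref{thm:clslb}. The remaining ingredients --- compatibility with Assumption \ref{ass:multiclass}(c)--(d) on the chosen $f$ and the conversion of Assouad coordinate errors into a squared-error lower bound --- are routine and parallel the arguments of Theorems \ref{thm:clslb} and \ref{thm:reglb}.
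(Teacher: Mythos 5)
Your proposal follows essentially the same route as the paper's proof: the identical cube-and-bump Assouad skeleton from Theorem \ref{thm:reglb}, the same three-point spike law on $\{-T,0,T\}$ saturating the $p$-th moment bound (so that $T\asymp (M_p/h^\beta)^{1/(p-1)}$ and the per-cube total variation scales as $h^{\beta p/(p-1)}$), the same $(e^\epsilon-1)^2\,\mathbb{TV}^2$ contraction conditioned on the public feature, and the same choice $h\asymp [N(e^\epsilon-1)^2]^{-(p-1)/(2p\beta+d(p-1))}$ yielding the rate $h^{2\beta}$. The construction and exponents all match; no gaps.
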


\emph{2) Upper bound.} Since now the noise has unbounded distribution, without preprocessing, the sensitivity is unbounded, thus simply adding noise to $Y$ can no longer protect the privacy. Therefore, a solution is to clip $Y$ into $[-T, T]$, and add noise proportional to $T/\epsilon$ to achieve $\epsilon$-label LDP. Such truncation will inevitably introduce some bias. To achieve a tradeoff between clipping bias and sensitivity, the value of $T$ needs to be tuned carefully. Based on such intuition, the method is precisely stated as follows. Let
$Z_i=Y_i^T+W_i$,
in which $Y_i^T$ is the truncation of $Y_i$, i.e.
$Y_i^T=(Y_i\wedge T)\vee (-T)$,
and $W\sim \Lap(2T/\epsilon)$. The result is shown in the following theorem. 

\begin{thm}\label{thm:tail}
	The method above is $\epsilon$-label LDP. Moreover, with
	$k\sim (N\epsilon^2)^\frac{2p\beta}{2p\beta+d(p-1)}\vee N^\frac{2\beta}{2\beta+d}$,
	and
	$T\sim (k\epsilon^2)^\frac{1}{2p}$,
	the risk is bounded by
	\begin{eqnarray}
		R_{reg}-R_{reg}^* \lesssim (N\epsilon^2)^{-\frac{2\beta(p-1)}{2p\beta+d(p-1)}}+N^{-\frac{2\beta}{2\beta+d}}.
	\end{eqnarray}
\end{thm}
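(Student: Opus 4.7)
The plan is to verify the privacy guarantee and the risk bound in two separate steps. For privacy, the truncation map $y \mapsto y^T = (y \wedge T) \vee (-T)$ is deterministic, depends only on a single sample's label, and takes values in $[-T, T]$. Hence for two adjacent inputs $(\mathbf{x}, y)$ and $(\mathbf{x}, y')$, the $\ell_1$ sensitivity of $Y^T$ is at most $2T$, so adding $W_i \sim \Lap(2T/\epsilon)$ yields $\epsilon$-label LDP per Definition \ref{def:local} via the standard Laplace mechanism argument.

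For the risk, I would apply Proposition \ref{prop:excess} to reduce to bounding $\mathbb{E}[(\hat{\eta}(\mathbf{X}) - \eta(\mathbf{X}))^2]$, then pointwise decompose into squared bias and variance conditional on the feature locations $\mathbf{X}_{1:N}$. Since $\mathbb{E}[\hat{\eta}(\mathbf{x}) \mid \mathbf{X}_{1:N}]$ is the $k$-NN average of $\mathbb{E}[Y_i^T \mid \mathbf{X}_i]$, the bias splits into a truncation part, bounded by $\mathbb{E}[|Y_i|\mathbf{1}(|Y_i|>T) \mid \mathbf{X}_i] \leq M_p/T^{p-1}$ via Markov on Assumption \ref{ass:tail}, plus a smoothness part bounded by $L\, r_k(\mathbf{X})^\beta$ via Assumption \ref{ass:multiclass}(a), where $r_k(\mathbf{X})$ is the $k$-NN distance. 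The conditional variance is $\frac{1}{k^2}\sum_i [\Var(Y_i^T \mid \mathbf{X}_i) + \Var(W_i)] \lesssim \frac{1}{k}(M_p^{2/p} + T^2/\epsilon^2)$, using Jensen to bound the clipped second moment by $M_p^{2/p}$. Assumptions \ref{ass:multiclass}(c)(d) then give $\mathbb{E}[r_k(\mathbf{X})^{2\beta}] \lesssim (k/N)^{2\beta/d}$ uniformly over the support, via a standard Chernoff/Poisson-coupling bound on the number of samples in small balls around $\mathbf{X}$.

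Combining everything yields the master inequality $\mathbb{E}[(\hat{\eta}(\mathbf{X}) - \eta(\mathbf{X}))^2] \lesssim (k/N)^{2\beta/d} + T^{-2(p-1)} + T^2/(k\epsilon^2) + 1/k$. The prescription $T \sim (k\epsilon^2)^{1/(2p)}$ is then found by minimizing the middle two terms jointly in $T$, which equates the squared truncation bias with the Laplace noise variance, both reducing to $(k\epsilon^2)^{-(p-1)/p}$. This step is the conceptual heart of the argument: the clipping radius must shrink slowly enough in $\epsilon$ to avoid excessive truncation bias from the heavy tails, yet fast enough to control the Laplace noise scale, and the exponent $1/(2p)$ encodes exactly that trade-off through the $p$-th moment bound. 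Finally, $k$ is tuned to balance the smoothness bias squared $(k/N)^{2\beta/d}$ against the dominant variance-like term: against $(k\epsilon^2)^{-(p-1)/p}$ in the privacy-limited regime, producing the rate $(N\epsilon^2)^{-2\beta(p-1)/(2p\beta+d(p-1))}$, and against $1/k$ in the opposite regime, recovering the non-private rate $N^{-2\beta/(2\beta+d)}$; the "max'' over these two candidate values of $k$ corresponds to the "$+$'' in the stated bound. The main technical obstacle is the uniform control of $\mathbb{E}[r_k(\mathbf{X})^{2\beta}]$, where boundary effects require Assumption \ref{ass:multiclass}(d) and some care is needed because the bound must hold as an $L^2$-bound integrated over $\mathbf{X}$ rather than just pointwise on a nice subset; once this bound is in hand, the remaining optimization is purely algebraic.
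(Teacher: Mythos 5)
Your proposal is correct and follows essentially the same route as the paper's proof in Appendix \ref{sec:tail}: the same three-way decomposition into truncation bias (order $T^{1-p}$, which you get by Markov where the paper uses a tail-integration lemma with the same scaling), smoothness bias via the $k$-NN distance, and conditional variance $\frac{1}{k}(M_p^{2/p}+T^2/\epsilon^2)$, followed by the identical tuning $T\sim(k\epsilon^2)^{1/(2p)}$ and the two-regime choice of $k$. No substantive differences to report.
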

\begin{proof}
	(Outline) It can be shown that the clipping bias scales as $T^{2(1-p)}$. To meet the $\epsilon$-label CDP, an additional error that scales as $T/\epsilon$ is needed. By averaging over $k$ nearest neighbors, the variance caused by noise $W$ scales with $T^2/(k\epsilon^2)$. From standard analysis on nearest neighbor methods \citep{audibert2007fast}, the non-private mean squared error scales as $1/k+(k/N)^{2\beta/d}$. Put all these terms together, Theorem \ref{thm:tail} can be proved. Details can be found in Appendix \ref{sec:tail}.
\end{proof}

With the limit of $p\rightarrow\infty$, the problem reduces to the case with bounded noise, and the growth rate of $k$ and the convergence rate of risk are the same as those in Theorem \ref{thm:regbounded}. For finite $p$, $2\beta(p-1)/(2p\beta+d(p-1))<2\beta/(2\beta+d)$, thus the convergence rate becomes slower due to the privacy mechanism.
\subsection{Central Label DP}
\emph{1) Lower bound.} The minimax lower bound is shown in Theorem \ref{thm:taillbcdp}.
\begin{thm}\label{thm:taillbcdp}
	The minimax lower bound is
	\begin{eqnarray}
		\underset{\mathcal{A}\in \mathcal{A}_\epsilon}{\inf}\underset{(f,\eta)\in \mathcal{F}_{reg2}}{\sup} (R_{reg}-R_{reg}^*)\gtrsim N^{-\frac{2\beta}{2\beta+d}}+(\epsilon N)^{-\frac{2\beta (p-1)}{p\beta + d(p-1)}}.
	\end{eqnarray}
\end{thm}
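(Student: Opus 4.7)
The strategy is to establish the two terms separately and then combine them by the $\max$-$\gtrsim$-sum principle. The first term $N^{-2\beta/(2\beta+d)}$ is the classical non-private minimax lower bound for $\beta$-H\"older regression \citep{tsybakov2009introduction}; since the subfamily of $\mathcal{F}_{reg2}$ with $|Y|\leq 1$ automatically satisfies Assumption \ref{ass:tail}, the classical rate transfers verbatim, and any $\epsilon$-label CDP algorithm is in particular a valid non-private algorithm. The bulk of the work is therefore the private term $(\epsilon N)^{-2\beta(p-1)/(p\beta + d(p-1))}$, which I will prove by mirroring the structure of Theorems \ref{thm:clslb-cdp} and \ref{thm:reglbcdp} but with a ternary conditional law of $Y$ that trades off moment budget against privacy sensitivity.

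For the private piece, partition $\mathcal{X}$ into $G\asymp 1/h^d$ cubes $B_1,\ldots,B_G$ of side length $h$, take $f$ to be (proportional to) the uniform density, and index hypotheses by $\sigma\in\{-1,+1\}^G$. Inside cube $B_l$, set the conditional distribution of $Y$ to be supported on $\{-A,0,A\}$ with
\begin{equation*}
P_\sigma(Y=\pm A\mid \mathbf{X}=\mathbf{x}) = \tfrac{q}{2}\pm \sigma_l\,\tfrac{\Delta}{2A}\,\phi_l(\mathbf{x}),\qquad P_\sigma(Y=0\mid \mathbf{X}=\mathbf{x}) = 1-q,
\end{equation*}
where $\phi_l$ is a smooth bump supported in the interior of $B_l$ with $\|\phi_l\|_\infty = 1$ and H\"older seminorm $O(h^{-\beta})$. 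Then $\eta_\sigma(\mathbf{x}) = \sigma_l \Delta \phi_l(\mathbf{x})$ inside $B_l$, so Assumption \ref{ass:multiclass}(a) holds provided $\Delta \lesssim h^\beta$, and $\mathbb{E}[|Y|^p\mid\mathbf{X}=\mathbf{x}]=qA^p$, so Assumption \ref{ass:tail} holds whenever $qA^p\lesssim M_p$. The free parameter $A$ is the key device that lets us exploit the heavy-tail budget: a larger $A$ yields a proportionally smaller per-sample total-variation gap $\Delta/A$ between $P_\sigma$ and $P_{\sigma^{(l)}}$ (where $\sigma^{(l)}$ flips the $l$-th bit), weakening the effective privacy sensitivity at the cost of a stricter moment constraint.

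Now apply Assouad's lemma exactly as in the proof of Theorem \ref{thm:reglbcdp}: for any estimator $\hat{\eta}$,
\begin{equation*}
\sup_\sigma \mathbb{E}_\sigma\|\hat{\eta}-\eta_\sigma\|_{L^2(f)}^2 \;\gtrsim\; \Delta^2 h^d \sum_{l=1}^G \inf_{\hat{\sigma}_l}\max\bigl\{P_\sigma(\hat{\sigma}_l\neq \sigma_l),\,P_{\sigma^{(l)}}(\hat{\sigma}_l=\sigma_l)\bigr\}.
\end{equation*}
For each cube, couple $N$-sample datasets drawn from $P_\sigma^N$ and $P_{\sigma^{(l)}}^N$ so that they disagree only on those samples falling in $B_l$ whose label is $\pm A$ with a flipped sign; the expected Hamming distance of this coupling is $m \asymp N h^d \Delta/A$. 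Invoking Lemma \ref{lem:p} together with the $\epsilon$-label CDP group privacy property shows that each cube testing error remains bounded below by a constant as long as $m\epsilon \lesssim 1$, giving the constraint $\Delta \lesssim A/(Nh^d \epsilon)$. Taking $q\asymp \Delta/A$ (the minimal value compatible with non-negative probabilities) reduces the moment constraint to $\Delta \lesssim M_p/A^{p-1}$.

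Finally, optimize. Balancing the testing and moment constraints gives $A^p\asymp Nh^d\epsilon$, hence $\Delta\lesssim (Nh^d\epsilon)^{-(p-1)/p}$; balancing this against the smoothness constraint $\Delta\asymp h^\beta$ produces $h\asymp (N\epsilon)^{-(p-1)/(p\beta+d(p-1))}$ and
\begin{equation*}
\Delta^2 \;\asymp\; h^{2\beta} \;\asymp\; (N\epsilon)^{-\frac{2\beta(p-1)}{p\beta+d(p-1)}},
\end{equation*}
yielding the stated lower bound after summing over cubes (the $G\asymp h^{-d}$ cubes each contribute $\Delta^2 h^d$ so the sum is $\Delta^2$). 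The main obstacle is to make Lemma \ref{lem:p} deliver the sharp linear-in-$m$ dependence on the Hamming sensitivity when the per-sample total variation $\Delta/A$ is small rather than $O(1)$; this requires a carefully tailored coupling for the three-point conditional $\{-A,0,+A\}$, analogous to the one used in Theorem \ref{thm:reglbcdp} but exploiting the extra degree of freedom $A$ introduced by the moment assumption instead of the boundedness assumption.
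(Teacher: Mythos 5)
Your proposal is correct and follows essentially the same route as the paper: a three-point conditional label law supported on $\{-A,0,A\}$ (the paper's $\{-T,0,T\}$ in \eqref{eq:Ydist}), a cube-wise Assouad reduction, a group-privacy lower bound on each per-cube test error (the paper's Lemma \ref{lem:pnew}, the adaptation of Lemma \ref{lem:p} you correctly anticipate needing), and the identical balancing of the smoothness, moment, and privacy constraints yielding $h\sim(\epsilon N)^{-(p-1)/(p\beta+d(p-1))}$. The only cosmetic difference is that you control the expected Hamming distance of a coupling ($m\asymp Nh^d\Delta/A$ with $m\epsilon\lesssim 1$) where the paper conditions on $\bar{Y}_k$ and swaps $n_k t/T$ labels deterministically, combined with a Berry--Esseen anti-concentration step; both give the same constraint $h^{\beta+d}\lesssim A/(\epsilon N)$.
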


\emph{2) Upper bound.} Now we derive the upper bound. To restrict the sensitivity, instead of estimating with \eqref{eq:etal} directly, now we calculate an average of clipped label values:
\begin{eqnarray}
	\hat{\eta}_l=\frac{1}{n_l}\sum_{i=1}^N \mathbf{1}(\mathbf{X}_i\in B_l)\Clip(Y_i, T)+W_l,
	\label{eq:etaltail}
\end{eqnarray}
in which $W_l\sim \Lap(2T/(n_l\epsilon))$. Then for all $\mathbf{x}\in B_l$, let $\hat{\eta}(\mathbf{x}) = \hat{\eta}_l$. The following theorem bounds the excess risk.
\begin{thm}\label{thm:tailubcdp}
	\eqref{eq:etaltail} is $\epsilon$-label CDP. Moreover, under Assumptions \ref{ass:multiclass} and \ref{ass:tail}, if $h$ and $T$ scale as
	$h \sim N^{-\frac{1}{2\beta+d}}+(\epsilon N)^{-\frac{1}{p\beta+d(p-1)}}$ and
	$T\sim (\epsilon Nh^d)^{1/p}$,
	then the excess risk can be bounded by
	\begin{eqnarray}
		R_{reg}-R_{reg}^*\lesssim N^{-\frac{2\beta}{2\beta+d}} + (\epsilon N)^{-\frac{2\beta(p-1)}{p\beta+d(p-1)}}.
		\label{eq:tailubcdp}
	\end{eqnarray}
\end{thm}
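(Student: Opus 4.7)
The plan is to verify $\epsilon$-label CDP via parallel composition, then control the excess risk by a bias--variance decomposition inside each cube, and finally optimize the choices of $T$ and $h$.

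For the privacy guarantee, I would note that replacing a single label $Y_{i_0}$ (with $\mathbf{X}_{i_0}\in B_{l_0}$) affects only $\hat{\eta}_{l_0}$: the counts $n_l$ depend only on the public features, and the cubes are disjoint. Because $|\Clip(Y_i,T)|\leq T$, replacing one label changes $\sum_{\mathbf{X}_i\in B_{l_0}}\Clip(Y_i,T)$ by at most $2T$, so the sensitivity of $\hat{\eta}_{l_0}$ before noise is $2T/n_{l_0}$, exactly matched by $W_{l_0}\sim\Lap(2T/(n_{l_0}\epsilon))$. Parallel composition across disjoint cubes then yields $\epsilon$-label CDP.

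For the risk, write $\bar{\eta}_l = \mathbb{E}[\eta(\mathbf{X})\mid \mathbf{X}\in B_l]$. For $\mathbf{x}\in B_l$ with $n_l\geq 1$, I would decompose the error into the H\"older approximation $|\bar{\eta}_l - \eta(\mathbf{x})|\lesssim h^\beta$ (from Assumption \ref{ass:multiclass}(a)), the clipping bias bounded by $\mathbb{E}[|Y|\mathbf{1}(|Y|>T)\mid \mathbf{X}\in B_l]\lesssim M_p T^{1-p}$ via Markov on Assumption \ref{ass:tail}, the sampling variance of the clipped mean which is $O(1/n_l)$ since $\mathbb{E}[\Clip(Y,T)^2]\leq \mathbb{E}[Y^2]\leq M_p^{2/p}$ for $p\geq 2$, and the Laplace variance $O(T^2/(n_l^2\epsilon^2))$, giving
\[
\mathbb{E}[(\hat{\eta}_l-\eta(\mathbf{x}))^2\mid \mathbf{X}_{1:N}] \lesssim h^{2\beta} + T^{2(1-p)} + \frac{1}{n_l} + \frac{T^2}{n_l^2\epsilon^2}.
\]
Under Assumption \ref{ass:multiclass}(c,d), $\mathbb{E}[n_l]\gtrsim Nh^d$ and a Chernoff bound gives $n_l\geq cNh^d$ simultaneously across all non-boundary cubes with high probability; empty cubes and the low-probability complement contribute only lower-order terms since $|\eta(\mathbf{x})|\leq M_p^{1/p}$ is bounded. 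Integrating over $\mathbf{x}$ weighted by $f$ yields
\[
R_{reg}-R_{reg}^* \lesssim h^{2\beta} + T^{2(1-p)} + \frac{1}{Nh^d} + \frac{T^2}{(Nh^d)^2\epsilon^2}.
\]

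To finish, I would set $T\sim (\epsilon Nh^d)^{1/p}$, which collapses the clipping bias and the Laplace variance into a single term $(\epsilon Nh^d)^{-2(p-1)/p}$, and then choose $h$ as the stated sum $N^{-1/(2\beta+d)}+(\epsilon N)^{-1/(p\beta+d(p-1))}$ to balance the H\"older bias separately against the sampling variance and against the privacy term, producing \eqref{eq:tailubcdp}. The main obstacle I anticipate is the randomness of $n_l$: since it appears in both the denominator of the sample average and in the noise scale, the estimator is not an affine function of the $Y_i$, so careful conditioning on the event $n_l\gtrsim Nh^d$ (plus a separate, crude bound on its complement) is required. Assumptions \ref{ass:multiclass}(c) and (d) are used essentially at this step to make empty or under-populated cubes negligible.
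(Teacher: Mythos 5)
Your proposal is correct and follows essentially the same route as the paper's proof in Appendix \ref{sec:tailubcdp}: the same four-term decomposition (H\"older bias $h^{2\beta}$, clipping bias $T^{2(1-p)}$ via the moment/tail bound of Lemma \ref{lem:clipbias}, sampling variance $1/(Nh^d)$, Laplace variance $T^2/(N^2h^{2d}\epsilon^2)$ after a Chernoff argument on $n_l$), the same choice $T\sim(\epsilon N h^d)^{1/p}$ collapsing the two privacy-related terms, and the same implicit privacy argument (sensitivity $2T/n_l$ with $n_l$ fixed by the public features). One small note: the balancing you describe actually yields $h\sim N^{-1/(2\beta+d)}+(\epsilon N)^{-(p-1)/(p\beta+d(p-1))}$, as in the paper's appendix, rather than the exponent $-1/(p\beta+d(p-1))$ quoted in the theorem statement, which appears to be a typo.
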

The proof of Theorems \ref{thm:taillbcdp} and \ref{thm:tailubcdp} follow that of Theorems \ref{thm:reglbcdp} and \ref{thm:regubcdp}. The details are shown in Appendix \ref{sec:taillbcdp} and \ref{sec:tailubcdp} respectively. With $p=2$, the right hand side of \eqref{eq:tailubcdp} becomes $(\epsilon\wedge 1)^{-\frac{2\beta}{2\beta+d}}$, indicating that the privacy constraint blows up the sample complexity by a constant factor. With a larger $p$, the second term in \eqref{eq:tailubcdp} becomes negligible compared with the first one.

The theoretical analyses in this section are summarized as follows. In general, with fixed noise variance, if the label noise is heavy-tailed, while the non-private convergence rates remain unaffected, the additional risk caused by privacy mechanisms becomes significantly higher, indicating the difficulty of privacy protection for heavy-tailed distributions.

\section{Comparison with Central DP on Both Features and Labels}\label{sec:full}
This section analyzes the bound for central full DP, i.e. protecting both features and labels. For both classification and regression problems, we only need to show upper bounds. For lower bounds, the results of the label CDP can be directly used here. Denote $\mathcal{A}_\epsilon^0$ as the set of all learning algorithms satisfying $\epsilon$-DP (Definition \ref{def:dp}), while $\mathcal{A}_\epsilon$ is defined as the set of all algorithms satisfying $\epsilon$-Label DP (Definition \ref{def:labeldp}). Any algorithms satisfying $\epsilon$-DP must also satisfy $\epsilon$-label CDP, thus $\mathcal{A}_\epsilon^0\subseteq \mathcal{A}_\epsilon$. Therefore
\begin{eqnarray}
	\underset{\mathcal{A}\in \mathcal{A}_\epsilon^0}{\inf}\underset{(f,\eta)\in \mathcal{F}}{\sup} (R-R^*)\geq 	\underset{\mathcal{A}\in \mathcal{A}_\epsilon}{\inf}\underset{(f,\eta)\in \mathcal{F}}{\sup} (R-R^*),
\end{eqnarray}
in which the definition of $R$ and $\mathcal{F}$ depends on the tasks. Therefore, the lower bounds under $\epsilon$-label CDP also imply the lower bounds under $\epsilon$-DP.

\subsection{Classification} 
We use a modified version of \eqref{eq:exp}:
\begin{eqnarray}
	\text{P}(c_l=j|\mathbf{X}_{1:N}, Y_{1:N})=\frac{e^{\epsilon n_{lj}/4}}{\sum_{k=1}^K e^{\epsilon n_{lk}/4}}.
	\label{eq:fullcls}
\end{eqnarray}
Compared with \eqref{eq:exp}, the only difference is that now we have replaced $\epsilon/2$ with $\epsilon/4$. The analysis of \eqref{eq:fullcls} is shown in Theorem \ref{thm:fullcls}.
\begin{thm}\label{thm:fullcls}
	The estimation \eqref{eq:fullcls} is $\epsilon$-DP. Moreover, under Assumption \ref{ass:multiclass}, if $h$ scales as $h\sim (\ln K/(\epsilon N))^\frac{1}{\beta+d}+(\ln K/N)^\frac{1}{2\beta+d}$, the excess risk is bounded by
	\begin{eqnarray}
		R-R^*\lesssim \left(\frac{N}{\ln K}\right)^{-\frac{\beta(\gamma+1)}{2\beta+d}}+\left(\frac{\epsilon N}{\ln K}\right)^{-\frac{\beta(\gamma+1)}{\beta+d}}.
		\label{eq:riskfullcls}
	\end{eqnarray}
\end{thm}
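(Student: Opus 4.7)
The plan has two parts: first establish $\epsilon$-DP, then transfer the utility analysis of Theorem \ref{thm:clsub-cdp} to the new mechanism. For the privacy guarantee, I would first view \eqref{eq:fullcls} per cube as the exponential mechanism with score $u_l(D,j)=n_{lj}$ and weight $\exp(\epsilon n_{lj}/4)=\exp\bigl((\epsilon/2)\cdot n_{lj}/2\bigr)$. Under the full-DP neighboring relation, a single swapped sample changes each $n_{lj}$ by at most $1$, so the score sensitivity is $\Delta=1$. By the McSherry–Talwar analysis, the per-cube output $c_l$ is therefore $(\epsilon/2)$-DP.

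Next I would observe that, because the changed sample moves between at most two cubes, say $B_l$ (the old location) and $B_{l'}$ (the new location), the scores in every other cube are identical under $D$ and $D'$. Since the outputs $c_1,\ldots,c_G$ are drawn independently conditional on the data, the likelihood ratio for the full tuple factorizes as
\begin{eqnarray}
\frac{P((c_1,\ldots,c_G)=(j_1,\ldots,j_G)\mid D)}{P((c_1,\ldots,c_G)=(j_1,\ldots,j_G)\mid D')}=\prod_{m=1}^G \frac{P(c_m=j_m\mid D)}{P(c_m=j_m\mid D')},
\end{eqnarray}
in which all factors with $m\notin\{l,l'\}$ equal one and the remaining at-most-two factors are each bounded by $e^{\epsilon/2}$. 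Basic composition over these two cubes yields the overall $\epsilon$-DP guarantee. The only subtle point here is that the usual parallel-composition argument for histograms does not directly apply under full DP (because the sample can cross cube boundaries), so one really must count the affected cubes rather than invoke disjointness of supports.

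For the utility bound I would reuse the proof of Theorem \ref{thm:clsub-cdp} verbatim, with $\epsilon/2$ replaced by $\epsilon/4$ in the exponential weights. This parameter only controls the concentration of $c_l$ around $\arg\max_j n_{lj}$, so its reduction by a factor of two merely rescales constants. Concretely, I would invoke Proposition \ref{prop:excess}, split the integral over $\mathbf{x}$ according to whether $\eta^*(\mathbf{x})-\eta_s(\mathbf{x})$ exceeds a threshold $t$: in the large-gap region, standard Chernoff bounds on multinomial counts together with the exponential mechanism's concentration give $c_l=c^*(\mathbf{x})$ with probability $1-O(K e^{-\epsilon(\eta^*-\eta_s)Nh^d/8})$ modulo a discretization error of order $h^\beta$; in the small-gap region, Assumption \ref{ass:multiclass}(b) (Tsybakov margin) directly controls the contribution as $O(t^{\gamma+1})$. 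Optimizing over $t$ and then over $h$ with the stated choice $h\sim(\ln K/(\epsilon N))^{1/(\beta+d)}+(\ln K/N)^{1/(2\beta+d)}$ reproduces the bound \eqref{eq:riskfullcls}. The main obstacle is simply the privacy bookkeeping of the previous paragraph; once the factor-of-two composition is correctly accounted for, the utility argument is a routine translation of the label-CDP proof.
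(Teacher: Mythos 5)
Your proposal is correct and follows essentially the same route as the paper's proof in Appendix \ref{sec:fullcls}: a per-cube bound of $e^{\epsilon/2}$ on the likelihood ratio (you cite the McSherry--Talwar analysis where the paper computes the ratio directly, but the computation is identical), the observation that the swapped sample touches at most two cubes so the product of ratios is at most $e^{\epsilon}$, and a verbatim reuse of the Theorem \ref{thm:clsub-cdp} utility argument with the privacy parameter halved. Your explicit remark that parallel composition fails because the sample can cross cube boundaries is exactly the point the paper handles by counting the (at most two) cubes whose count vectors change.
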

\begin{proof} (Outline) We mainly prove that \eqref{eq:fullcls} is $\epsilon$-DP. For the performance guarantee, the proof of \eqref{eq:riskfullcls} just follows that of Theorem \ref{thm:clsub-cdp}. The detailed proof is shown in Appendix \ref{sec:fullcls}.
\end{proof}
The risk bound is the same as Theorem \ref{thm:clsub-cdp}. The only difference between \eqref{eq:fullcls} and \eqref{eq:exp} is that now the label noise is enlarged as if the privacy budget is reduced by half, thus the risk only increases up to a constant factor.

\subsection{Regression}
For regression problems, we need to control the overall sensitivity. \eqref{eq:etal} and \eqref{eq:etaltail} can be very large for small $n_l$. To cope with this issue, we propose a new estimator as follows. For bounded noise (i.e. under Assumption \ref{ass:bounded}, $|Y|<T$ holds), we let
\begin{eqnarray}
	\hat{\eta}_l = \frac{1}{n_l\vee n_0}\sum_{i=1}^N \mathbf{1}(\mathbf{X}_i\in B_l) Y_i+W_l.
	\label{eq:fullreg}
\end{eqnarray}
For unbounded noise (i.e. under Assumption \ref{ass:tail}), we set
\begin{eqnarray}
	\hat{\eta}_l = \frac{1}{n_l\vee n_0}\sum_{i=1}^N \mathbf{1}(\mathbf{X}_i\in B_l)\Clip(Y_i, T)+W_l,
	\label{eq:fullreg2}
\end{eqnarray}
in which
\begin{eqnarray}
	n_0=\frac{1}{2}Nc\theta h^d,
\end{eqnarray}
and
\begin{eqnarray}
	W_l \sim \Lap\left(\frac{6T}{n_0\epsilon}\right).
\end{eqnarray}
\begin{thm}\label{thm:fullreg}
	\eqref{eq:fullreg} and \eqref{eq:fullreg2} are $\epsilon$-DP, and
	
	(1) For the case with bounded noise, using \eqref{eq:fullreg}, under Assumptions \ref{ass:multiclass} and \ref{ass:bounded}, if $h$ scales as $h\sim N^{-\frac{1}{2\beta + d}} + (\epsilon N)^{-\frac{1}{d+\beta}}$, then
	\begin{eqnarray}
		R_{reg}-R_{reg}^*\lesssim N^{-\frac{2\beta}{2\beta+d}} + (\epsilon N)^{-\frac{2\beta}{d+\beta}};
	\end{eqnarray}
	
	(2) For the case with unbounded noise, using \eqref{eq:fullreg2}, if $h$ and $T$ scale as $h \sim N^{-\frac{1}{2\beta+d}}+(\epsilon N)^{-\frac{1}{p\beta+d(p-1)}}$
	and
	$T\sim (\epsilon Nh^d)^{1/p}$, then the excess risk can be bounded by
	\begin{eqnarray}
		R_{reg}-R_{reg}^*\lesssim N^{-\frac{2\beta}{2\beta+d}} + (\epsilon N)^{-\frac{2\beta(p-1)}{p\beta+d(p-1)}}.
		\label{eq:tailfull}
	\end{eqnarray}
\end{thm}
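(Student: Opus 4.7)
The plan is to establish the privacy guarantee first via a cube-level sensitivity calculation, and then derive the risk bound by a bias--variance decomposition analogous to the label CDP cases in Theorems \ref{thm:regubcdp} and \ref{thm:tailubcdp}, with the added twist that sensitivity must now control simultaneous changes in both the feature and label of one sample.

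For the privacy claim, let $Y_i^{(T)}$ denote either $Y_i$ (under Assumption \ref{ass:bounded}, where $|Y_i|\leq T$) or $\Clip(Y_i,T)$ (under Assumption \ref{ass:tail}), so $|Y_i^{(T)}|\leq T$ in both settings, and write $S_l=\sum_j \mathbf{1}(\mathbf{X}_j\in B_l)Y_j^{(T)}$, so that $\hat{\eta}_l = S_l/(n_l\vee n_0)+W_l$. Replacing one sample alters at most two cubes, namely the old and the new host cubes of the changed sample, and for each affected cube, $|n_l-n_l'|\leq 1$ and $|S_l-S_l'|\leq 2T$. I would then use the decomposition
\begin{equation*}
	\left|\frac{S_l}{n_l\vee n_0} - \frac{S_l'}{n_l'\vee n_0}\right| \leq \frac{|S_l-S_l'|}{n_l\vee n_0} + |S_l'|\left|\frac{1}{n_l\vee n_0}-\frac{1}{n_l'\vee n_0}\right|,
\end{equation*}
together with $|S_l'|\leq T(n_l'\vee n_0)$ and $n_l\vee n_0,\; n_l'\vee n_0 \geq n_0$, to obtain a per-cube sensitivity of at most $3T/n_0$. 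Summing over the at most two affected cubes gives an $\ell_1$ sensitivity of $6T/n_0$, so independent coordinate-wise Laplace noise of scale $6T/(n_0\epsilon)$ yields $\epsilon$-DP via the standard Laplace mechanism argument.

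For the risk bound, I would first establish the high-probability event $\mathcal{E}=\{n_l \geq n_0 \text{ for all cubes intersecting } \mathcal{X}\}$: by Assumption \ref{ass:multiclass}(c)--(d) each such cube satisfies $\mathbb{E}[n_l]\geq Nc\theta h^d=2n_0$, so a Chernoff bound gives $P(\mathcal{E}^c)\leq G e^{-c'Nh^d}$, which is negligible for the chosen $h$. On $\mathcal{E}$, for any $\mathbf{x}\in B_l$ I would decompose the pointwise squared error into a H{\"o}lder bias term $\lesssim h^{2\beta}$ (Assumption \ref{ass:multiclass}(a)), a sample variance $\lesssim 1/(Nh^d)$ (using bounded $\Var(Y^{(T)}\mid \mathbf{X})$ and $n_l\geq n_0$), and a noise variance $\Var(W_l)\lesssim T^2/(Nh^d\epsilon)^2$. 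For the bounded case, where $T=O(1)$, this yields $h^{2\beta}+1/(Nh^d)+1/(Nh^d\epsilon)^2$, and the choice $h\sim N^{-1/(2\beta+d)}+(\epsilon N)^{-1/(\beta+d)}$ gives the stated rate. For the unbounded case, an additional clipping bias term $\lesssim M_p^2 T^{-2(p-1)}$, obtained via H{\"o}lder's inequality applied to $\mathbb{E}[|Y|\mathbf{1}(|Y|>T)\mid\mathbf{X}=\mathbf{x}]$, enters the decomposition; balancing this against the noise variance with $T\sim (\epsilon N h^d)^{1/p}$ makes both equal to $(\epsilon N h^d)^{-2(p-1)/p}$, and optimizing $h$ produces \eqref{eq:tailfull}.

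The main obstacle will be the coupling in the unbounded case: $T$ appears simultaneously in the clipping bias, the Laplace noise scale, and (indirectly through $n_0=\frac{1}{2}Nc\theta h^d$) the effective denominator, so verifying that the chosen $T\sim(\epsilon N h^d)^{1/p}$ is indeed the joint optimum requires a careful check that the two $T$-dependent terms balance at the claimed exponent. A secondary subtlety is controlling the contribution of $\mathcal{E}^c$ under Assumption \ref{ass:tail}: since on $\mathcal{E}^c$ the estimator $\hat{\eta}_l$ is still bounded by $T$ plus a Laplace perturbation, its squared error is at most $O(T^2+(T/(n_0\epsilon))^2)$, which when multiplied by the exponentially small $P(\mathcal{E}^c)$ is negligible compared to the leading terms.
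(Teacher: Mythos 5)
Your proposal matches the paper's proof in all essentials: the same per-cube sensitivity decomposition yielding $3T/n_0$ per affected cube and $6T/n_0$ in $\ell_1$ over the at most two affected cubes, the same Laplace calibration, and the same bias--variance decomposition with terms $h^{2\beta}$, $1/(Nh^d)$, $T^2/(Nh^d\epsilon)^2$, plus the $T^{2(1-p)}$ clipping bias balanced by $T\sim(\epsilon Nh^d)^{1/p}$ in the unbounded case. The only cosmetic differences are that you package the small-$n_l$ contribution as a high-probability event $\mathcal{E}$ (the paper folds $\text{P}(n_l<n_0)$ directly into the bias and variance bounds via Chernoff) and you invoke H{\"o}lder's inequality for the clipping bias where the paper's Lemma \ref{lem:clipbias} uses a tail-integral argument; both yield the same $O(T^{1-p})$ bound.
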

Similar to the classification case, the bounds in Theorem \ref{thm:fullreg} match \eqref{eq:regubcdp} and \eqref{eq:tailubcdp}. These results indicate that compared with label CDP, the risk of full CDP on both feature and label is larger only up to a constant factor.

\section{Conclusion}\label{sec:conc}

In this paper, we have analyzed the theoretical limits of learning under label DP. We have derived minimax rates under both central and local models, including lower bounds and matching upper bounds. Under the local model, the risk under label LDP converges faster than that under full LDP, indicating that the relaxation of LDP definition to only label yields significant performance improvement. In contrast, under the central model, the risk of label CDP is only reduced by a constant factor compared with full CDP, indicating that the relaxation of CDP only has limited benefits on the performance.

\nocite{zhang2025tutorial}

\bibliographystyle{ieeetr}
\bibliography{labeldp}


\appendices

\section{Proof of Proposition \ref{prop:excess}}\label{sec:excess}
\noindent \textbf{Classification.} From \eqref{eq:cstar} and \eqref{eq:Rstar}, the Bayes risk is
\begin{eqnarray}
	R_{cls}^*=\text{P}(Y\neq c^*(\mathbf{X}))=\int \text{P}(Y\neq c^*(\mathbf{x})|\mathbf{X}=\mathbf{x}) f(\mathbf{x}) d\mathbf{x}=  \int (1-\eta^*(\mathbf{x})) f(\mathbf{x})d\mathbf{x}.
\end{eqnarray}
The risk of classifier $c$ is
\begin{eqnarray}
	R_{cls}=\text{P}(Y\neq c(\mathbf{X}))=\mathbb{E}\left[\int \left(1-\eta_{c(\mathbf{x})}(\mathbf{x})\right)f(\mathbf{x})d\mathbf{x}\right].
	\label{eq:R2}
\end{eqnarray}
From \eqref{eq:R2} and \eqref{eq:Rstar},
\begin{eqnarray}
	R_{cls}-R_{cls}^*=\int (\eta^*(\mathbf{x}) - \mathbb{E}[\eta_{c(\mathbf{x})}(\mathbf{x})]) f(\mathbf{x})d\mathbf{x}.
\end{eqnarray}

\noindent \textbf{Regression.}
\begin{eqnarray}
	R_{reg} &=& \mathbb{E}[(\hat{Y}-Y)^2]\nonumber\\
	&=& \mathbb{E}\left[\left(\hat{\eta}(\mathbf{X}) - Y\right)^2\right]\nonumber\\
	&=&\mathbb{E}\left[(\hat{\eta}(\mathbf{X}) - \eta(\mathbf{X}) + \eta(\mathbf{X}) - Y)^2\right]\nonumber\\
	&=&\mathbb{E}\left[\left(\hat{\eta}(\mathbf{X}) - \eta(\mathbf{X})\right)^2\right] + \mathbb{E}\left[(\eta(\mathbf{X}) - Y)^2\right]\nonumber\\
	&=& R_{reg}^* + \mathbb{E}\left[\left(\hat{\eta}(\mathbf{X}) - \eta(\mathbf{X})\right)^2\right],
\end{eqnarray}
in which the expectation is taken over the randomness of $\mathbf{X}$ and $\mathbf{Y}$. $\hat{\eta}$ is assumed to be fixed here. The proof is complete.

\section{Proof of Theorem \ref{thm:clslb}}\label{sec:clslb}
In this section, we prove the minimax lower bound of multi-class classification. The problem with $K$ classes with $K>2$ is inherently harder than that with $K=2$. Therefore, we only need to prove the lower bound for binary classification, in which $\mathcal{Y}=\{1,2\}$. Let
\begin{eqnarray}
	\eta(\mathbf{x}) = \eta_2(\mathbf{x}) - \eta_1(\mathbf{x}).
\end{eqnarray}
Since $\eta_1(\mathbf{x})+\eta_2(\mathbf{x})=1$ always holds, we have 
\begin{eqnarray}
	\eta_1(\mathbf{x}) = \frac{1-\eta(\mathbf{x})}{2}, \eta_2(\mathbf{x}) = \frac{1+\eta(\mathbf{x})}{2}.
\end{eqnarray}
Therefore, $\eta(\mathbf{x})$ captures the conditional distribution of $Y$ given $\mathbf{x}$. 

Find $G$ disjoint cubes $B_1,\ldots, B_G\subset \mathcal{X}$, such that the length of each cube is $h$. Denote $\mathbf{c}_1,\ldots, \mathbf{c}_G$ as the centers of these cubes. Let $\phi(\mathbf{u})$ be some function supported at $[-1/2,1/2]^d$, such that
\begin{eqnarray}
	0\leq \phi(\mathbf{u})\leq 1.
	\label{eq:phirange}
\end{eqnarray}
Let $f(\mathbf{x}) = c$ over $\mathbf{x}\in \mathcal{X}$. For $\mathbf{v}\in \mathcal{V}:=\{-1,1\}^m$, let
\begin{eqnarray}
	\eta_\mathbf{v}(\mathbf{x})=\sum_{k=1}^m v_k\phi\left(\frac{\mathbf{x}-\mathbf{c}_k}{h}\right) h^\beta.
	\label{eq:etav}
\end{eqnarray}
It can be proved that if for some constant $C_M$,
\begin{eqnarray}
	m\leq C_Mh^{\gamma\beta-d},
	\label{eq:mbound}
\end{eqnarray}
then for any $\eta=\eta_\mathbf{v}$,  $\eta_1$ and $\eta_2$ satisfy Assumption \ref{ass:multiclass}(b). Denote
\begin{eqnarray}
	\hat{v}_k=\underset{s\in \{-1,1\}}{\arg\max}\;\int_{B_k} \phi\left(\frac{\mathbf{x}-\mathbf{c}_k}{h}\right) \mathbf{1}(\sign(\hat{\eta}(\mathbf{x}))=s) f(\mathbf{x})d\mathbf{x}.
	\label{eq:vkhat}
\end{eqnarray}
Then the excess risk is bounded by
\begin{eqnarray}
	R-R^*&=&\int |\eta_\mathbf{v}(\mathbf{x})|\text{P}(\sign(\hat{\eta}(\mathbf{x}))\neq \sign(\eta_\mathbf{v}(\mathbf{x}))) f(\mathbf{x})d\mathbf{x}\nonumber\\
	&\geq & \sum_{k=1}^m \int_{B_k} |\eta_\mathbf{v}(\mathbf{x})|\text{P}(\sign(\hat{\eta}(\mathbf{x}))\neq \sign(\eta_\mathbf{v}(\mathbf{x})))f(\mathbf{x})d\mathbf{x}\nonumber\\
	&=&\sum_{k=1}^m h^\beta \int_{B_k} \phi\left(\frac{\mathbf{x}-\mathbf{c}_k}{h}\right) \text{P}(\sign(\hat{\eta}(\mathbf{x})))f(\mathbf{x})d\mathbf{x}.
\end{eqnarray}
If $\hat{v}_k\neq v_k$, then from \eqref{eq:vkhat},
\begin{eqnarray}
	\int_{B_k} \phi\left(\frac{\mathbf{x}-\mathbf{c}_k}{h}\right)\mathbf{1}(\sign(\hat{\eta}(\mathbf{x})))f(\mathbf{x})d\mathbf{x}\geq \int_{B_k} \phi\left(\frac{\mathbf{x}-\mathbf{c}_k}{h}\right) \mathbf{1}(\sign(\hat{\eta}(\mathbf{x}))=v_k) f(\mathbf{x})d\mathbf{x}.
\end{eqnarray}
Therefore
\begin{eqnarray}
	\int_{B_k} \phi\left(\frac{\mathbf{x}-\mathbf{c}_k}{h}\right) \mathbf{1}(\sign(\hat{\eta}(\mathbf{x}))\neq v_k) f(\mathbf{x})d\mathbf{x}\geq \frac{1}{2}\int_{B_k} \phi\left(\frac{\mathbf{x}-\mathbf{c}_k}{h}\right) f(\mathbf{x})d\mathbf{x}\geq \frac{1}{2}ch^d \norm{\phi}_1.
\end{eqnarray}
Hence
\begin{eqnarray}
	R-R^*&\geq& \frac{1}{2}ch^{\beta+d}\norm{\phi}_1\sum_{k=1}^m \text{P}(\hat{v}_k\neq v_k)\nonumber\\
	&=&\frac{1}{2}ch^{\beta+d}\norm{\phi}_1\mathbb{E}[\rho_H(\hat{\mathbf{v}}, \mathbf{v})],
	\label{eq:excessexp}
\end{eqnarray}
in which $\rho_H$ denotes the Hamming distance. Then
\begin{eqnarray}
	\underset{\hat{Y}}{\inf}\underset{M\in \mathcal{M}_\epsilon}{\inf} \underset{(f,\eta)\in \mathcal{P}}{\sup} (R-R^*)\geq \frac{1}{2}h^{\beta+d}\norm{\phi}_1\underset{\hat{\mathbf{v}}}{\inf}\underset{M\in \mathcal{M}_\epsilon}{\inf}\underset{\mathbf{v}\in \mathcal{V}}{\max}\mathbb{E}[\rho_H(\hat{\mathbf{v}}, \mathbf{v})].
\end{eqnarray}

Define
\begin{eqnarray}
	\delta=\underset{M\in \mathcal{M}_\epsilon}{\sup}\underset{\mathbf{v}, \mathbf{v}':\rho_H(\mathbf{v}, \mathbf{v}')= 1}{\max}D_{KL}(P_{(X,Z)_{1:N}|\mathbf{v}}||P_{(X,Z)_{1:N}|\mathbf{v}'}),
\end{eqnarray}
in which $P_{(X,Z)_{1:N}|\mathbf{v}}$ denotes the distribution of $(\mathbf{X}_1,Z_1), \ldots, (\mathbf{X}_N, Z_N)$ with $\eta=\eta_\mathbf{v}$. $D_{KL}$ denotes the KL divergence. Then from \citep{tsybakov2009introduction}, Theorem 2.12(iv),
\begin{eqnarray}
	\underset{\hat{\mathbf{v}}}{\inf}\hspace{1mm} \underset{M}{\inf}\hspace{1mm} \underset{\mathbf{v}\in \mathcal{V}}{\max}\hspace{1mm}\mathbb{E}[\rho_H(\hat{\mathbf{v}}, \mathbf{v})]\geq \frac{m}{2}\left(\frac{1}{2}e^{-\delta}, 1-\sqrt{\frac{\delta}{2}}\right).
	\label{eq:lb-tsy}
\end{eqnarray}
It remains to bound $\delta$. Without loss of generality, suppose $v_1\neq v_1'$, and $v_i=v_i'$ for $i\neq 1$. Then
\begin{eqnarray}
	D_{KL}(P_{(X,Z)_{1:N}|\mathbf{v}}||P_{(X,Z)_{1:N}|\mathbf{v}'})&\overset{(a)}{=}& ND_{KL}(P_{X,Z|\mathbf{v}}||P_{X,Z|\mathbf{v}'})\nonumber\\
	&\overset{(b)}{=}&N\int_{B_1} f(\mathbf{x})D_{KL}(P_{Z|\mathbf{X}=\mathbf{x}, \mathbf{v}}||P_{Z|\mathbf{X}=\mathbf{x}, \mathbf{v}'})d\mathbf{x}\nonumber\\
	&\overset{(c)}{\leq} & N\int_{B_1} f(\mathbf{x})(e^\epsilon-1)^2 \mathbb{TV}^2 (P_{Z|\mathbf{X}=\mathbf{x}, \mathbf{v}}, P_{Z|\mathbf{X}=\mathbf{x}, \mathbf{v}'})d\mathbf{x}\nonumber\\
	&=& N\int_{B_1} f(\mathbf{x})(e^\epsilon-1)^2 \eta_\mathbf{v}^2(\mathbf{x})d\mathbf{x}\nonumber\\
	&=& N(e^\epsilon-1)^2 \int_{B_1} f(\mathbf{x})\phi^2 \left(\frac{\mathbf{x}-\mathbf{c}_1}{h}\right) h^{2\beta}d\mathbf{x}\nonumber\\
	&\overset{(d)}{=}& N(e^\epsilon-1)^2 h^{2\beta+d}\norm{\phi}_2^2.
	\label{eq:klbound1}
\end{eqnarray}
In (a), $P_{X,Z|\mathbf{v}}$ denotes the distribution of a single sample with privatized label $(X, Z)$, with $\eta=\eta_\mathbf{v}$. In (b), $P_{Z|\mathbf{X}=\mathbf{x}, \mathbf{v}}$ denotes the conditional distribution of $Z$ given $\mathbf{X}=\mathbf{x}$, with $\eta=\eta_\mathbf{v}$. (c) uses \citep{duchi2018minimax}, Theorem 1. In (d), $\norm{\phi}_2^2=\int \phi^2(\mathbf{u})d\mathbf{u}$, which is a constant. Moreover,
\begin{eqnarray}
	D_{KL}(P_{X,Z|\mathbf{v}}||P_{X,Z|\mathbf{v}'})&\overset{(a)}{\leq}& D_{KL}(P_{X,Y|\mathbf{v}}||P_{X,Y|\mathbf{v}'})\nonumber\\
	&=&\int_{B_1}f(\mathbf{x})\left[\text{P}(Y=1|\mathbf{v})\ln \frac{\text{P}(Y=1|\mathbf{v})}{\text{P}(Y=1|\mathbf{v}')}+\text{P}(Y=-1|\mathbf{v})\ln \frac{\text{P}(Y=-1|\mathbf{v})}{\text{P}(Y=-1|\mathbf{v}')}\right]d\mathbf{x}\nonumber\\
	&=& \int_{B_1} f(\mathbf{x})\left[\frac{1+\eta_\mathbf{v}(\mathbf{x})}{2}\ln\frac{1+\eta_\mathbf{v}(\mathbf{x})}{1-\eta_\mathbf{v}(\mathbf{x})}+\frac{1-\eta_\mathbf{v}(\mathbf{x})}{2}\ln \frac{1-\eta_\mathbf{v}(\mathbf{x})}{1+\eta_\mathbf{v}(\mathbf{x})}\right] d\mathbf{x}\nonumber\\
	&\overset{(b)}{\leq} & 3\int_{B_1} f(\mathbf{x})\eta_\mathbf{v}^2(\mathbf{x})d\mathbf{x}\nonumber\\
	&\leq & 3h^{2\beta+d}\norm{\phi}_2^2.
	\label{eq:klbound2}
\end{eqnarray}
For (a), note that $Z$ is generated from $Y$. From data processing inequality, (a) holds. For (b), without loss of generality, suppose that $v_1=1$, thus $\eta_\mathbf{v}(\mathbf{x})\geq 0$ in $B_1$. Then $\ln (1+\eta_\mathbf{v}(\mathbf{x}))\leq \eta_\mathbf{v}(\mathbf{x})$. From \eqref{eq:phirange} and \eqref{eq:etav}, $|\eta_\mathbf{v}(\mathbf{x})|\leq 1/2$. Therefore, $-\ln (1-\eta_\mathbf{v}(\mathbf{x}))\leq 2\eta_\mathbf{v}(\mathbf{x})$. Therefore (b) holds.

From \eqref{eq:klbound1} and \eqref{eq:klbound2},
\begin{eqnarray}
	\delta\leq N\left[(e^\epsilon-1)^2 \wedge 3\right] h^{2\beta+d}\norm{\phi}_2^2.
\end{eqnarray}
Let 
\begin{eqnarray}
	h\sim \left(N\left(\epsilon^2\wedge 1\right)\right)^{-\frac{1}{2\beta+d}}.
\end{eqnarray}
Then $\delta \lesssim 1$. From \eqref{eq:lb-tsy}, with $m\sim h^{\gamma \beta-d}$,
\begin{eqnarray}
	\underset{\hat{\mathbf{v}}}{\inf}\underset{M\in \mathcal{M}_\epsilon}{\inf}\underset{\mathbf{v}\in \mathcal{V}}{\max}\mathbb{E}[\rho_H(\hat{\mathbf{v}}, \mathbf{v})]\gtrsim h^{\gamma \beta-d}.
\end{eqnarray}
Hence
\begin{eqnarray}
	\underset{\hat{Y}}{\inf}\underset{M\in \mathcal{M}_\epsilon}{\inf} \underset{(f,\eta)\in \mathcal{P}}{\sup} (R-R^*)\gtrsim h^{\beta+d} h^{\gamma\beta-d}\sim h^{\beta(\gamma+1)}\sim \left[N\left(\epsilon^2\wedge 1\right)\right]^{-\frac{\beta(\gamma+1)}{2\beta+d}}.
\end{eqnarray}
The proof is complete.

\section{Proof of Theorem \ref{thm:clsub}}\label{sec:clsub}
Denote
\begin{eqnarray}
	n_l=\sum_{i=1}^N \mathbf{1}(\mathbf{X}_i\in B_l),
\end{eqnarray}
and for $\mathbf{Z}=M(\mathbf{X}, Y)$, let
\begin{eqnarray}
	\tilde{\eta}_j(\mathbf{x})&:=&\mathbb{E}[\mathbf{Z}(j)|\mathbf{X}=\mathbf{x}]\nonumber\\
	&=&\frac{e^\frac{\epsilon}{2}}{e^\frac{\epsilon}{2}+1} \eta_j(\mathbf{x})+ \frac{1}{e^\frac{\epsilon}{2}+1}(1-\eta_j(\mathbf{x}))
	\label{eq:convert}
\end{eqnarray}
as the number of training samples whose feature vectors fall in $B_l$, and
\begin{eqnarray}
	v_{lj}:=\frac{1}{n_l}\sum_{i:\mathbf{X}_i\in B_l} \tilde{\eta}_j(\mathbf{X}_i).
	\label{eq:vlj}
\end{eqnarray}
Recall \eqref{eq:Slj} that defines $S_{lj}$. From Hoeffding's inequality,
\begin{eqnarray}
	\text{P}\left(|S_{lj}-n_lv_{lj}|>t|\mathbf{X}_{1:N}\right)\leq 2\exp\left[-\frac{2t^2}{n_l}\right],
\end{eqnarray}
in which $\mathbf{X}_{1:N}$ denotes $\mathbf{X}_1,\ldots, \mathbf{X}_N$.

Define
\begin{eqnarray}
	v_l^*:=\underset{j}{\max}v_{lj},
\end{eqnarray}
and
\begin{eqnarray}
	c_l^*:=\underset{j}{\arg\max}\;v_{lj}.
\end{eqnarray}
Now we bound $\text{P}(v_l^*-v_{lc_l}>t)$, in which $c_l$ is defined in \eqref{eq:cl}. $c_l$ can be viewed as the prediction at the $l$-th cube. We would like to show that the even if the prediction is wrong, the value (i.e. conditional probability) of the predicted class is close to the ground truth. $v_l^*-v_{lc_l}>t$ only if $\exists j$, $v_l^*-v_{lj}>t$, and $S_{lj}>S_{lc_l^*}$. Therefore either $S_{lj}-n_lv_{lj}>t/2$ or $S_{lc_l^*}-n_lv_l^*>t/2$ holds. Hence

\begin{eqnarray}
	\text{P}\left(v_l^*-v_{lc_l}\geq t\right) \leq \text{P}\left(\exists j, |S_{lj} - n_lv_{lj}|\geq \frac{1}{2}n_l t\right)\leq 2K\exp\left(-\frac{1}{2}n_lt^2\right).
\end{eqnarray}
Define
\begin{eqnarray}
	t_0=\sqrt{\frac{2\ln(2K)}{n_l}}.
\end{eqnarray}
Then
\begin{eqnarray}
	v_l^*-\mathbb{E}[v_{lc_l}|\mathbf{X}_{1:N}] &=&\int_0^1 \text{P}(v_l^*-v_{lc_l}> t)dt\nonumber\\
	&\leq & t_0+\int_{t_0}^\infty 2K\exp\left(-\frac{1}{2}n_lt^2\right) dt\nonumber\\
	&\overset{(a)}{\leq} &	t_0+2\sqrt{\frac{2\pi}{n_l}}K\exp\left(-\frac{1}{2}n_lt_0^2\right)\nonumber\\
	&=&\sqrt{\frac{2\ln(2K)}{n_l}}+\sqrt{\frac{2\pi}{n_l}}\nonumber\\
	&\leq & 3\sqrt{\frac{\ln (2K)}{n_l}}.
\end{eqnarray}
In (a), we use the inequality
\begin{eqnarray}
	\int_t^\infty e^{-\frac{u^2}{2\sigma^2}} du \leq \sqrt{2\pi}\sigma e^{-\frac{t^2}{2\sigma^2}}.
\end{eqnarray}

Now we bound the excess risk.
\begin{eqnarray}
	R-R^*&=& \int \left(\eta^*(\mathbf{x}) - \mathbb{E}[\eta_{c(\mathbf{x})}(\mathbf{x})]\right) f(\mathbf{x})d\mathbf{x}\nonumber\\
	&=&\sum_{l=1}^G \int_{B_l}\left(\eta^*(\mathbf{x}) - \mathbb{E}[\eta_{c(\mathbf{x})}(\mathbf{x})]\right) f(\mathbf{x})d\mathbf{x}.
\end{eqnarray}
We need to bound $\int_{B_l}\left(\eta^*(\mathbf{x}) - \mathbb{E}[\eta_{c(\mathbf{x})}(\mathbf{x})]\right) f(\mathbf{x})d\mathbf{x}$ for each $l$. From Assumption \ref{ass:multiclass}(a), for any $\mathbf{x}, \mathbf{x}'\in B_l$, the distance is bounded by $\norm{\mathbf{x}-\mathbf{x}'}\leq \sqrt{d}L$. Thus
\begin{eqnarray}
	|\eta_j(\mathbf{x})-\eta_j(\mathbf{x}')|\leq L_dh^\beta,
	\label{eq:maxdiff}
\end{eqnarray}
in which $L_d$ is defined as $L_d:=L\sqrt{d}$. From \eqref{eq:maxdiff} and \eqref{eq:convert},
\begin{eqnarray}
	|\tilde{\eta}_j(\mathbf{x}) - \tilde{\eta}_j(\mathbf{x}')|\leq \frac{e^\frac{\epsilon}{2}-1}{e^\frac{\epsilon}{2}+1}L_d h^\beta.
\end{eqnarray}
Define
\begin{eqnarray}
	\tilde{\eta}^*(\mathbf{x}) = \underset{j}{\max}\tilde{\eta}_j(\mathbf{x}),
\end{eqnarray}
then
\begin{eqnarray}
	\eta^*(\mathbf{x}) - \mathbb{E}[\eta_{c_l}(\mathbf{x})|\mathbf{X}_{1:N}]&\leq &\frac{e^\frac{\epsilon}{2}+1}{e^\frac{\epsilon}{2}-1}\left(\tilde{\eta}^*(\mathbf{x}) - \mathbb{E}[\tilde{\eta}_{c_l}(\mathbf{x})|\mathbf{X}_{1:N}]\right)\nonumber\\
	&\leq & \frac{e^\frac{\epsilon}{2}+1}{e^\frac{\epsilon}{2}-1}\left(v_l^*-\mathbb{E}[v_{lc_l}|\mathbf{X}_{1:N}]\right)+2L_dh^\beta\nonumber\\
	&\leq & 3\frac{e^\frac{\epsilon}{2}+1}{e^\frac{\epsilon}{2}-1}\sqrt{\frac{2\ln (2K)}{n_l}}+2L_dh^\beta.
\end{eqnarray}
Take integration over cube $B_l$, we get
\begin{eqnarray}
	&&\int_{B_l} \left(\eta^*(\mathbf{x}) - \mathbb{E}[\eta_{c_l}(\mathbf{x})]\right) f(\mathbf{x})d\mathbf{x}\nonumber\\
	&\leq& \text{P}\left(n_l<\frac{1}{2}Np(B_l)\right)\int_{B_l} \left(\eta^*(\mathbf{x}) - \mathbb{E}[\eta_{c_l}(\mathbf{x})|n_l<\frac{1}{N}p(B_l)]\right) f(\mathbf{x})d\mathbf{x}\nonumber\\
	&&+\int_{B_l}\left(\eta^*(\mathbf{x}) -\mathbb{E}[\eta_{c_l}(\mathbf{x})|n_l\geq \frac{1}{N}p(B_l)]\right) f(\mathbf{x})d\mathbf{x}\nonumber\\
	&\leq & p(B_l)e^{-\frac{1}{2}(1-\ln 2)Np(B_l)}+\left[3\frac{e^\frac{\epsilon}{2}+1}{e^\frac{\epsilon}{2}-1}\sqrt{\frac{2\ln (2K)}{Np(B_l)}}+2L^dh^\beta\right]p(B_l),
	\label{eq:b1}
\end{eqnarray}
in which $p(B_l)=\text{P}(\mathbf{X}\in B_l)$ is the probability mass of $B_l$. Moreover, define
\begin{eqnarray}
	\Delta_l=\underset{\mathbf{x}\in B_l}{\inf} \left(\eta^*(\mathbf{x}) - \eta_s(\mathbf{x})\right),
\end{eqnarray}
and
\begin{eqnarray}
	\tilde{\Delta}_l=\underset{\mathbf{x}\in B_l}{\inf} \left(\tilde{\eta}^*(\mathbf{x}) - \tilde{\eta}_s(\mathbf{x})\right)=\frac{e^\frac{\epsilon}{2}-1}{e^\frac{\epsilon}{2}+1}\Delta_l,
\end{eqnarray}
in which the $\tilde{\eta}_s$ is the second largest value of $\tilde{\eta}_j$ among $j=1,\ldots, K$, which follows the definition of $\eta_s$.

If $\Delta_l>0$, then $c^*(\mathbf{x})$ is the same over $B_l$. Then either $v_l^*-v_{lc_l} = 0$ or $v_l^*-v_{lc_l}\geq \Delta_l$ holds. Hence
\begin{eqnarray}
	&&\tilde{\eta}^*(\mathbf{x}) - \mathbb{E}[\tilde{\eta}_{c_l}(\mathbf{x})|\mathbf{X}_{1:N}]\nonumber\\
	&=&\int_0^1 \text{P}\left(\tilde{\eta}^*(\mathbf{x}) - \tilde{\eta}_{c_l}(\mathbf{x})> t|\mathbf{X}_{1:N}\right) dt\nonumber\\
	&\leq & \int_0^1 \text{P}\left(v_l^*-v_{lc_l}>t-2L_dh^\beta\frac{e^\frac{\epsilon}{2}+1}{e^\frac{\epsilon}{2}-1}|\mathbf{X}_{1:N}\right) dt\nonumber\\
	&\leq & \int_0^{\tilde{\Delta}_l+2L_dh^\beta }\text{P}(v_l^*-v_{lc_l}\geq \Delta_l)dt+\int_{\tilde{\Delta}_l+2L_dh^\beta}^\infty 2K\exp\left[-\frac{1}{2}n_l(t-2L_dh^\beta)^2\right] dt\nonumber\\
	&\leq & 2K\exp\left(-\frac{1}{2}n_l\tilde{\Delta}_l^2\right)(\tilde{\Delta}_l+2L_dh^\beta\frac{e^\frac{\epsilon}{2}+1}{e^\frac{\epsilon}{2}-1})+2K\sqrt{\frac{2\pi}{n_l}}\exp\left(-\frac{1}{2}n_l\tilde{\Delta}_l^2\right)\nonumber\\
	&=&\left[2K\left(\tilde{\Delta}_l+2L_dh^\beta\frac{e^\frac{\epsilon}{2}+1}{e^\frac{\epsilon}{2}-1}\right)+2K\sqrt{\frac{2\pi}{n_l}}\right] \exp\left(-\frac{1}{2}n_l\tilde{\Delta}_l^2\right).
	\label{eq:b2}
\end{eqnarray}

Take expectation over $\mathbf{X}_{1:N}$, we get
\begin{eqnarray}
	&&\int_{B_l} (\eta^*(\mathbf{x}) - \mathbb{E}[\eta_{c_l}(\mathbf{x})]) f(\mathbf{x})d\mathbf{x}\leq p(B_l) e^{-\frac{1}{2}(1-\ln 2)Np(B_l)}\nonumber\\
	&&+2Kp(B_l)\left(\Delta_l+2L_dh^\beta+ \frac{e^\frac{\epsilon}{2}+1}{e^\frac{\epsilon}{2}-1}\sqrt{\frac{2\pi}{Np(B_l)}}\right)\exp\left[-\frac{1}{2}Np(B_l)\Delta_l^2\left(\frac{e^\frac{\epsilon}{2}-1}{e^\frac{\epsilon}{2}+1}\right)^2 \right].
	\label{eq:b2i}
\end{eqnarray}

Define
\begin{eqnarray}
	a_l=\left[3\frac{e^\frac{\epsilon}{2}+1}{e^\frac{\epsilon}{2}-1}\sqrt{\frac{2\ln(2K)}{cNh^d}}+2L_dh^\beta\right] p(B_l),
\end{eqnarray}
and
\begin{eqnarray}
	b_l=2Kp(B_l)\left(\Delta_l+2L_dh^\beta+ \frac{e^\frac{\epsilon}{2}+1}{e^\frac{\epsilon}{2}-1}\sqrt{\frac{2\pi}{cNh^d}}\right)\exp\left[-\frac{1}{2}cNh^d\Delta_l^2\left(\frac{e^\frac{\epsilon}{2}-1}{e^\frac{\epsilon}{2}+1}\right)^2 \right].
\end{eqnarray}
From Assumption \ref{ass:multiclass}(c), $p(B_l)\geq cNh^d$. Therefore, from \eqref{eq:b1} and \eqref{eq:b2i}
\begin{eqnarray}
	R-R^*&\leq & \sum_{l=1}^G \left[p(B_l)e^{-\frac{1}{2}(1-\ln 2)Np(B_l)}+\min\{a_l, b_l\}\right]\nonumber\\
	&\leq & e^{-\frac{1}{2}(1-\ln 2)cNh^d} + \sum_{l=1}^G \min\{a_l, b_l\}.
	\label{eq:Rdecomp}
\end{eqnarray}
It remains to bound $\sum_{l=1}^G \min\{a_l, b_l\}$. Note that for all $\mathbf{x}\in B_l$, $\eta^*(\mathbf{x}) - \eta_s(\mathbf{x})\leq \Delta_l+2L_dh^\beta$. Thus
\begin{eqnarray}
	\sum_{l:\Delta_l\leq u} p(B_l)\leq \text{P}\left(\eta^*(\mathbf{X}) - \eta_s(\mathbf{X})\leq u+2L_dh^\beta\right) \leq M(u+2L_dh^\beta)^\gamma.
\end{eqnarray}
Let
\begin{eqnarray}
	\Delta_0=\frac{e^\frac{\epsilon}{2}+1}{e^\frac{\epsilon}{2}-1}\sqrt{\frac{2\ln(2K)}{cNh^d}},
\end{eqnarray}
and
\begin{eqnarray}
	I_0&=&\{l|\Delta_l\leq \Delta_0\},\\
	I_k&=&\{l|2^{k-1}\Delta_0<\Delta_l\leq 2^k\Delta_0\}, k=1,2,\ldots
\end{eqnarray}
Then
\begin{eqnarray}
	\underset{l\in I_0}\min\{a_l, b_l\}&\leq& \sum_{l\in I_0} a_l \nonumber\\
	&\leq & \left(\sum_{l:\Delta_l\leq \Delta_0}p(B_l)\right)\left[3\frac{e^\frac{\epsilon}{2}+1}{e^\frac{\epsilon}{2}-1}\sqrt{\frac{2\ln(2K)}{cNh^d}}+2L_dh^\beta\right]\nonumber\\
	&\leq & M(\Delta_0+2L_dh^\beta)^\gamma \left[3\frac{e^\frac{\epsilon}{2}+1}{e^\frac{\epsilon}{2}-1}\sqrt{\frac{2\ln(2K)}{cNh^d}}+2L_dh^\beta\right]\nonumber\\
	&\lesssim &\left(\frac{1}{\epsilon^2\wedge 1}\frac{\ln K}{Nh^d}\right)^\frac{\gamma+1}{2}+ h^{\beta(\gamma+1)}.
	\label{eq:I0}
\end{eqnarray}

For $I_k$ with $k\geq 1$,
\begin{eqnarray}
	\underset{l\in I_k}\min\{a_l, b_l\}&\leq & \sum_{l\in I_k} b_l\nonumber\\
	&\leq & \left(\sum_{l:\Delta_l\leq 2^k\Delta_0}p(B_l)\right) \cdot 2K\left(2^k \Delta_0+2L_dh^\beta+\Delta_0\right)\exp\left[-\frac{1}{2}\left(\frac{e^\frac{\epsilon}{2}-1}{e^\frac{\epsilon}{2}+1}\right)^2cNh^d 2^{2k-2}\Delta_0^2 \right]\nonumber\\
	&\leq & M(2^k \Delta_0+2L_dh^\beta)^\gamma \left((2^k+1)\Delta_0+2L_dh^\beta\right) (2K)^{-2^{2k-2} + 1}\nonumber\\
	&\leq & M(\Delta_0+2L_dh^\beta)^{\gamma+1} 2^{k\gamma+k-2^{2k-2} + 2}.
\end{eqnarray}
It is obvious that there exists a finite constant $C'<\infty$ that depends on $\gamma$, such that
\begin{eqnarray}
	\sum_{k=1}^\infty 2^{k\gamma+k-2^{2k-2} + 2}\leq C'.
\end{eqnarray}
Therefore
\begin{eqnarray}
	\sum_{k=1}^\infty \sum_{l\in I_k}\min\{a_l, b_l\}\lesssim \left(\frac{1}{\epsilon^2\wedge 1}\frac{\ln K}{Nh^d}\right)^\frac{\gamma+1}{2}+ h^{\beta(\gamma+1)}.
	\label{eq:Ik}
\end{eqnarray}
Combine \eqref{eq:Rdecomp}, \eqref{eq:I0} and \eqref{eq:Ik},
\begin{eqnarray}
	R-R^*\lesssim \left(\frac{1}{\epsilon^2\wedge 1}\frac{\ln K}{Nh^d}\right)^\frac{\gamma+1}{2}+ h^{\beta(\gamma+1)}.
\end{eqnarray}
To minimize the overall excess risk, let
\begin{eqnarray}
	h\sim \left(\frac{N(\epsilon^2 \wedge 1)}{\ln K}\right)^{-\frac{1}{2\beta+d}},
\end{eqnarray}
then
\begin{eqnarray}
	R-R^*\lesssim \left(\frac{N(\epsilon^2 \wedge 1)}{\ln K}\right)^{-\frac{\beta(\gamma+1)}{2\beta+d}}.
\end{eqnarray}
Compare to the simple random response method, the cube splitting avoids the polynomial decrease over $K$.
\section{Proof of Theorem \ref{thm:clslb-cdp}}\label{sec:clslb-cdp}
We still divide the support as the label LDP setting, except that the value of $h$ is different, which will be specified later in this section. Note that \eqref{eq:excessexp} still holds here. Let $\mathbf{V}$ takes values from $\{-1,1\}^m$ randomly with equal probability, and $V_k$ is the $k$-th element. Then $\eta_\mathbf{V}(\mathbf{x})$ is a random function. The corresponding random output of hypothesis testing is denoted as $\hat{V}_k$, which is calculated by \eqref{eq:vkhat}. Then
\begin{eqnarray}
	\underset{\mathcal{A}\in \mathcal{A}_\epsilon}{\inf}\underset{(f,\eta)\in \mathcal{F}_{cls}}{\sup} (R-R^*)&\geq & \frac{1}{2} ch^{\beta+d}\norm{\phi}_1\underset{\mathcal{A}\in \mathcal{A}_\epsilon}{\inf}\underset{\mathbf{v}\in \mathcal{V}}{\max}\sum_{k=1}^m \text{P}(\hat{v}_k\neq v_k)\nonumber\\
	&\geq &\frac{1}{2}h^{\beta+d}\norm{\phi}_1 \underset{\mathcal{A}\in \mathcal{A}_\epsilon}{\inf}\sum_{k=1}^m \text{P}(\hat{V}_k\neq V_k)\nonumber\\
	&=&\frac{1}{2}h^{\beta+d}\norm{\phi}_1 \sum_{k=1}^m \underset{\mathcal{A}\in \mathcal{A}_\epsilon}{\inf} \text{P}(\hat{V}_k\neq V_k),
	\label{eq:mmx1}
\end{eqnarray}
in which the last step holds since $\hat{V}_k$ for different $k$ are calculated independently.

It remains to derive a lower bound of $\text{P}(\hat{V}_k\neq V_k)$. Denote $n_k$ as the number of samples falling in $B_k$, $\bar{Y}_k$ as the average label values in $B_k$:
\begin{eqnarray}
	n_k&:=&\sum_{i=1}^N \mathbf{1}(\mathbf{X}_i\in B_k),\label{eq:nk}\\
	\bar{Y}_k&:=&\frac{1}{n_k}\sum_{i=1}^N Y_i\mathbf{1}(X_i\in B_k).
	\label{eq:ybark}
\end{eqnarray}
Moreover, define
\begin{eqnarray}
	a_k &:=& \frac{1}{n_k}\sum_{i=1}^N |\eta(\mathbf{X}_i)|\mathbf{1}(\mathbf{X}_i\in B_k)\nonumber\\
	&=&\frac{h^\beta}{n_k}\sum_{i=1}^N \phi\left(\frac{\mathbf{X}_i-\mathbf{c}_k}{h}\right)\mathbf{1}(\mathbf{X}_i\in B_k),
	\label{eq:ak}
\end{eqnarray}
in which the last step comes from \eqref{eq:etav}. Then
\begin{eqnarray}
	\mathbb{E}[\bar{Y}_k|\mathbf{X}_{1:N}, V_k]=V_k a_k,
\end{eqnarray}
in which $\mathbf{X}_{1:N}$ means $\mathbf{X}_1,\ldots, \mathbf{X}_N$. We then show the following lemma.
\begin{lem}\label{lem:p}
	If $0\leq t\leq \ln2/(\epsilon n_k)$ and $n_kt$ is an integer, then
	\begin{eqnarray}
		\text{P}(\hat{V}_k=1|\mathbf{X}_{1:N}, \bar{Y}_k=-t)+ \text{P}(\hat{V}_k=-1|\mathbf{X}_{1:N}, \bar{Y}_k=t) \geq \frac{2}{3}.
		\label{eq:p}
	\end{eqnarray}
\end{lem}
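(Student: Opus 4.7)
The plan is to prove the lemma by combining the group privacy property of $\epsilon$-label CDP with a short two-variable inequality on conditional probabilities.

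First I would reduce the comparison to a pair of datasets differing in a controlled number of labels. Since each label lies in $\{-1, 1\}$, a single label flip changes $\bar{Y}_k$ by $2/n_k$, so any two label vectors with identical features that achieve $\bar{Y}_k = -t$ and $\bar{Y}_k = t$ respectively must differ in exactly $n_k t$ coordinates inside $B_k$; the integrality condition on $n_k t$ ensures such a pair is admissible. Group privacy then gives, for any $\mathcal{A} \in \mathcal{A}_\epsilon$, any event $S$, and any two such realizations $D, D'$, the bound $\text{P}(\hat{V}_k \in S \mid D) \leq e^{n_k t \epsilon} \text{P}(\hat{V}_k \in S \mid D')$, and the hypothesis $t \leq \ln 2/(\epsilon n_k)$ collapses the multiplicative factor to at most $2$.

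Writing $p := \text{P}(\hat{V}_k = 1 \mid \mathbf{X}_{1:N}, \bar{Y}_k = -t)$ and $q := \text{P}(\hat{V}_k = 1 \mid \mathbf{X}_{1:N}, \bar{Y}_k = t)$, I would observe that \eqref{eq:p} is equivalent to $q - p \leq 1/3$. Applying the bound from the previous step to the event $\{\hat{V}_k = 1\}$ in the direction $-t \to t$ yields $q \leq 2p$, while applying it to the complementary event $\{\hat{V}_k = -1\}$ in the same direction yields $1 - p \leq 2(1 - q)$, i.e., $q \leq (1 + p)/2$. Therefore $q - p \leq \min\{p,\, (1-p)/2\}$, and this minimum is bounded by $1/3$ uniformly in $p \in [0,1]$ (the two linear functions cross at $p = 1/3$, where both equal $1/3$), which closes the argument.

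I expect the main subtlety to lie in the first step. The quantities $\text{P}(\hat{V}_k = \cdot \mid \mathbf{X}_{1:N}, \bar{Y}_k = \pm t)$ are averages of $\text{P}(\hat{V}_k = \cdot \mid \mathbf{X}_{1:N}, Y_{1:N})$ over the posterior distribution of labels, whereas group privacy is a pointwise bound between specific dataset pairs. To transfer the bound to the averaged quantities, I would fix the labels outside $B_k$ and exhibit a bijective coupling between inside-$B_k$ label configurations achieving $\bar{Y}_k = -t$ and those achieving $\bar{Y}_k = t$, with each pair differing in exactly $n_k t$ positions; such a coupling exists because the two sets of configurations have equal hypergeometric sizes $\binom{n_k}{n_k(1-t)/2} = \binom{n_k}{n_k(1+t)/2}$. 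Under this coupling the pointwise group-privacy inequality persists under expectation, yielding the averaged form used in the two-variable inequality above.
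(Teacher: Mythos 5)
Your proposal is correct and follows essentially the same route as the paper: group privacy over the $n_k t$ flipped labels, the hypothesis $t\leq \ln 2/(\epsilon n_k)$ reducing the multiplicative factor to $2$, and then elementary algebra on the two resulting inequalities (your bounds $q\leq 2p$ and $q\leq (1+p)/2$ are exactly the paper's two displayed inequalities, which it simply adds rather than minimizing over $p$). Your final paragraph on coupling the label configurations consistent with $\bar{Y}_k=-t$ to those consistent with $\bar{Y}_k=t$ addresses a real subtlety that the paper's own proof glosses over by applying the pointwise group-privacy bound directly to the conditional probabilities, so your treatment is, if anything, more careful than the original.
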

\begin{proof}
	Construct $D'$ by changing the label values of $l=n_kt $ items from these $n_k$ samples falling in $B_k$, from $-1$ to $1$. Then the average label values in $B_k$ is denoted as $\bar{Y}_k'$ after such replacement. $\hat{V}_k$ also becomes $\hat{V}_k'$. Then from the $\epsilon$-label CDP requirement,
	\begin{eqnarray}
		\text{P}(\hat{V}_k = 1|\mathbf{X}_{1:N}, \bar{Y}_k=-t) &\overset{(a)}{\geq} & e^{-l\epsilon}\text{P}\left(\hat{V}_k'=1|\mathbf{X}_{1:N}, \bar{Y}_k'=-t+\frac{2l}{n_k}\right)\nonumber\\
		&\overset{(b)}{\geq} & e^{-l\epsilon}\text{P}\left(\hat{V}_k=1|\mathbf{X}_{1:N}, \bar{Y}_k=-t+\frac{2l}{n_k}\right) \nonumber\\
		&\geq & e^{-n_kt\epsilon}\left[1-\text{P}\left(\hat{V}_k=-1|\mathbf{X}_{1:N}, \bar{Y}_k=-t+\frac{2l}{n_k}\right)\right] \nonumber\\
		&\geq &\frac{1}{2}\left[1-\text{P}\left(\hat{V}_k=-1|\mathbf{X}_{1:N}, \bar{Y}_k=t\right)\right].
		\label{eq:lb1}
	\end{eqnarray}
Here (a) uses the group privacy property. The Hamming distance between $D$ and $D'$ is $l$, thus the ratio of probability between $D$ and $D'$ is within $[e^{-l\epsilon}, e^{l\epsilon}]$. (b) holds because the algorithm does not change after changing $D$ to $D'$. Similarly,
	\begin{eqnarray}
		\text{P}(\hat{V}_k = -1|\mathbf{X}_{1:N}, \bar{Y}_k=t)\geq \frac{1}{2}\left[1-\text{P}\left(\hat{V}_k=1|\mathbf{X}_{1:N}, \bar{Y}_k=-t\right)\right].
		\label{eq:lb2}	
	\end{eqnarray}
	Then \eqref{eq:p} can be shown by adding \eqref{eq:lb1} and \eqref{eq:lb2}.
\end{proof}

Now we use Lemma \ref{lem:p} to bound the excess risk. With sufficiently large $n_k$, $\hat{Y}_k$ will be close to Gaussian distribution with mean $a_k$. To be more rigorous, by Berry-Esseen theorem \citep{berry1941accuracy}, for some absolute constant $C_E$,
\begin{eqnarray}
	\text{P}\left(\bar{Y}_k\leq a_k|\mathbf{X}_{1:N}, V_k=1\right) \geq \frac{1}{2}-\frac{C_E}{\sqrt{n_k}}.
	\label{eq:psmally}
\end{eqnarray}
Similarly,
\begin{eqnarray}
	\text{P}\left(\bar{Y}_k\geq -a_k|\mathbf{X}_{1:N}, V_k=-1\right)\geq \frac{1}{2}-\frac{C_E}{\sqrt{n_k}}.
	\label{eq:plargey}
\end{eqnarray}
We first analyze cubes with 
\begin{eqnarray}
	n_k>16C_E^2, a_k<\frac{\ln 2}{\epsilon n_k}.
	\label{eq:acond}
\end{eqnarray}
Under condition \eqref{eq:acond}, the right hand side of \eqref{eq:psmally} and \eqref{eq:plargey} are at least $1/4$. Therefore
\begin{eqnarray}
	\text{P}(\hat{V}_k\neq V_k|\mathbf{X}_{1:N})&=& \frac{1}{2}\text{P}(\hat{V}_k=1|\mathbf{X}_{1:N}, V_k=-1)+\frac{1}{2}\text{P}(\hat{V}_k=-1|\mathbf{X}_{1:N},V_k=1)\nonumber\\
	&\geq &\frac{1}{8}\text{P}\left(\hat{V}_k=1|\mathbf{X}_{1:N},\bar{Y}_k\geq - \frac{\ln 2}{\epsilon n_k}\right) + \frac{1}{8}\text{P}\left(\hat{V}_k=-1|\mathbf{X}_{1:N},\bar{Y}_k\leq \frac{\ln 2}{\epsilon n_k}\right)\nonumber\\
	&\geq & \frac{1}{12}.
\end{eqnarray}
From \eqref{eq:mmx1},
\begin{eqnarray}
	\underset{\mathcal{A}\in \mathcal{A}_\epsilon}{\inf}\underset{(f,\eta)\in \mathcal{F}_{cls}}{\sup} (R-R^*)\geq \frac{1}{2}h^{\beta+d}\norm{\phi}_1\sum_{k=1}^m \frac{1}{12} \text{P}\left(a_k<\frac{\ln 2}{\epsilon n_k}, n_k>16C_E^2 \right).
\end{eqnarray}
From \eqref{eq:phirange}, \eqref{eq:ak} and \eqref{eq:nk}, $a_k\leq h^\beta$. Therefore
\begin{eqnarray}	\underset{\mathcal{A}\in \mathcal{A}_\epsilon}{\inf}\underset{(f,\eta)\in \mathcal{F}_{cls}}{\sup} (R-R^*)\geq \frac{1}{24} h^{\beta+d}\norm{\phi}_1\sum_{k=1}^m \text{P}\left(16C_E^2<n_k<\frac{\ln 2}{\epsilon h^\beta}\right).
	\label{eq:mmx2}
\end{eqnarray}
Recall that each cube has probability mass $ch^d$. Select $h$ such that
\begin{eqnarray}
	2Nch^d= \frac{\ln 2}{\epsilon h^\beta}.
	\label{eq:ub}
\end{eqnarray}
From Chernoff inequality, $16C_E^2<n_k<\ln 2/(\epsilon h^\beta)$ holds with high probability. \eqref{eq:ub} yields
\begin{eqnarray}
	h\sim  (\epsilon N)^{-\frac{1}{d+\beta}}.
	\label{eq:h}
\end{eqnarray}
Recall the bound of $m$ in \eqref{eq:mbound}. Let $m\sim h^{\gamma \beta - d}$, then \eqref{eq:mmx2} becomes
\begin{eqnarray}
	\underset{\mathcal{A}\in \mathcal{A}_\epsilon}{\inf}\underset{(f,\eta)\in \mathcal{F}_{cls}}{\sup} (R-R^*)&\gtrsim& h^{\beta(\gamma+1)}\nonumber\\
	&\gtrsim &  (\epsilon N)^{-\frac{\beta(\gamma+1)}{d+\beta}}.
\end{eqnarray}
Moreover, the standard lower bound for classification \citep{tsybakov2009introduction} is 
\begin{eqnarray}
	\underset{\mathcal{A}\in \mathcal{A}_\epsilon}{\inf}\underset{(f,\eta)\in \mathcal{F}_{cls}}{\sup} (R-R^*) \gtrsim N^{-\frac{\beta(\gamma+1)}{2\beta + d}}.
\end{eqnarray}
Therefore
\begin{eqnarray}
	\underset{\mathcal{A}\in \mathcal{A}_\epsilon}{\inf}\underset{(f,\eta)\in \mathcal{F}_{cls}}{\sup} (R-R^*) &\gtrsim & N^{-\frac{\beta(\gamma+1)}{2\beta + d}}+(\epsilon N)^{-\frac{\beta(\gamma+1)}{d+\beta}}.
\end{eqnarray}
\section{Proof of Theorem \ref{thm:clsub-cdp}}\label{sec:clsub-cdp}
Denote
\begin{eqnarray}
	n_l^*=\max_j n_{lj},
\end{eqnarray}
\begin{eqnarray}
	n_l:=\sum_{j=1}^K n_{lj}=\sum_{i=1}^N \mathbf{1}(\mathbf{X}_i\in B_l).
\end{eqnarray}
For all $j$ such that $n_l^*-n_{lj}>t$,
\begin{eqnarray}
	\text{P}(c_l=j|\mathbf{X}_{1:N}, Y_{1:N})&=&\frac{e^{\epsilon n_{lj}/ 2}}{\sum_{k=1}^K e^{\epsilon n_{lk}/ 2}}\nonumber\\
	&\leq & \frac{e^{\epsilon n_l^*/2}}{\sum_{k=1}^K e^{\epsilon n_{lk}/ 2}} e^{-\frac{1}{2}\epsilon t}\nonumber\\
	&\leq & e^{-\frac{1}{2}\epsilon t}.
\end{eqnarray}
Therefore
\begin{eqnarray}
	\text{P}(n_l^*-n_{lc_l} > t)=\sum_{j:n_l^*-n_{lj}>t} \text{P}(c_l=j|\mathbf{X}_{1:N}, Y_{1:N})\leq Ke^{-\frac{1}{2}\epsilon t}.
\end{eqnarray}
Hence
\begin{eqnarray}
	\mathbb{E}[n_l^*-n_{lc_l}] &=&\int_0^\infty \text{P}(n_l^*-n_{lj} > t)dt\nonumber\\
	&\leq & \int_0^{2\ln K / \epsilon}1 dt+\int_{2\ln K/\epsilon}^\infty Ke^{-\frac{1}{2}\epsilon t} dt\nonumber\\
	&=& \frac{2}{\epsilon}(\ln K + 1).
\end{eqnarray}
Define
\begin{eqnarray}
	v_{lj} = \frac{1}{n_l}\sum_{i=1}^N \mathbf{1}(\mathbf{X}_i\in B_l)\eta_j(\mathbf{X}_i),
\end{eqnarray}
then
\begin{eqnarray}
	\mathbb{E}[n_{lj}|\mathbf{X}_{1:N}]=n_lv_{lj}.
\end{eqnarray}
From Hoeffding's inequality,
\begin{eqnarray}
	\text{P}(|n_{lj} - n_lv_{lj}|>t)\leq 2 e^{-\frac{1}{2n_l} t^2}.
\end{eqnarray}
Thus
\begin{eqnarray}
	\mathbb{E}\left[\max_j |n_{lj}-n_lv_{lj}|\right]&=&\int_0^\infty \text{P}\left(\cup_{j=1}^K \left\{|n_{lj}-n_lv_{lj}|>t \right\}\right) dt\nonumber\\
	&\leq & \int_0^\infty \min\left(1, 2Ke^{-\frac{1}{2n_l}t^2}\right) dt\nonumber\\
	&=& \sqrt{2n_l\ln(2K)}+\int_{\sqrt{2n_l\ln (2K)}}^\infty 2Ke^{-\frac{1}{2n_l} t^2} dt\nonumber\\
	&<& 2\sqrt{2n_l\ln(2K)},
\end{eqnarray}
in which the last step uses the inequality $\int_t^\infty e^{-u^2/(2\sigma^2)} du\leq \sqrt{2\pi} \sigma e^{-t^2/(2\sigma^2)}$. Then
\begin{eqnarray}
	\mathbb{E}[v_l^*-v_{lc_l}|\mathbf{X}_{1:N}]&=& \frac{1}{n_l}\mathbb{E}[n_lv_l^*-n_lv_{lc_l}]\nonumber\\
	&=&\frac{1}{n_l}\mathbb{E}\left[n_l^*-n_{lc_l}+n_lv_l^*-n_l^*+n_{lc_l}-n_lv_{lc_l}\right]\nonumber\\
	&\leq & \frac{1}{n_l}\mathbb{E}[n_l^*-n_{lc_l}]+\frac{2}{n_l}\mathbb{E}\left[\max_j |n_{lj}-n_lv_{lj}|\right]\nonumber\\
	&\leq & \frac{2}{\epsilon n_l}(\ln K + 1)+4\sqrt{\frac{2\ln(2K)}{n_l}}.
\end{eqnarray}
By H{\"o}lder continuity assumption (Assumption \ref{ass:multiclass}(a)), for $\mathbf{x}\in B_l$, 
\begin{eqnarray}
	|v_{lj}-\eta_j(\mathbf{x})|\leq \frac{1}{n_l}\sum_{i=1}^N \mathbf{1}(\mathbf{X}_i\in B_l)|\eta_j(\mathbf{X}_i)-\eta_j(\mathbf{x})|\leq L_dh^\beta,
\end{eqnarray}
in which $L_d=L\sqrt{d}$, $L$ is the constant in Assumption \ref{ass:multiclass}(a). Thus
\begin{eqnarray}
	\mathbb{E}[\eta^*(\mathbf{x})- \eta_{c_l}(\mathbf{x})|\mathbf{X}_{1:N}]\leq \frac{2}{\epsilon n_l}(\ln K + 1)+4\sqrt{\frac{2\ln(2K)}{n_l}}+2L_dh^\beta.
\end{eqnarray}
Now take integration over $B_l$.
\begin{eqnarray}
	&&\int_{B_l} \left(\eta^*(\mathbf{x})-\mathbb{E}[\eta_{c_l}(\mathbf{x})]\right) f(\mathbf{x})d\mathbf{x}\nonumber\\
	&\leq & \text{P}\left(n_l<\frac{1}{2} Np(B_l)\right) \int_{B_l} \left(\eta^*(\mathbf{x}) - \mathbb{E}\left[\eta_{c_l}(\mathbf{x}) |n_l<\frac{1}{2}Np(B_l)\right]\right) f(\mathbf{x})d\mathbf{x}\nonumber\\
	&& + \int_{B_l} \left(\eta^*(\mathbf{x})- \mathbb{E}\left[\eta_{c_l}(\mathbf{x})|n_l\geq \frac{1}{2}Np(B_l)\right]\right) f(\mathbf{x})d\mathbf{x}\nonumber\\
	&\leq & p(B_l)\exp\left[-\frac{1}{2}(1-\ln 2)Np(B_l)\right] + \left[\frac{2(\ln K+1)}{\epsilon Np(B_l)}+4\sqrt{\frac{2\ln(2K)}{Np(B_l)}}+2L_dh^\beta\right] p(B_l),\nonumber\\
	\label{eq:b1cdp}
\end{eqnarray}
in which $p(B_l)=\text{P}(\mathbf{X}\in B_l)=\int_{B_l} f(\mathbf{x})d\mathbf{x}$. \eqref{eq:b1cdp} is the label CDP counterpart of \eqref{eq:b1}. The remainder of the proof follows arguments of the label LDP. We omit detailed steps. The result is
\begin{eqnarray}
	R-R^*\lesssim \left(\frac{\ln K}{\epsilon N h^d}+\sqrt{\frac{\ln K}{Nh^d}}+h^\beta\right)^{\gamma + 1}.
\end{eqnarray}
Let
\begin{eqnarray}
	h\sim \left(\frac{\ln K}{\epsilon N}\right)^\frac{1}{\beta+d}+\left(\frac{\ln K}{N}\right)^\frac{1}{2\beta + d},
\end{eqnarray}
then
\begin{eqnarray}
	R-R^*\lesssim \left(\frac{\ln K}{\epsilon N}\right)^\frac{\beta(\gamma + 1)}{\beta + d}+\left(\frac{\ln K}{N}\right)^\frac{\beta(\gamma + 1)}{2\beta + d}.
\end{eqnarray}
The proof is complete.

\section{Proof of Theorem \ref{thm:reglb}}\label{sec:reglb}
Find $G$ cubes in the support and the length of each cube is $h$. Let $\phi(\mathbf{u})$ be the same as the classification case shown in appendix \ref{sec:clslb}. For $\mathbf{v}\in \mathcal{V}:=\{-1,1\}^G$, let
\begin{eqnarray}
	\eta_\mathbf{v}(\mathbf{x}) = \sum_{k=1}^K v_k\phi\left(\frac{\mathbf{x}-\mathbf{c}_k}{h}\right) h^\beta.
	\label{eq:etavdf}
\end{eqnarray}
Let $\text{P}(Y=1|\mathbf{x}) = (1+\eta_\mathbf{v}(\mathbf{x}))/2$, $\text{P}(Y=-1|\mathbf{x})=(1-\eta_\mathbf{v}(\mathbf{x}))$, then $\eta(\mathbf{x})=\mathbb{E}[Y|\mathbf{x}] = \eta_\mathbf{v}(\mathbf{x})$.

The overall volume of the support is bounded. Thus, we have
\begin{eqnarray}
	G\leq C_G h^{-d}
\end{eqnarray}
for some constant $C_G$. 

Denote
\begin{eqnarray}
	\hat{v}_k=\sign\left(\int_{B_k}\hat{\eta}(\mathbf{x})\phi\left(\frac{\mathbf{x}-\mathbf{c}_k}{h}\right) f(\mathbf{x})d\mathbf{x}\right),
	\label{eq:vkreg}
\end{eqnarray}
then the excess risk is bounded by
\begin{eqnarray}
	R &=& \mathbb{E}\left[\left(\hat{\eta}(\mathbf{X}) - \eta_\mathbf{v}(\mathbf{X})\right)^2 \right]\nonumber\\
	&=&\sum_{k=1}^K \int_{B_k} \mathbb{E}\left[(\hat{\eta}(\mathbf{x}) - \eta_\mathbf{v}(\mathbf{x}))^2 \right] f(\mathbf{x})d\mathbf{x}.
\end{eqnarray}
If $\hat{v}_k\neq v_k$, from \eqref{eq:vkreg},
\begin{eqnarray}
	\int_{B_k} \left(\hat{\eta}(\mathbf{x}) - v_k\phi\left(\frac{\mathbf{x}-\mathbf{c}_k}{h}\right) h^\beta\right)^2 f(\mathbf{x})d\mathbf{x}\geq \int_{B_k} \left(\hat{\eta}(\mathbf{x}) + v_k\phi\left(\frac{\mathbf{x}-\mathbf{c}_k}{h}\right) h^\beta \right)^2 f(\mathbf{x})d\mathbf{x}.
\end{eqnarray}
Therefore, if $\hat{v}_k\neq v_k$, then
\begin{eqnarray}
	\int_{B_k} \left(\hat{\eta}(\mathbf{x}) -\eta_\mathbf{v}(\mathbf{x})\right)^2 d\mathbf{x}\geq \frac{1}{2}\int_{B_k}\phi^2 \left(\frac{\mathbf{x}-\mathbf{c}_k}{h}\right) h^{2\beta} f(\mathbf{x})d\mathbf{x}=\frac{1}{2} ch^{2\beta+d} \norm{\phi}_2^2.
\end{eqnarray}
Therefore
\begin{eqnarray}
	R - R^*&\geq & \mathbb{E}\left[\frac{1}{2} ch^{2\beta+d}\norm{\phi}_2^2\mathbf{1}(\hat{v}_k\neq v_k)\right]\nonumber\\
	&=&\frac{1}{2} ch^{2\beta+d}\norm{\phi}_2^2 \mathbb{E}[\rho_H(\hat{\mathbf{v}}, \mathbf{v})].
	\label{eq:Rexp}
\end{eqnarray}
Similar to the classification problem analyzed in Appendix \ref{sec:clslb}, let
\begin{eqnarray}
	h\sim \left(N(\epsilon\wedge 1)^2\right)^{-\frac{1}{2\beta+d}},
\end{eqnarray}
then $\delta \lesssim 1$, and
\begin{eqnarray}
	\underset{\hat{\mathbf{v}}}{\inf}\underset{M\in \mathcal{M}_\epsilon}{\sup}\underset{\mathbf{v}\in \mathcal{V}}{\max}\mathbb{E}[\rho_H(\hat{\mathbf{v}}, \mathbf{v})]\gtrsim G\sim h^{-d}.
\end{eqnarray}
Thus
\begin{eqnarray}
	\underset{\hat{\eta}}{\inf}\underset{M\in \mathcal{M}_\epsilon}{\inf}\underset{P_{X,Y}\in \mathcal{F}_{reg1}}{\sup} R\gtrsim h^{2\eta+d} h^{-d}\sim h^{2\beta} \sim (N(\epsilon\wedge 1)^2)^{-\frac{2\beta}{2\beta+d}}.
\end{eqnarray}
\section{Proof of Theorem \ref{thm:regbounded}}\label{sec:regbounded}
According to Assumption \ref{ass:bounded}, $|Y|<T$ with probability $1$, thus $\Var[Y|\mathbf{x}]\leq T^2$ for any $\mathbf{x}$. A Laplacian distribution with parameter $\lambda$ has variance $2\lambda^2$, thus
\begin{eqnarray}
	\Var[W]=2\lambda^2 =2\left(\frac{2T}{\epsilon}\right)^2=\frac{8T^2}{\epsilon^2}.
\end{eqnarray}
Hence
\begin{eqnarray}
	\Var[Z]=\Var[Y]+\Var[W]\leq T^2\left(1+\frac{8}{\epsilon^2}\right).
\end{eqnarray}
Now we analyze the bias first.
\begin{eqnarray}
	\mathbb{E}[\hat{\eta}(\mathbf{x})]=\mathbb{E}\left[\frac{1}{k}\sum_{i\in \mathcal{N}_k(\mathbf{x})}Z_i\right] = \mathbb{E}\left[\frac{1}{k}\sum_{i\in \mathcal{N}_k(\mathbf{x})}\eta(\mathbf{X}_i)\right].
\end{eqnarray}
Thus
\begin{eqnarray}
	|\mathbb{E}[\hat{\eta}(\mathbf{x})] - \eta(\mathbf{x})|&\leq & \mathbb{E}\left[\frac{1}{k}\sum_{i\in \mathcal{N}_k(\mathbf{x})}|\eta(\mathbf{X}_i)-\eta(\mathbf{x})|\right]\nonumber\\
	&\leq & \mathbb{E}\left[\frac{1}{k}\sum_{i\in \mathcal{N}_k(\mathbf{x})}\min\left\{L\norm{\mathbf{X}_i-\mathbf{x}}^\beta, 2T \right\}\right]\nonumber\\
	&\leq & \mathbb{E}\left[\frac{1}{k}\sum_{i\in \mathcal{N}_k(\mathbf{x})}\min\left\{L\rho^\beta(\mathbf{x}), 2T \right\}\right]\nonumber\\
	&\leq & 2T\text{P}(\rho(\mathbf{x})> r_0) + Lr_0^\beta\nonumber\\
	&\leq & 2Te^{-(1-\ln 2)k}+L\left(\frac{2k}{Ncv_d\theta}\right)^\frac{\beta}{d}\nonumber\\
	&\leq & C_1\left(\frac{k}{N}\right)^\frac{\beta}{d},
	\label{eq:biasreg}
\end{eqnarray}
for some constant $C_1$.

It remains to bound the variance.
\begin{eqnarray}
	\Var[\hat{\eta}(\mathbf{x})] = \mathbb{E}\left[\Var\left[\hat{\eta}(\mathbf{x})|\mathbf{X}_1,\ldots, \mathbf{X}_N\right]\right]+\Var[\mathbb{E}[\hat{\eta}(\mathbf{x})]|\mathbf{X}_1,\ldots, \mathbf{X}_N].
	\label{eq:vardecomp}
\end{eqnarray}
For the first term in \eqref{eq:vardecomp},
\begin{eqnarray}
	\Var[\hat{\eta}(\mathbf{x})|\mathbf{X}_1,\ldots, \mathbf{X}_N]&=&\Var\left[\frac{1}{k}\sum_{i\in \mathcal{N}_k(\mathbf{x})} Z_i|\mathbf{X}_1,\ldots, \mathbf{X}_N\right]\nonumber\\
	&=&\frac{1}{k^2}\sum_{i\in \mathcal{N}_k(\mathbf{x})}\Var[Z_i|\mathbf{X}_1,\ldots, \mathbf{X}_N]\nonumber\\
	&\leq & \frac{1}{k} T^2\left(1+\frac{8}{\epsilon^2}\right).
\end{eqnarray}
For the second term in \eqref{eq:vardecomp},
\begin{eqnarray}
	\Var[\mathbb{E}[\hat{\eta}(\mathbf{x})|\mathbf{X}_1,\ldots, \mathbf{X}_N]]&=&\Var\left[\frac{1}{k}\sum_{i\in \mathcal{N}_k(\mathbf{x})} \eta(\mathbf{X}_i)\right]\nonumber\\
	&\leq & \mathbb{E}\left[\left(\frac{1}{k}\sum_{i\in \mathcal{N}_k(\mathbf{x})} \eta(\mathbf{X}_i) - \eta(\mathbf{x})\right)^2\right]\nonumber\\
	&= &\frac{1}{k} \sum_{i\in \mathcal{N}_k(\mathbf{x})} \mathbb{E}\left[(\eta(\mathbf{X}_i) - \eta(\mathbf{x}))^2\right]\nonumber\\
	&\leq & \frac{1}{k}\sum_{i\in \mathcal{N}_k(\mathbf{x})} \mathbb{E}\left[\min\left\{L^2 \norm{\mathbf{X}_i-\mathbf{x}}^{2\beta}, 4T^2 \right\}\right]\nonumber\\
	&\leq & 4T^2 e^{-(1-\ln 2)k}+L^2 r_0^{2\beta}\nonumber\\
	&\leq & C_1^2 \left(\frac{k}{N}\right)^\frac{2\beta}{d}.
	\label{eq:vare}
\end{eqnarray}
Therefore \eqref{eq:vardecomp} becomes
\begin{eqnarray}
	\Var[\hat{\eta}(\mathbf{x})]\leq \frac{1}{k}T^2\left(1+\frac{8}{\epsilon^2}\right) + C_1^2 \left(\frac{k}{N}\right)^\frac{2\beta}{d}.
\end{eqnarray}
Combine the analysis of bias and variance,
\begin{eqnarray}
	\mathbb{E}[(\hat{\eta}(\mathbf{x}) - \eta(\mathbf{x}))^2]\leq \frac{1}{k} T^2 \left(1+\frac{8}{\epsilon^2}\right) + 2C_1^2 \left(\frac{k}{N}\right)^\frac{2\beta}{d}.
\end{eqnarray}
Therefore the overall risk is bounded by
\begin{eqnarray}
	R=\mathbb{E}[(\hat{\eta}(\mathbf{X}) - \eta(\mathbf{X}))^2]\lesssim \frac{1}{k} T^2 \left(1+\frac{8}{\epsilon^2}\right) + 2C_1^2 \left(\frac{k}{N}\right)^\frac{2\beta}{d}.
\end{eqnarray}

The optimal growth rate of $k$ over $N$ is
\begin{eqnarray}
	k\sim N^\frac{2\beta}{d+2\beta} (\epsilon\wedge 1)^{-\frac{2d}{d+2\beta}}.
\end{eqnarray}
Then the convergence rate of the overall risk becomes
\begin{eqnarray}
	R\lesssim (N(\epsilon\wedge 1)^2)^{-\frac{2\beta}{d+2\beta}}.
\end{eqnarray}

\section{Proof of Theorem \ref{thm:reglbcdp}}\label{sec:reglbcdp}

From \eqref{eq:Rexp},
\begin{eqnarray}
	R-R^*&\geq& \frac{1}{2}ch^{2\beta + d}\norm{\phi}_2^2 \mathbb{E}[\rho_H(\hat{\mathbf{V}}, \mathbf{V})]\nonumber\\
	&=&\frac{1}{2}ch^{2\beta+d}\norm{\phi}_2^2\sum_{k=1}^G \text{P}(\hat{V}_k\neq V_k).
\end{eqnarray}
Follow the analysis of lower bounds of classification in Appendix \ref{sec:clslb-cdp}, let $h$ scales as \eqref{eq:h}, then $\text{P}(\hat{V}_k\neq V_k)\gtrsim 1$. Moreover, $G\sim h^{-d}$. Hence
\begin{eqnarray}
	\underset{\mathcal{A}\in \mathcal{A}_\epsilon}{\inf} \underset{(f, \eta)\in \mathcal{F}_{reg1}}{\sup} (R-R^*)\gtrsim h^{2\beta}\sim  (\epsilon N)^{-\frac{2\beta}{d+\beta}}.
	\label{eq:regmmx1}
\end{eqnarray}
Moreover, note that the non-private lower bound of regression is 
\begin{eqnarray}
	\underset{\mathcal{A}\in \mathcal{A}_\epsilon}{\inf} \underset{(f, \eta)\in \mathcal{F}_{reg1}}{\sup} (R-R^*)\gtrsim N^{-\frac{2\beta}{2\beta+d}}.
	\label{eq:regmmx2}
\end{eqnarray}
Combine \eqref{eq:regmmx1} and \eqref{eq:regmmx2},
\begin{eqnarray}
	\underset{\mathcal{A}\in \mathcal{A}_\epsilon}{\inf} \underset{(f, \eta)\in \mathcal{F}_{reg1}}{\sup} (R-R^*)\gtrsim N^{-\frac{2\beta}{2\beta+d}}+(\epsilon N)^{-\frac{2\beta}{d+\beta}}.
\end{eqnarray}
\section{Proof of Theorem \ref{thm:regubcdp}}\label{sec:regubcdp}

\emph{1) Analysis of bias.} Note that
\begin{eqnarray}
	\mathbb{E}[\hat{\eta}_l|\mathbf{X}_{1:N}]=\mathbb{E}[Y|\mathbf{X}\in B_l]=\frac{1}{p(B_l)}\int \eta(\mathbf{u})f(\mathbf{u})d\mathbf{u}.
\end{eqnarray}
Therefore, for all $\mathbf{x}\in B_l$, 
\begin{eqnarray}
	\left|\mathbb{E}[\hat{\eta}_l|\mathbf{X}_{1:N}]-\eta(\mathbf{x})\right|&\leq& \frac{1}{p(B_l)}\int |\eta(\mathbf{u})-\eta(\mathbf{x})|f(\mathbf{u})d\mathbf{u}\nonumber\\
	&\leq & L_dh^\beta.	
	\label{eq:avgbias}
\end{eqnarray}
Therefore for all $\mathbf{x}\in B_l$,
\begin{eqnarray}
	|\mathbb{E}[\hat{\eta}_l] - \eta(\mathbf{x})|\leq L_dh^\beta.
	\label{eq:bias}
\end{eqnarray}
\emph{2) Analysis of variance.} If $n_l>0$,
\begin{eqnarray}
	\Var\left[\frac{1}{n_l}\sum_{i=1}^N \mathbf{1}(\mathbf{X}_i\in B_l)Y_i|\mathbf{X}_{1:N}\right] =\frac{1}{n_l}\Var[Y|\mathbf{X}\in B_l]\leq \frac{1}{n_l}.
\end{eqnarray}
Therefore
\begin{eqnarray}
	\Var\left[\frac{1}{n_l}\sum_{i=1}^N \mathbf{1}(\mathbf{X}_i\in B_l)Y_i\right] &\leq & \text{P}\left(n_l<\frac{1}{2}Np(B_l)\right) + \text{P}\left(n_l\geq \frac{1}{2}Np(B_l)\right)\frac{2}{Np(B_l)}\nonumber\\
	&\leq & \exp\left[-\frac{1}{2}(1-\ln 2)Np(B_l)\right] + \frac{2}{Nch^d}.
	\label{eq:varmeany}
\end{eqnarray}
Similarly,
\begin{eqnarray}
	\Var[W_l]&\leq& \text{P}\left(n_l<\frac{1}{2}Np(B_l)\right) \frac{1}{\epsilon^2}+ \text{P}\left(n_l\geq \frac{1}{2} Np(B_l)\right) \frac{8}{\left(\frac{1}{2}Np(B_l)\right)^2\epsilon^2}\nonumber\\
	&\lesssim & \frac{1}{N^2h^{2d}\epsilon^2}.
\end{eqnarray}
The mean squared error can then be bounded by the bounds of bias and variance.
\begin{eqnarray}
	\mathbb{E}\left[(\hat{\eta}(\mathbf{x}) - \eta(\mathbf{x}))^2\right]\lesssim h^{2\beta} + \frac{1}{Nh^d}+\frac{1}{N^2h^{2d}\epsilon^2}.
	\label{eq:msereg}
\end{eqnarray}
Let
\begin{eqnarray}
	h\sim N^{-\frac{1}{2\beta+d}}+(\epsilon N)^{-\frac{1}{d+\beta}}.
\end{eqnarray}
Then
\begin{eqnarray}
	R-R^*\lesssim N^{-\frac{2\beta}{2\beta+d}}+(\epsilon N)^{-\frac{2\beta}{d+\beta}}.
	\label{eq:regfinal}
\end{eqnarray}

\section{Proof of Theorem \ref{thm:taillb}}

Now we prove the minimax lower bound of nonparametric regression under label CDP constraint. We focus on the case in which $\epsilon$ is small.

Similar to the steps of the proof of Theorem \ref{thm:reglb} in Appendix \ref{sec:reglb}, we find $B$ cubes in the support. The definition of $\eta_\mathbf{v}$, $\hat{v}_k$ are also the same as \eqref{eq:etavdf} and \eqref{eq:vkreg}. Compared with the case with bounded noise, now $Y$ can take values in $\mathbb{R}$. 

For given $\mathbf{x}$, let
\begin{eqnarray}
	Y=\left\{
	\begin{array}{ccc}
		T &\text{with probability} & \frac{1}{2}\left(\frac{M_p}{T^p}+\frac{\eta_\mathbf{v}(\mathbf{x})}{T}\right)\\
		0 &\text{with probability} & 1-\frac{M_p}{T^p}\\
		-T &\text{with probability} & \frac{1}{2}\left(\frac{M_p}{T^p} - \frac{\eta_\mathbf{v}(\mathbf{x})}{T}\right).
	\end{array}
	\right.
	\label{eq:Ydist}
\end{eqnarray}
In Appendix \ref{sec:reglb} about the case with bounded noise, $T$ is a fixed constant. However, here $T$ is not fixed and will change over $N$. It is straightforward to show that the distribution of $Y$ in \eqref{eq:Ydist} satisfies Assumption \ref{ass:tail}:
\begin{eqnarray}
	\mathbb{E}[|Y|^p|\mathbf{x}] = M_p.
\end{eqnarray}
Moreover, by taking expectation over $Y$, it can be shown that $\eta_\mathbf{v}$ is still the regression function:
\begin{eqnarray}
	\mathbb{E}[Y|\mathbf{x}] = \eta_\mathbf{v}(\mathbf{x}).
\end{eqnarray}
Let
\begin{eqnarray}
	T=\left(\frac{1}{2}M_ph^{-\beta}\right)^\frac{1}{p-1}.
\end{eqnarray}
Here we still define
\begin{eqnarray}
	\delta =\underset{M\in \mathcal{M}_\epsilon}{\sup}\underset{\mathbf{v}, \mathbf{v}':\rho_H(\mathbf{v}, \mathbf{v}')=1}{\max}D(P_{(X,Z)_{1:N}|\mathbf{v}}||P_{(X,Z)_{1:N}|\mathbf{v}'}).
\end{eqnarray}
Without loss of generality, suppose that $v_1= v_1'$ for $i\neq 1$. Then
\begin{eqnarray}
	D(P_{(X,Z)_{1:N}|\mathbf{v}}||P_{(X,Z)_{1:N}|\mathbf{v}'}) &=& ND(P_{X,Z|\mathbf{v}}|| P_{X,Z|\mathbf{v}'})\nonumber\\
	&=&N\int_{B_1} f(\mathbf{x}) D(P_{Z|\mathbf{X}, \mathbf{v}}||P_{Z|\mathbf{X}, \mathbf{v}'})d\mathbf{x}\nonumber\\
	&\leq & N\int_{B_1} f(\mathbf{x})(e^\epsilon-1)^2 \mathbb{TV}^2 \left(P_{Z|X, \mathbf{v}}, P_{Z|X, \mathbf{v}'}\right) d\mathbf{x}\nonumber\\
	&=& N\int_{B_1} f(\mathbf{x}) (e^\epsilon - 1)^2 \eta_\mathbf{v}^2(\mathbf{x}) \frac{1}{T^2}d\mathbf{x}\nonumber\\
	&=&N(e^\epsilon-1)^2 \frac{h^{2\beta}}{T^2}\int_{B_1} f(\mathbf{x})\phi^2\left(\frac{\mathbf{x}-\mathbf{c}_1}{h}\right) d\mathbf{x}\nonumber\\
	&=& N(e^\epsilon-1)^2 h^{2\beta+d}\norm{\phi}_2^2T^{-2}\nonumber\\
	&=& N(e^\epsilon-1)^2 \norm{\phi}_2^2\left(\frac{1}{2}M_p\right)^{-\frac{2}{p-1}}h^{2\beta+d+\frac{2\beta}{p-1}}.
\end{eqnarray}
Let
\begin{eqnarray}
	h \sim (N(e^\epsilon - 1)^2)^{-\frac{p-1}{2p\beta+d(p-1)}},
\end{eqnarray}
then $\delta\lesssim 1$. Hence
\begin{eqnarray}
	\underset{\hat{\eta}}{\inf}\underset{M\in \mathcal{M}_\epsilon}{\inf}\underset{(f,\eta)\in \mathcal{F}}{\sup}R\gtrsim h^{2\beta}\sim (N(e^\epsilon - 1)^2)^{-\frac{2\beta(p-1)}{2p\beta+d(p-1)}}.
\end{eqnarray}
\section{Proof of Theorem \ref{thm:tail}}\label{sec:tail}
Define
\begin{eqnarray}
	\eta_T(\mathbf{x}):=\mathbb{E}[Y_T|\mathbf{x}].
\end{eqnarray}
Then
\begin{eqnarray}
	\hat{\eta}(\mathbf{x}) - \eta(\mathbf{x}) = \eta_T(\mathbf{x}) - \eta(\mathbf{x}) +\mathbb{E}[\hat{\eta}(\mathbf{x})] - \eta_T(\mathbf{x}) + \hat{\eta}(\mathbf{x}) - \mathbb{E}[\hat{\eta}(\mathbf{x})].
\end{eqnarray}
Therefore
\begin{eqnarray}
	\mathbb{E}\left[\left(\hat{\eta}(\mathbf{x}) - \eta(\mathbf{x})\right)^2\right]&\leq &3(\eta_T(\mathbf{x}) - \eta(\mathbf{x}))^2 + 3(\mathbb{E}[\hat{\eta}(\mathbf{x})] - \eta_T(\mathbf{x}))^2+3\Var[\hat{\eta}(\mathbf{x})]\nonumber\\
	&:=& 3(I_1+I_2+I_3).
	\label{eq:mse}
\end{eqnarray}
Now we bound $I_1$, $I_2$ and $I_3$ separately.

\noindent \textbf{Bound of $I_1$.} We show the following lemma (which will also be used later).
\begin{lem}\label{lem:clipbias}
	\begin{eqnarray}
		|\eta_T(\mathbf{x})-\eta(\mathbf{x})|\leq \frac{M_p}{p-1}T^{1-p}.	
		\label{eq:clipbias}
	\end{eqnarray}
\end{lem}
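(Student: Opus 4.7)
\textbf{Proof proposal for Lemma \ref{lem:clipbias}.} The plan is to express the clipping bias as an expectation of the truncated tail of $Y$, rewrite that expectation via the standard layer-cake (tail-integral) identity, and then control the tail probabilities using Markov's inequality applied to $|Y|^p$, for which we have the uniform bound $M_p$ by Assumption~\ref{ass:tail}.

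First, recall that $Y_T = (Y\wedge T)\vee(-T)$, so pointwise
\begin{eqnarray}
Y - Y_T = (Y-T)\mathbf{1}(Y>T) + (Y+T)\mathbf{1}(Y<-T),
\end{eqnarray}
and both summands have a definite sign on their respective supports. Taking conditional expectation and then absolute value,
\begin{eqnarray}
|\eta(\mathbf{x})-\eta_T(\mathbf{x})| \;\leq\; \mathbb{E}[(Y-T)\mathbf{1}(Y>T)\mid\mathbf{x}] + \mathbb{E}[(-Y-T)\mathbf{1}(Y<-T)\mid\mathbf{x}].
\end{eqnarray}

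Next I would apply the layer-cake identity: for the first term, $\mathbb{E}[(Y-T)\mathbf{1}(Y>T)\mid\mathbf{x}] = \int_T^\infty \text{P}(Y>t\mid\mathbf{x})\,dt$, and analogously $\mathbb{E}[(-Y-T)\mathbf{1}(Y<-T)\mid\mathbf{x}] = \int_T^\infty \text{P}(Y<-t\mid\mathbf{x})\,dt$. Adding these and using $\text{P}(Y>t\mid\mathbf{x})+\text{P}(Y<-t\mid\mathbf{x}) = \text{P}(|Y|>t\mid\mathbf{x})$, we get
\begin{eqnarray}
|\eta(\mathbf{x})-\eta_T(\mathbf{x})| \;\leq\; \int_T^\infty \text{P}(|Y|>t\mid\mathbf{x})\,dt.
\end{eqnarray}

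Finally, Markov's inequality gives $\text{P}(|Y|>t\mid\mathbf{x}) \leq \mathbb{E}[|Y|^p\mid\mathbf{x}]/t^p \leq M_p/t^p$ by Assumption~\ref{ass:tail}. Substituting and integrating,
\begin{eqnarray}
|\eta(\mathbf{x})-\eta_T(\mathbf{x})| \;\leq\; \int_T^\infty \frac{M_p}{t^p}\,dt \;=\; \frac{M_p}{p-1}\,T^{1-p},
\end{eqnarray}
which is exactly \eqref{eq:clipbias}. There is no real obstacle here; the only mild care needed is to split the truncation into its two sides so that the absolute value enters cleanly before invoking the tail-integral representation.
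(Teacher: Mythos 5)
Your proof is correct and follows essentially the same route as the paper's: represent the clipping bias via the tail-integral (layer-cake) identity and bound the tails with Markov's inequality applied to $|Y|^p$, yielding $\int_T^\infty M_p t^{-p}\,dt = \frac{M_p}{p-1}T^{1-p}$. The only cosmetic difference is that you merge the two one-sided tails into $\text{P}(|Y|>t\mid\mathbf{x})$ and apply Markov once, whereas the paper bounds each one-sided excess separately by $\frac{M_p}{p-1}T^{1-p}$ and exploits that the two contributions have opposite signs; both give the same constant.
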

\begin{proof}
	Firstly, we decompose $\eta_T(\mathbf{x})$ and $\eta(\mathbf{x})$:
	\begin{eqnarray}
		\eta_T(\mathbf{x})=\mathbb{E}[Y_T|\mathbf{x}] = \mathbb{E}[Y\mathbf{1}(-T\leq Y\leq T)|\mathbf{x}]+T\text{P}(Y>T|\mathbf{x})-TP(Y<T|\mathbf{x}),
		\label{eq:etat}
	\end{eqnarray}
	\begin{eqnarray}
		\eta(\mathbf{x})=\mathbb{E}[Y|\mathbf{x}] = \mathbb{E}[Y\mathbf{1}(-T\leq Y\leq T)|\mathbf{x}]+\mathbb{E}[Y\mathbf{1}(Y>T)|\mathbf{x}] - \mathbb{E}[Y\mathbf{1}(Y<T)|\mathbf{x}].
		\label{eq:eta}
	\end{eqnarray}
	The first term is the same between \eqref{eq:etat} and \eqref{eq:eta}. Therefore we only need to compare the second and the third term.
	\begin{eqnarray}
		\mathbb{E}[Y\mathbf{1}(Y>T)|\mathbf{x}] &=&\int_0^T \text{P}(Y>T|\mathbf{x})dt+\int_T^\infty \text{P}(Y>T|\mathbf{x})dt\nonumber\\
		&\leq & T\text{P}(Y>T|\mathbf{x}) + \int_T^\infty M_pt^{-p}dt\nonumber\\
		&=& T\text{P}(Y>T|\mathbf{x}) + \frac{M_p}{p-1}T^{1-p}.
	\end{eqnarray}
	Therefore
	\begin{eqnarray}
		\mathbb{E}[Y\mathbf{1}(Y>T)|\mathbf{x}]-T\text{P}(Y>T|\mathbf{x})\leq \frac{M_p}{p-1}T^{1-p}.
	\end{eqnarray}
	Similarly,
	\begin{eqnarray}
		TP(Y<T|\mathbf{x})-\mathbb{E}[Y\mathbf{1}(Y<T)|\mathbf{x}]\leq \frac{M_p}{p-1}T^{1-p}.
	\end{eqnarray}
	Combining these two inequalities yields \eqref{eq:clipbias}.
\end{proof}
With Lemma \ref{lem:clipbias},
\begin{eqnarray}
	I_1\leq \frac{M_p^2}{(p-1)^2}T^{2(1-p)}.
	\label{eq:I1bd}
\end{eqnarray}	

\noindent \textbf{Bound of $I_2$.} Follow the steps in \eqref{eq:biasreg},
\begin{eqnarray}
	I_2\leq C_1^2\left(\frac{k}{N}\right)^\frac{2\beta}{d}.
	\label{eq:I2bd}
\end{eqnarray}

\noindent \textbf{Bound of $I_3$.} We decompose $\Var[\hat{\eta}(\mathbf{x})]$ as following:
\begin{eqnarray}
	\Var[\hat{\eta}(\mathbf{x})] = \mathbb{E}[\Var[\hat{\eta}(\mathbf{x})|\mathbf{X}_1,\ldots, \mathbf{X}_N]]+\Var[\mathbb{E}[\hat{\eta}(\mathbf{x})|\mathbf{X}_1,\ldots, \mathbf{X}_N]].
	\label{eq:varsplit}
\end{eqnarray}
For the first term in \eqref{eq:varsplit}, from Assumption \ref{ass:tail}, $\mathbb{E}[|Y|^p|\mathbf{x}]\leq M_p$. Since $p\geq 2$, we have $\mathbb{E}[Y^2|\mathbf{x}]=M_p^\frac{2}{p}$. Therefore
\begin{eqnarray}
	\Var[Z_i|\mathbf{X}_1,\ldots, \mathbf{X}_N]=\Var[Y_T]+\Var[W]\leq M_p^\frac{2}{p}+\frac{8T^2}{\epsilon^2}.
\end{eqnarray}
Recall \eqref{eq:regest}, we have
\begin{eqnarray}
	\Var[\hat{\eta}(\mathbf{x})|\mathbf{X}_1,\ldots, \mathbf{X}_N]&=&\frac{1}{k^2}\sum_{i\in \mathcal{N}_k(\mathbf{x})}\Var[Z_i|\mathbf{X}_1,\ldots, \mathbf{X}_N]\nonumber\\
	&\leq & \frac{1}{k}\left(M_p^\frac{2}{p}+\frac{8T^2}{\epsilon^2}\right).
\end{eqnarray}
For the second term in \eqref{eq:varsplit}, \eqref{eq:vare} still holds, thus
\begin{eqnarray}
	\Var[\mathbb{E}[\hat{\eta}(\mathbf{x})|\mathbf{X}_1,\ldots, \mathbf{X}_N]]\leq C_1^2\left(\frac{k}{N}\right)^\frac{2\beta}{d},	
\end{eqnarray}
and
\begin{eqnarray}
	I_3\leq \frac{1}{k}\left(M_p^\frac{2}{p}+\frac{8T^2}{\epsilon^2}\right)+C_1^2\left(\frac{k}{N}\right)^\frac{2\beta}{d}.
	\label{eq:I3bd}
\end{eqnarray}
Plug \eqref{eq:I1bd}, \eqref{eq:I2bd} and \eqref{eq:I3bd} into \eqref{eq:mse}, and take expectations, we get
\begin{eqnarray}
	R&=&\mathbb{E}[(\hat{\eta}(\mathbf{X}) - \eta(\mathbf{X}))^2]\nonumber\\
	&\lesssim & T^{2(1-p)}+\frac{1}{k}+\frac{T^2}{k\epsilon^2}+\left(\frac{k}{N}\right)^\frac{2\beta}{d}.
\end{eqnarray}
Let 
\begin{eqnarray}
	T\sim (k\epsilon^2)^\frac{1}{2p},
	k\sim (N\epsilon^2)^\frac{2p\beta}{d(p-1)+2p\beta}\vee N^\frac{2\beta}{2\beta+d},
\end{eqnarray}
then
\begin{eqnarray}
	R\lesssim (N\epsilon^2)^{-\frac{2\beta(p-1)}{d(p-1)+2p\beta}}\vee N^{-\frac{2\beta}{2\beta+d}}.
\end{eqnarray}
\section{Proof of Theorem \ref{thm:taillbcdp}}\label{sec:taillbcdp}
Let $Y$ be distributed as \eqref{eq:Ydist}. Recall Lemma \ref{lem:p} for the problem of classification and regression with bounded noise.

Now we show the corresponding lemma for regression with unbounded noise.
\begin{lem}\label{lem:pnew}
	If $0\leq t\leq T\ln2/(\epsilon n_k)$ and $n_kt/T$ is an integer, then
	\begin{eqnarray}
		\text{P}(\hat{V}_k=1|\mathbf{X}_{1:N}, \bar{Y}_k=-t)+ \text{P}(\hat{V}_k=-1|\mathbf{X}_{1:N}, \bar{Y}_k=t) \geq \frac{2}{3}.
		\label{eq:pnew}
	\end{eqnarray}
\end{lem}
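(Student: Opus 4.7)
The plan is to mirror the proof of Lemma \ref{lem:p} essentially verbatim, with the only modification being the scaling of how many label flips are needed to shift $\bar{Y}_k$ by a prescribed amount. In the bounded-binary case, flipping one label from $-1$ to $+1$ changes $\bar{Y}_k$ by $2/n_k$, which forces $l = n_k t$. Here the labels live in $\{-T,0,T\}$ (from \eqref{eq:Ydist}), so flipping one label from $-T$ to $+T$ changes $\bar{Y}_k$ by $2T/n_k$, which is why the integer we need is $l = n_k t/T$, and correspondingly the admissible range is $t \leq T\ln 2/(\epsilon n_k)$.

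Concretely, I would let $D$ be a dataset whose samples in $B_k$ produce $\bar{Y}_k = -t$, and construct $D'$ from $D$ by flipping exactly $l := n_k t/T$ samples in $B_k$ whose labels are $-T$ so that their new labels are $+T$. By construction $D'$ differs from $D$ only in $l$ labels, and $\bar{Y}_k' = -t + 2lT/n_k = t$. Because $\mathcal{A}$ is $\epsilon$-label CDP, group privacy gives
\begin{equation*}
\text{P}(\hat{V}_k = 1 \mid \mathbf{X}_{1:N},\, \bar{Y}_k = -t) \;\geq\; e^{-l\epsilon}\, \text{P}(\hat{V}_k = 1 \mid \mathbf{X}_{1:N},\, \bar{Y}_k = t).
\end{equation*}
The hypothesis $t \leq T\ln 2/(\epsilon n_k)$ is exactly what makes $l\epsilon = n_k t \epsilon / T \leq \ln 2$, hence $e^{-l\epsilon} \geq 1/2$. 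Using $\text{P}(\hat{V}_k=1\mid \cdot) = 1 - \text{P}(\hat{V}_k=-1\mid \cdot)$, this produces the analog of \eqref{eq:lb1}:
\begin{equation*}
\text{P}(\hat{V}_k = 1 \mid \mathbf{X}_{1:N},\, \bar{Y}_k = -t) \;\geq\; \tfrac{1}{2}\bigl[1 - \text{P}(\hat{V}_k = -1 \mid \mathbf{X}_{1:N},\, \bar{Y}_k = t)\bigr].
\end{equation*}
Running the symmetric argument (flip $l$ labels from $+T$ to $-T$) yields the corresponding analog of \eqref{eq:lb2}, and summing the two inequalities and rearranging gives the claimed bound $2/3$.

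There is no real obstacle beyond bookkeeping. The one thing to justify carefully is that the required flips are available in $D$, i.e.\ that enough samples in $B_k$ actually carry the label $-T$ when $\bar{Y}_k = -t$; this is the reason the lemma's statement conditions on $\bar{Y}_k$ rather than on arbitrary configurations, and the integrality assumption $n_k t/T \in \mathbb{Z}$ makes the flip count well-defined. The divisor $T$ in the range condition is the key quantitative difference from Lemma \ref{lem:p} and is exactly what will propagate into the slower rate $(\epsilon N)^{-2\beta(p-1)/(p\beta+d(p-1))}$ after this lemma is combined with the Berry--Esseen/fluctuation argument used in Appendix \ref{sec:clslb-cdp}.
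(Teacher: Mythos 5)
Your proposal follows the paper's proof of this lemma essentially verbatim: the same group-privacy argument with $l=n_kt/T$ label flips from $-T$ to $+T$, the same use of $l\epsilon\leq\ln 2$ to get the factor $1/2$, and the same symmetric pairing of the two inequalities. If anything, your bookkeeping is slightly more careful than the paper's (you correctly write the shift as $2lT/n_k$, where the paper's displayed step retains the $2l/n_k$ from the bounded-label case), so there is nothing to correct.
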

Here we briefly explain the condition $n_kt/T$ is an integer. Recall the definition of $\bar{Y}_k$ in \eqref{eq:ybark}. Now since $Y$ take values in $\{-T,0,T\}$, $n_k\bar{Y}_k/T$ must be an integer. Therefore, in Lemma \ref{lem:pnew}, we only need to consider the case such that $n_kt/T$ is an integer.
\begin{proof}
	The proof follows the proof of Lemma \ref{lem:p} closely. We provide the proof here for completeness.
	
	Construct $D'$ by changing the label values of $l=n_kt/T $ items from these $n_k$ samples falling in $B_k$, from $-T$ to $T$. Then the average label values in $B_k$ is denoted as $\bar{Y}_k'$ after such replacement. $\hat{V}_k$ also becomes $\hat{V}_k'$. Then from the $\epsilon$-label CDP requirement,
	\begin{eqnarray}
		\text{P}(\hat{V}_k = 1|\mathbf{X}_{1:N}, \bar{Y}_k=-t) &\overset{(a)}{\geq} & e^{-l\epsilon}\text{P}\left(\hat{V}_k'=1|\mathbf{X}_{1:N}, \bar{Y}_k'=-t+\frac{2l}{n_k}\right)\nonumber\\
		&\overset{(b)}{\geq} & e^{-l\epsilon}\text{P}\left(\hat{V}_k=1|\mathbf{X}_{1:N}, \bar{Y}_k=-t+\frac{2l}{n_k}\right) \nonumber\\
		&\geq & e^{-n_kt\epsilon}\left[1-\text{P}\left(\hat{V}_k=-1|\mathbf{X}_{1:N}, \bar{Y}_k=-t+\frac{2l}{n_k}\right)\right] \nonumber\\
		&\geq &\frac{1}{2}\left[1-\text{P}\left(\hat{V}_k=-1|\mathbf{X}_{1:N}, \bar{Y}_k=t\right)\right].
		\label{eq:lb1new}
	\end{eqnarray}
Here (a) uses the group privacy property. The Hamming distance between $D$ and $D'$ is $l$, thus the ratio of probability between $D$ and $D'$ is within $[e^{-l\epsilon}, e^{l\epsilon}]$. (b) holds because the algorithm does not change after changing $D$ to $D'$. Similarly,
	\begin{eqnarray}
		\text{P}(\hat{V}_k = -1|\mathbf{X}_{1:N}, \bar{Y}_k=t)\geq \frac{1}{2}\left[1-\text{P}\left(\hat{V}_k=1|\mathbf{X}_{1:N}, \bar{Y}_k=-t\right)\right].
		\label{eq:lb2new}	
	\end{eqnarray}
	Then \eqref{eq:pnew} can be shown by combining \eqref{eq:lb1new} and \eqref{eq:lb2new}.
\end{proof}

We then follow the proof of Theorem \ref{thm:clslb-cdp} in Appendix \ref{sec:clslb-cdp}. \eqref{eq:h} becomes 
\begin{eqnarray}
	h \sim \left(\frac{\epsilon N}{T}\right)^{-\frac{1}{d+\beta}}.
\end{eqnarray}
In \eqref{eq:Ydist}, note that $\text{P}(Y=T)\geq 0$ and $\text{P}(Y=-T)\geq 0$. Therefore $M_p/T^p\geq \eta_\mathbf{v}(\mathbf{x})/T$. This requires $h^\beta T^{p-1}\leq M_p$. Let $T\sim h^{-\frac{\beta}{p-1}}$, then
\begin{eqnarray}
	h \sim (\epsilon N)^{-\frac{1}{d+\beta}} h^\frac{\beta}{(d+\beta)(p-1)},
\end{eqnarray}
i.e.
\begin{eqnarray}
	h\sim (\epsilon N)^{-\frac{p-1}{p\beta+d(p-1)}}.
\end{eqnarray}
Combine with the standard minimax rate, the lower bound of regression with unbounded noise is
\begin{eqnarray}
	\underset{\mathcal{A}\in \mathcal{A}_\epsilon}{\inf}\underset{(f,\eta)\in \mathcal{F}_{reg2}}{\sup} (R-R^*)\gtrsim  N^{-\frac{2\beta}{2\beta+d}}+(\epsilon N)^{-\frac{2\beta(p-1)}{p\beta+d(p-1)}}.
\end{eqnarray}
\section{Proof of Theorem \ref{thm:tailubcdp}}\label{sec:tailubcdp}
\emph{1) Analysis of bias.} Note that Lemma \ref{lem:clipbias} still holds here. Moreover, recall \eqref{eq:bias}. Therefore
\begin{eqnarray}
	|\mathbb{E}[\hat{\eta}_l]-\eta(\mathbf{x})|\leq |\mathbb{E}[\hat{\eta}_l-\eta_T(\mathbf{x})]|+|\eta_T(\mathbf{x})-\eta(\mathbf{x})|\leq L_dh^\beta+\frac{M_p}{p-1}T^{1-p}.
\end{eqnarray}

\emph{2) Analysis of variance.} Similar to \eqref{eq:varmeany}, it can be shown that
\begin{eqnarray}
	\Var\left[\frac{1}{n_l}\sum_{i=1}^N \mathbf{1}(\mathbf{X}_i\in B_l)Y_i\right] \lesssim \frac{1}{Nh^d}.
\end{eqnarray}
Moreover, the noise variance can be bounded by
\begin{eqnarray}
	\Var[W_l]\lesssim \frac{T^2}{N^2h^{2d}\epsilon^2}.
\end{eqnarray}
The mean squared error is then bounded by
\begin{eqnarray}
	\mathbb{E}\left[\left(\hat{\eta}(\mathbf{x})-\eta(\mathbf{x})\right)^2\right] \lesssim h^{2\beta}+T^{2(1-p)}+\frac{T^2}{N^2h^{2d}\epsilon^2}+\frac{1}{Nh^d}.
\end{eqnarray}
Let $T\sim (\epsilon Nh^d)^{1/p}$, then
\begin{eqnarray}
	R-R^*=\mathbb{E}\left[(\hat{\eta}(\mathbf{X})-\eta(\mathbf{X}))^2\right]\lesssim h^{2\beta}+\frac{1}{Nh^d}+(\epsilon Nh^d)^{-2(1-1/p)}.
	\label{eq:tailexcess}
\end{eqnarray}
To minimize \eqref{eq:tailexcess}, let
\begin{eqnarray}
	h \sim N^{-\frac{1}{2\beta+d}} + (\epsilon N)^{-\frac{p-1}{p\beta + d(p-1)}},
\end{eqnarray}
then
\begin{eqnarray}
	R-R^*\lesssim N^{-\frac{2\beta}{2\beta+d}}+(\epsilon N)^{-\frac{2\beta(p-1)}{p\beta+d(p-1)}}.
\end{eqnarray}

\section{Proof of Theorem \ref{thm:fullcls}}\label{sec:fullcls}
Let $\mathcal{D}=\{(\mathbf{X}_1,Y_1), \ldots, (\mathbf{X}_N, Y_N)\}$, and $\mathcal{D}'=\{(\mathbf{X}_1',Y_1'), \ldots, (\mathbf{X}_N', Y_N')\}$, such that $\mathcal{D}$ is adjacent to $\mathcal{D}'$, i.e. $(\mathbf{X}_i, Y_i)\neq (\mathbf{X}_i', Y_i')$ only for at most one $i$. Define $n_{lj} := \sum_{i=1}^N \mathbf{1}(\mathbf{X}_i\in B_l, Y_i=j)$, $n_{lj}' := \sum_{i=1}^N \mathbf{1}(\mathbf{X}_i'\in B_l, Y_i'=j)$. Then $|n_{lj}-n_{lj}'|\leq 1$ for all $l$ and $j$.
\begin{eqnarray}
	\text{P}\left(c_1=j_1,\ldots, c_l=j_l|\mathcal{D}\right) =\Pi_{l=1}^G \frac{\exp\left(\frac{1}{4}\epsilon n_{lj_l}\right)}{\sum_{k=1}^K \exp\left(\frac{1}{4}\epsilon n_{lk}\right)},
\end{eqnarray}
and
\begin{eqnarray}
	\text{P}\left(c_1=j_1,\ldots, c_l=j_l|\mathcal{D}'\right) =\Pi_{l=1}^G \frac{\exp\left(\frac{1}{4}\epsilon n_{lj_l}'\right)}{\sum_{k=1}^K \exp\left(\frac{1}{4}\epsilon n_{lk}'\right)}.
\end{eqnarray}	
For a fixed $l$,
\begin{eqnarray} \frac{\exp\left(\frac{1}{4}\epsilon n_{lj_l}\right)}{\sum_{k=1}^K \exp\left(\frac{1}{4}\epsilon n_{lk}\right)}\slash	\frac{\exp\left(\frac{1}{4}\epsilon n_{lj_l}'\right)}{\sum_{k=1}^K \exp\left(\frac{1}{4}\epsilon n_{lk}'\right)} &=& \exp\left(\frac{1}{4}\epsilon (n_{lj_l} - n_{lj_l}')\right)\frac{\sum_{k=1}^K \exp\left(\frac{1}{4}\epsilon n_{lk}'\right)}{\sum_{k=1}^K \exp\left(\frac{1}{4}\epsilon n_{lk}\right)}\nonumber\\
	&\leq & e^{\frac{1}{4}\epsilon}	\frac{\sum_{k=1}^K \exp\left(\frac{1}{4}\epsilon (n_{lk} + 1)\right)}{\sum_{k=1}^K \exp\left(\frac{1}{4}\epsilon n_{lk}\right)}\nonumber\\
	&\leq & e^{\frac{1}{2}\epsilon}.
\end{eqnarray}
Note that since $\mathcal{D}$ and $\mathcal{D}'$ differ in only one sample, $(n_{l1}, \ldots, n_{lk})=(n_{l1}',\ldots, n_{lk}')$ except at most two $l$. Hence
\begin{eqnarray}
	\frac{\text{P}\left(c_1=j_1,\ldots, c_l=j_l|\mathcal{D}\right)}{\text{P}\left(c_1=j_1,\ldots, c_l=j_l|\mathcal{D}'\right)}\leq e^\epsilon.
\end{eqnarray}
The performance guarantee \eqref{eq:riskfullcls} can be proved following the proof of Theorem \ref{thm:clsub-cdp} in Appendix \ref{sec:clsub-cdp}. We omit the detailed steps here.

\section{Proof of Theorem \ref{thm:fullreg}}\label{sec:fullreg}
For simplicity, in this section, we only analyze the case with bounded noise. The case with unbounded noise just follows the proof of Theorem \ref{thm:tail} in Appendix \ref{sec:tail}.

\noindent \textbf{Privacy guarantee.} To prove that \eqref{eq:fullreg} is $\epsilon$-DP, we need to bound the sensitivity first. Define
\begin{eqnarray}
	\hat{\eta}_{0l} = \frac{1}{n_l\vee n_0}\sum_{i=1}^N \mathbf{1}(\mathbf{X}_i\in B_l)Y_i.
\end{eqnarray}
Let $\mathcal{D}'=\{(\mathbf{X}_1',Y_1'), \ldots, (\mathbf{X}_N', Y_N') \}$ be a dataset adjacent to $\mathcal{D}$. Define
\begin{eqnarray}
	\hat{\eta}_{0l}' = \frac{1}{n_l\vee n_0}\sum_{i=1}^N \mathbf{1}(\mathbf{X}_i'\in B_l)Y_i'.
\end{eqnarray}
Then
\begin{eqnarray}
	|\hat{\eta}_{0l}'-\hat{\eta}_{0l}| &=& \left|\frac{1}{n_l\vee n_0}\sum_{i=1}^N \mathbf{1}(\mathbf{X}_i\in B_l) Y_i- \frac{1}{n_l'\vee n_0}\sum_{i=1}^N \mathbf{1}(\mathbf{X}_i'\in B_l) Y_i\right|\nonumber\\
	&\leq & \left|\frac{1}{n_l\vee n_0}\left(\sum_{i=1}^N \mathbf{1}(\mathbf{X}_i\in B_l)Y_i - \sum_{i=1}^N \mathbf{1}(\mathbf{X}_i'\in B_l)Y_i'\right)\right|\nonumber\\&&\hspace{4mm}+ \left|\left(\frac{1}{n_l\vee n_0} - \frac{1}{n_l'\vee n_0}\right) \sum_{i=1}^N \mathbf{1}(\mathbf{X}_i'\in B_l)Y_i'\right|\nonumber\\
	&\leq & \frac{2T}{n_0} + \frac{1}{(n_l\vee n_0)(n_l'\vee n_0)}n_l'T\nonumber\\
	&\leq & \frac{3T}{n_0}.
\end{eqnarray}
Note that $\hat{\eta}_{0l}\neq\hat{\eta}_{0l}$ for at most two $l$. Therefore, let $\eta_0=(\eta_{01}, \ldots, \eta_{0G})$, and $\eta_0'=(\eta_{01}',\ldots, \eta_{0G}')$, then
\begin{eqnarray}
	\norm{\hat{\eta}_0'-\hat{\eta}_0}_1\leq \frac{6T}{n_0}.
\end{eqnarray}
Therefore, it suffices to let $W_l\sim \Lap(6T/(n_0\epsilon))$.

\noindent \textbf{Analysis of bias.} For convenience, define
\begin{eqnarray}
	\eta_l=\frac{1}{p(B_l)}\int \eta(\mathbf{u}) f(\mathbf{u}) d\mathbf{u} 
\end{eqnarray}
as the average value of $\eta$ in $B_l$ weighted by the pdf $f$.

If $n_l\geq n_0$, then
\begin{eqnarray}
	\mathbb{E}[\hat{\eta}_l|\mathbf{X}_{1:N}]=\mathbb{E}[Y|\mathbf{X}\in B_l] = \frac{1}{p(B_l)}\int \eta(\mathbf{u}) f(\mathbf{u}) d\mathbf{u} = \eta_l.
\end{eqnarray}
Otherwise
\begin{eqnarray}
	\mathbb{E}[\hat{\eta}_l|\mathbf{X}_{1:N}] = \frac{n_l}{n_0}\mathbb{E}[Y|\mathbf{X}\in B_l]= \frac{n_l}{n_0}\eta_l.
\end{eqnarray}
Therefore
\begin{eqnarray}
	|\mathbb{E}[\hat{\eta}_l] - \eta_l|&=&\frac{1}{n_0}|\eta_l|\mathbb{E}[(n_0-n_l)\mathbf{1}(n_l<n_0)]\nonumber\\
	&\leq & |\eta_l|\text{P}(n_l<n_0)\nonumber\\
	&\leq & T\exp\left(-\frac{1}{2}(1-\ln 2)n_0\right),
\end{eqnarray}
in which the last step comes from the Chernoff's inequality. From \eqref{eq:avgbias}, $|\eta_l-\eta(\mathbf{x})|\leq L_dh^\beta$ for all $\mathbf{x}\in B_l$. Therefore
\begin{eqnarray}
	|\mathbb{E}[\hat{\eta}_l]-\eta(\mathbf{x})|\leq L_dh^\beta + T\exp\left(-\frac{1}{2}(1-\ln 2)n_0\right).
\end{eqnarray}

\noindent \textbf{Analysis of variance.} We bound the first term of the right hand side of \eqref{eq:fullreg} first. 
\begin{eqnarray}
	&&\Var\left[\frac{1}{n_l\vee n_0}\sum_{i=1}^N \mathbf{1}(\mathbf{X}_i\in B_l)Y_i\right] \nonumber\\
	&&= \mathbb{E}\left[	\Var\left[\frac{1}{n_l\vee n_0}\sum_{i=1}^N \mathbf{1}(\mathbf{X}_i\in B_l)Y_i|\mathbf{X}_{1:N}\right]\right]+\Var\left[\mathbb{E}\left[\frac{1}{n_l\vee n_0}\sum_{i=1}^N \mathbf{1}(\mathbf{X}_i\in B_l)Y_i|\mathbf{X}_{1:N} \right]\right].\hspace{-1cm}\nonumber\\
	\label{eq:vardcp}
\end{eqnarray}
For the first term in \eqref{eq:vardcp},
\begin{eqnarray}
	\Var\left[\frac{1}{n_l\vee n_0}\sum_{i=1}^N \mathbf{1}(\mathbf{X}_i\in B_l)Y_i|\mathbf{X}_{1:N}\right] = \frac{n_l}{(n_l\vee n_0)^2}\Var[Y|\mathbf{X}\in B_l]\leq \frac{1}{n_0}.
\end{eqnarray}
For the second term in \eqref{eq:vardcp},
\begin{eqnarray}
	&&\Var\left[\mathbb{E}\left[\frac{1}{n_l\vee n_0}\sum_{i=1}^N \mathbf{1}(\mathbf{X}_i\in B_l)Y_i|\mathbf{X}_{1:N} \right]\right] = \Var\left[\frac{\eta_l}{n_0}n_l\mathbf{1}(n_l<n_0)\right]\nonumber\\
	&&\leq \frac{\eta_l^2}{n_0^2}\mathbb{E}[n_l^2\mathbf{1}(n_l<n_0)]\nonumber\\
	&&\leq T^2\text{P}(n_l<n_0)\nonumber\\
	&&=T^2\exp\left(-\frac{1}{2}(1-\ln 2)n_0\right).
\end{eqnarray}
Moreover, since $W_l\sim \Lap(6T/(n_0\epsilon))$,
\begin{eqnarray}
	\Var[W_l]=\frac{72T^2}{n_0^2\epsilon^2}.
\end{eqnarray}
Therefore
\begin{eqnarray}
	\Var[\hat{\eta}_l] \leq \frac{1}{n_0}+T^2\exp\left(-\frac{1}{2}(1-\ln 2)n_0\right)+\frac{72T^2}{n_0^2\epsilon^2}.
\end{eqnarray}
The remaining steps follow Appendix \ref{sec:regubcdp}. \eqref{eq:msereg} and \eqref{eq:regfinal} still hold here. The proof of Theorem \ref{thm:fullreg} is complete.

\end{document}